\documentclass[11pt]{article}

\usepackage[numbers,compress]{natbib}

\usepackage[top=1in, left=1in, right=1in, bottom=1in]{geometry}

\usepackage[utf8]{inputenc}
\usepackage[T1]{fontenc}
\usepackage{fix-cm}
\usepackage{microtype}

\usepackage{amsmath}
\usepackage{amsfonts}
\usepackage{amssymb}
\usepackage{amsthm}
\usepackage{mathtools}

\theoremstyle{plain}
\newtheorem{theorem}{Theorem}
\newtheorem{lemma}{Lemma}

\newtheorem{corollary}{Corollary}
\theoremstyle{definition}
\newtheorem{definition}{Definition}
\newtheorem{assumption}{Assumption}

\usepackage{booktabs}
\usepackage{multirow}
\usepackage{makecell}
\usepackage{colortbl}
\usepackage{graphicx}
\usepackage{subfig}
\usepackage{wrapfig}
\usepackage{float}
\usepackage{placeins}
\usepackage{rotating}
\usepackage{tikz}
\usetikzlibrary{arrows.meta,positioning}

\usepackage{enumitem}

\usepackage{listings}

\usepackage[vlined,linesnumbered,ruled,resetcount]{algorithm2e}

\usepackage{xcolor}
\usepackage[colorlinks,linkcolor=magenta,filecolor=blue,citecolor=blue,urlcolor=blue]{hyperref}
\usepackage{footnotehyper}
\usepackage{tablefootnote}
\usepackage{url}

\usepackage{pifont}
\usepackage{nicefrac}
\usepackage{xspace}

\newcommand{\name}{BEHAVE\xspace}
\newcommand{\behavesim}{BEHAVE-Sim\xspace}

\newcommand{\datasetsize}{600\xspace}
\newcommand{\trainingdatasetsize}{540\xspace}
\newcommand{\evaluationdatasetsize}{60\xspace}

\newcommand{\selfimprovementrounds}{5\xspace}

\lstdefinestyle{papercode}{
  basicstyle=\ttfamily\footnotesize,
  keywordstyle=\bfseries,
  commentstyle=\itshape,
  columns=fullflexible,
  keepspaces=true,
  showstringspaces=false,
  tabsize=4,
  numbers=none,
  frame=single,
  framerule=0.4pt,
  rulecolor=\color{black!30},
  framesep=4pt,
  xleftmargin=4pt,
  xrightmargin=4pt,
  breaklines=true,
  breakatwhitespace=true,
  breakindent=1em,
  aboveskip=\smallskipamount,
  belowskip=\smallskipamount
}

\title{BEHAVE: Functional \underline{Be}havior Modeling Enables Self-Improving Agents for \underline{Ha}rdware Design and \underline{Ve}rification}

\author{%
  Yuheng Wu\textsuperscript{1,}\thanks{Equal contribution.}\quad
  Berk Gokmen\textsuperscript{1,*}\quad
  Sujeeth Jinesh\textsuperscript{1,*}\quad
  Lauren McLane\textsuperscript{1,*}\\[4pt]
  Aarav Wattal\textsuperscript{1}\quad
  Qi Yang Huang\textsuperscript{2}\quad
  Zhaozhuo Xu\textsuperscript{2}\quad
  Thierry Tambe\textsuperscript{1}\\[6pt]
  {\normalsize
    \textsuperscript{1}Stanford University\quad
    \textsuperscript{2}Workato, Inc.}\\[3pt]
  {\small\ttfamily
    \{yuhengwu,ttambe\}@stanford.edu}%
}
\date{September 28, 2026}

\begin{document}

\maketitle
\vspace{-8pt} 

\begin{abstract}
Developing agents for hardware design and verification requires reliable correctness feedback. As a hardware specification may permit correct implementations with different latencies, matching design and reference outputs cycle by cycle can reject valid designs. To address this, we introduce \name, an agentic framework for multi-turn joint hardware design and verification through \emph{functional behavior modeling}. We define \emph{Behavior IR} to express task functionality as executable \emph{behavior models} without prescribing implementation timing beyond the specification. The agent iteratively develops a register-transfer-level (RTL) design and a behavior model as the design's verification reference. Our evaluator, \behavesim, checks both artifacts separately against a hidden golden behavior model using input stimuli generated by random sampling and solver-guided search. \name thus supports power, performance, and area (PPA) exploration across task-permitted latencies and microarchitectures. During training, the same evaluator provides verifiable reinforcement learning (RL) rewards from specification-behavior pairs without reference RTL. For self-improvement, the agent continually searches for high-level implementations relevant to its capability gaps, constructs and checks specification-behavior pairs, and trains on the expanded task pool. We release BEHAVE-Train and BEHAVE-Eval with \datasetsize human-reviewed specification-behavior pairs for realistic hardware workloads. Starting from 60 seed tasks and acquiring 100 new tasks, self-improvement raises Qwen3.8-27B's RTL pass@1 on BEHAVE-Eval from 55.0\% to 75.0\%, reaching performance comparable to RL using a 540-task pool.
\end{abstract}

\begin{center}
\small
Code \& datasets: \url{https://github.com/joel-wu/BEHAVE}\\
Model: \url{https://huggingface.co/joooelw/BEHAVE-27B}
\end{center}

\section{Introduction}
\label{sec:intro}

Design and verification are key stages of digital chip development~\citep{rabaey2003digital, bergeron2003writing}. Figure~\ref{fig:gap}(a) illustrates a common workflow: engineers develop a register-transfer-level (RTL) design and a reference model for verification, with a testbench driving both and comparing their outputs~\citep{uvm}. Output mismatches help engineers debug the design or reference model~\citep{wile2005functional}. Large language model (LLM) agents~\citep{yang2024sweagent, wei2025swerl} open a path to automating this process. We study how to develop an agent that jointly generates RTL designs and reference models during inference and learns both capabilities during training.

Developing such an agent requires reliable correctness feedback for both its RTL designs and reference models. Comparing RTL outputs cycle by cycle with a reference implementation provides feedback, but ties it to that implementation's timing~\citep{yu2026chipmate, ye2025chatmodel, mu2025faver, tan2026autoverifixplus, zhao2025prov}. When the specification permits different latencies, this can reject valid designs with preferable power, performance, and area (PPA) trade-offs, as illustrated in Figure~\ref{fig:gap}(b). The challenge is therefore to provide executable supervision that checks task-defined behavior while preserving the timing flexibility allowed by the specification.

\begin{figure}[!t]
\centering
\includegraphics[width=\textwidth]{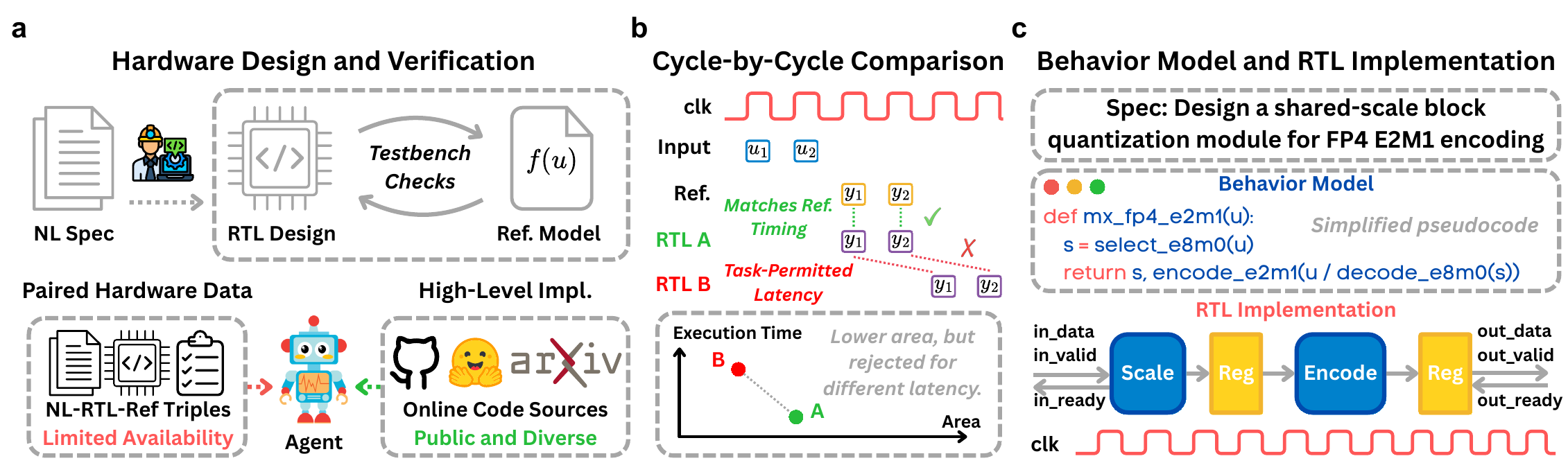}
\caption{\textbf{Motivation for functional behavior modeling.} (a) A common development workflow and the availability gap between paired hardware data and public high-level implementations. (b) Cycle-by-cycle comparison rejects a correct RTL design with task-permitted latency differences, despite its lower area. (c) A behavior model and pipelined RTL for MXFP4 block quantization. See Appendix~\ref{app:behave-contracts} for task details and code excerpts of the Python behavior model and pipelined RTL.}
\label{fig:gap}
\end{figure}

We address this challenge through \emph{functional behavior modeling}, using executable \emph{behavior models} to describe task functionality (Figure~\ref{fig:gap}(c)). We express them in \emph{Behavior IR}, a typed representation. One behavior model can serve as a reference for RTL implementations with task-permitted differences in latency and microarchitecture. For a specification \(S\), the agent develops an RTL design \(\hat R\) and a behavior model \(\hat B\) as the design's verification reference. Our evaluator, \behavesim, independently checks both artifacts against a hidden \emph{golden behavior model} \(B^\star\) following task-defined rules.

Building on this, we develop \name, an agentic framework for multi-turn joint design and verification. The agent iteratively refines \(\hat R\) and \(\hat B\) using its own testbench and development tools, and can use EDA feedback to explore area-time trade-offs. For independent evaluation, \behavesim first applies random test inputs to \(B^\star\) and the submitted artifact \(A \in \{\hat R,\hat B\}\), comparing their outputs under task-defined rules. Feedback from both executions then drives solver-guided search for additional inputs targeting unhit \emph{goals}, such as branch outcomes or assertion violations.

During training, \behavesim's artifact scores provide verifiable rewards for reinforcement learning (RL). The agent can thus learn to generate RTL designs and verification references from \((S,B^\star)\) pairs without reference RTL. While RTL training data are scarce~\citep{thakur2024verigen}, high-level implementations are widely available online~\citep{kocetkov2022stack} and serve as sources for \((S,B^\star)\) pairs (Figure~\ref{fig:gap}(a)). To acquire new training tasks, the agent searches for high-level implementations relevant to its capability gaps and adapts them into specification-behavior pairs. It checks these pairs and continues training with accepted ones. Rollout feedback guides the next acquisition round, supporting continued search-based self-improvement on realistic hardware workloads. Our contributions are:

\begin{itemize}
\item \textbf{Functional behavior modeling.} We develop \name, an agentic framework for multi-turn co-development of RTL designs and behavior models for verification. \behavesim uses golden behavior models to evaluate the agent's RTL designs and behavior models under task-defined rules. Functional modeling preserves task-permitted choices of latency and microarchitecture, enabling agents to explore area-time trade-offs with EDA feedback.

\item \textbf{Verifiable training and self-improvement.} We enable verifiable RL from specification-behavior pairs derived from a growing body of public high-level implementations, without requiring reference RTL. The agent iteratively acquires new tasks based on capability gaps and trains on the expanded task pool, supporting continued search-based self-improvement.

\item \textbf{Open-source datasets.} We release BEHAVE-Train and BEHAVE-Eval, comprising \datasetsize human-reviewed specification-behavior pairs covering realistic hardware workloads in AI/ML, numerics, signal processing, cryptography, control, compression, and data systems.
\end{itemize}

\section{Background}
\label{sec:background}

This section reviews RTL design and simulation-based verification in Subsection~\ref{sec:bg-design-verification}, then explains the roles of reference models and testbenches in Subsection~\ref{sec:bg-reference-testbenches}.

\subsection{Hardware Design and Verification}
\label{sec:bg-design-verification}

\paragraph{RTL Design.}
In digital circuit design, an RTL design (e.g., in Verilog or SystemVerilog) specifies combinational logic and sequential state updates, typically governed by clocks and resets. Unlike an untimed high-level implementation (e.g., in Python), RTL explicitly describes how computation and data transfers are organized across clock cycles~\citep{rabaey2003digital}.

\paragraph{RTL Verification Step 1: Stimulus Generation.}
Simulation-based verification first generates input stimuli to exercise the RTL design under test (DUT). Directed testing uses manually selected input sequences to target known scenarios, constrained-random testing samples a broader input space under user-defined constraints~\citep{naveh2007constraint}, and concolic testing combines concrete execution with symbolic constraint solving to steer inputs toward unexplored paths~\citep{sen2005cute}.

\paragraph{RTL Verification Step 2: Correctness Checking.}
The DUT's responses to these stimuli are checked for correctness. Outputs may be compared with expected values specified directly or computed by a reference model on the same inputs~\citep{bergeron2003writing}. Beyond output-value comparison, assertions can check protocol and temporal properties during execution~\citep{wile2005functional}.

\subsection{Reference Models and Testbenches}
\label{sec:bg-reference-testbenches}

\begin{figure}[!htbp]
\centering
\includegraphics[width=\textwidth]{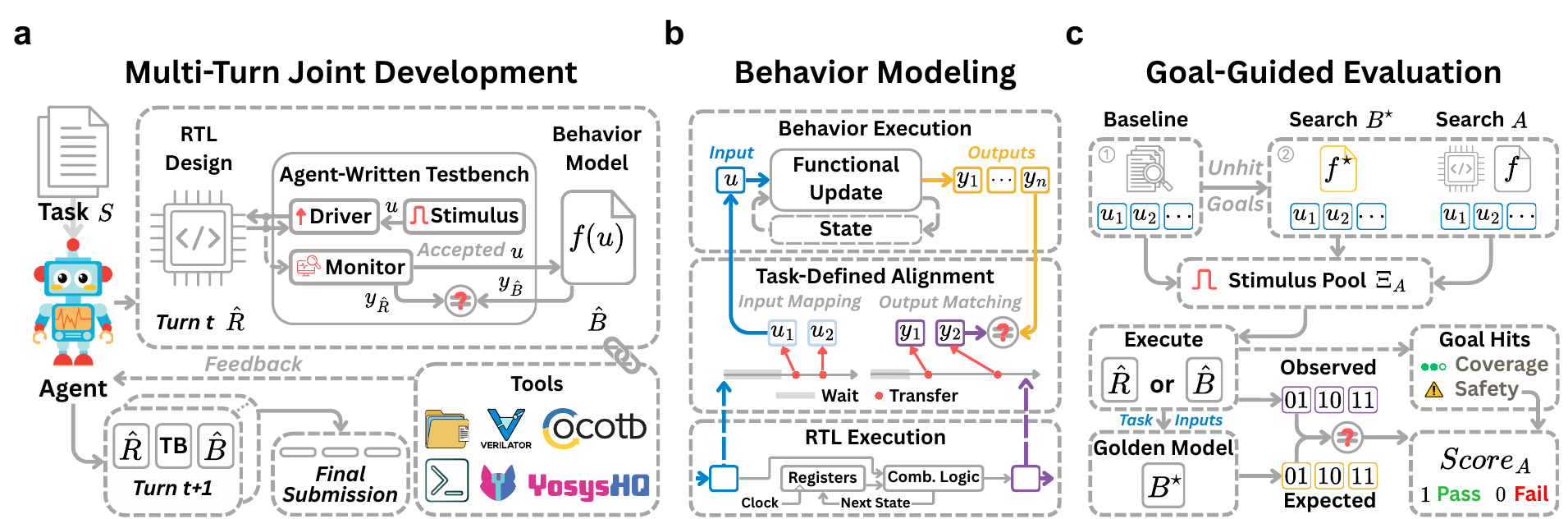}
\caption{\textbf{Joint development and evaluation.}
(a) The agent iteratively develops RTL \(\hat R\), a behavior model \(\hat B\),
and a testbench.
(b) Task-defined alignment matches RTL outputs to behavior-model predictions
from accepted inputs, allowing task-permitted latency differences.
(c) For each submitted artifact \(A\in\{\hat R,\hat B\}\), baseline tests and
goal-guided searches on \(B^\star\) and \(A\) provide stimuli for independent
evaluation against \(B^\star\), checking output agreement and safety-goal hits.}
\label{fig:method-overview}
\end{figure}

\paragraph{Reference Models and High-Level Implementations.}
A reference model is an executable description of the DUT's expected behavior. Reference models vary in language and range from cycle-accurate to untimed~\citep{bergeron2003writing}. For a high-level implementation to serve as a reference model, details such as bit widths, overflow behavior, and state initialization must be explicit and consistent with the task specification. In our implementation, reference models are expressed in Behavior IR, a typed executable representation with defined semantics and a restricted set of constructs.

\paragraph{UVM Testbenches.}
A testbench coordinates the stimulus generation and correctness checking described above, as in Figure~\ref{fig:method-overview}(a). The Universal Verification Methodology (UVM) provides a standardized framework for organizing these functions into components~\citep{uvm}. \emph{Sequences} generate stimulus transactions, \emph{drivers} translate them into DUT interface signals, and \emph{monitors} record the DUT's input and output transactions. \emph{Scoreboards} compare observed outputs with reference-model predictions for the recorded inputs, while \emph{coverage collectors} track exercised scenarios.

\section{BEHAVE}
\label{sec:method}

We first describe functional behavior modeling for multi-turn
joint design and verification (Section~\ref{sec:method-alignment}). We then present \behavesim's
goal-guided evaluation
(Section~\ref{sec:method-generation}). Finally, we describe
PPA exploration (Section~\ref{sec:method-ppa}) and verifiable RL with
search-based self-improvement (Section~\ref{sec:method-rl}).

\subsection{Functional Behavior Modeling}
\label{sec:method-coevolution}
\label{sec:method-alignment}

\paragraph{Task setup.}
Each task pairs a natural-language (NL) specification \(S\) with a golden behavior
model \(B^\star\). The agent receives \(S\), while \(B^\star\) is reserved for
evaluation and hidden during development. The agent develops an RTL design
\(\hat R\) and a behavior model
\(\hat B\) as the design's verification reference.

\paragraph{Multi-turn development.}
As shown in Figure~\ref{fig:method-overview}(a), the agent iteratively edits
and tests \(\hat R\) and \(\hat B\) with development tools, using its own
testbench to check their agreement and guide revisions. After the agent
submits its final \(\hat R\) and \(\hat B\), \behavesim evaluates each
independently against \(B^\star\).
\par

\paragraph{Behavior models.}
The golden model \(B^\star\) and the agent-written model
\(\hat B\) are represented in Behavior IR, which supports typed operations,
arrays, conditionals, loops, and persistent state. Invocations
take task-defined inputs and state, producing updated state and
outputs or an explicit failure.
Appendix~\ref{app:behave-contracts} provides an MXFP4 example.
Appendix~\ref{app:bs-ir} gives the syntax and semantics.

\paragraph{Task-defined alignment.}
For RTL-model comparison, \(\hat R\) is checked against \(\hat B\) during
development and \(B^\star\) during evaluation. As shown in
Figure~\ref{fig:method-overview}(b), the reference behavior model processes
the inputs accepted by the RTL and task-defined events such as reset.
Its predicted outputs are matched to RTL outputs, allowing latency differences
permitted by the task.
For model-model comparison, \(\hat B\) and \(B^\star\) receive the same
task-defined inputs, and their outputs are compared directly.

\paragraph{Ready/valid example.}
Consider a ready/valid task that requires one result per accepted input but
leaves response latency unspecified. \behavesim records the RTL's input and output data
at clock edges when the data are marked valid and the receiver is ready to
accept them. It feeds the recorded input values to the reference behavior model and
compares its predictions with the recorded RTL outputs.
Appendix~\ref{app:behave-support} summarizes supported protocols and limitations.

\subsection{Goal-Guided Evaluation}
\label{sec:method-generation}

\paragraph{Evaluation setup.}
\behavesim evaluates each submitted artifact \(A\in\{\hat R,\hat B\}\)
on a finite stimulus pool \(\Xi_A\), comparing its outputs with \(B^\star\).
Each stimulus \(\xi\in\Xi_A\) consists of finite input sequences and an
execution schedule specifying input presentation, clock events, and resets.

\paragraph{Goals.}
A goal is a predefined condition on artifact execution.
It is hit when a test stimulus drives the artifact to satisfy that condition.
Coverage goals mark execution cases, such as branch outcomes, while safety
goals mark failures, such as violations of existing assertions.
\behavesim automatically extracts goals from artifact structure and predefined
protocol checks, while tasks can explicitly declare additional goals.
Appendix~\ref{app:bs-goals} details goal types and hit conditions.

\paragraph{Stimulus generation.}
As shown in Figure~\ref{fig:method-overview}(c), \behavesim first runs
edge-case and seeded random tests. It then uses execution feedback
from both \(B^\star\) and \(A\) to search for inputs targeting unhit
goals. The search targets goals whose triggering conditions can be encoded
as solver constraints on inputs. Solver-generated tests are executed
to check outputs and update goal hits, guiding further search within a
shared time budget. Appendix~\ref{app:bs-generation} details the procedure.

\begin{figure}[!t]
\centering
\includegraphics[width=\textwidth]{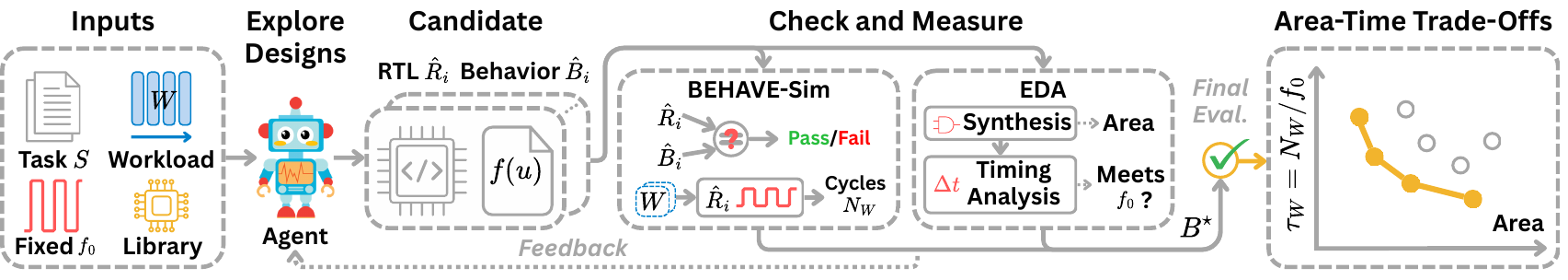}
\caption{\textbf{Timing-flexible PPA exploration.}
The agent freely explores area-time trade-offs through multi-turn tool interaction.
Orange points mark the empirical Pareto frontier.}
\label{fig:ppa-workflow}
\end{figure}

\paragraph{Artifact score.}
An artifact passes on \(\Xi_A\) when its outputs match \(B^\star\) and it
hits no safety goal:
\[
  \operatorname{score}_A(\Xi_A)
  =
  \mathbf 1\!\left[
    n^A_{\mathrm{mis}}(\Xi_A)=0
    \land
    \mathrm{Hit}_A(\Xi_A)\cap\Gamma_A^{\mathrm{safe}}=\emptyset
  \right],
\]
where \(n^A_{\mathrm{mis}}\) counts output mismatches, \(\mathrm{Hit}_A\)
collects observed goal hits, and \(\Gamma_A^{\mathrm{safe}}\) is \(A\)'s
safety-goal set. Goal coverage is reported separately and does not affect
this score (Appendix~\ref{app:bs-goals}).
\par

\begin{theorem}[Bounded goal analysis]
\label{thm:main-bounded-conclusions}
Under the assumptions in Appendix~\ref{app:behave-guarantees}, for either
submitted artifact, every goal reported as reached has an execution that
hits it, confirmed by replay. A goal reported as bounded-unreachable cannot
be hit by any allowed execution within the fixed bounds. No conclusion is
drawn for unresolved goals. This analysis does not change the artifact score.
\end{theorem}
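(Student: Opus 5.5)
The proof follows the three kinds of reported outcome: reached, bounded-unreachable, and unresolved. I assume the appendix assumptions are the natural ones: \behavesim's simulator and the Behavior IR interpreter are deterministic and faithful to the defined semantics; goal predicates are decidable on finite executions; stimuli within the fixed bounds form a finite, well-defined set of input sequences and schedules; and the solver's encoding of an artifact's bounded executions is sound and complete for that bounded semantics, or at least any UNSAT answer it returns is correct.

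\emph{Reached goals.} A goal is reported as reached only when a concrete stimulus \(\xi\), whether random, edge-case or solver-generated, is actually executed and the hit predicate evaluates true on the trace. Solver models are never trusted directly: each candidate input is concretized into a stimulus and run. Replay is exactly this: \(\xi\) is re-executed on the artifact \(A\in\{\hat R,\hat B\}\). By determinism the replayed trace equals the original one, so the goal predicate holds on a genuine execution. This direction needs only determinism and correct evaluation of the goal predicates; solver soundness is not required.

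\emph{Bounded-unreachable goals.} This label is assigned only when the solver returns UNSAT for the constraint ``some bounded execution hits goal \(g\)'', formed from the encoding of \(A\)'s bounded transition relation together with the input and schedule constraints. To conclude that no allowed execution within the bounds hits \(g\), I would show by induction on the number of steps up to the bound that every concrete allowed execution induces a satisfying assignment of the encoded unrolled transition relation. This is the encoding-completeness (over-approximation) assumption, applied to each Behavior IR construct and to the RTL cycle semantics. The contrapositive of UNSAT then rules out a hitting execution. I expect this step to be the main obstacle. It requires the encoding to cover all allowed schedules, including resets and the protocol handshakes under task-defined alignment, and not merely the ones sampled. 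If a full semantic proof is too heavy, the fallback is to state per-construct encoding soundness as the appendix assumption and make the induction over trace length the only argument.

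\emph{Remaining claims.} Goals for which the solver times out, returns unknown, or whose conditions cannot be encoded receive the label ``unresolved''. The procedure attaches no claim to this label, so the third statement holds by construction. Finally, \(\operatorname{score}_A(\Xi_A)\) depends only on \(n^A_{\mathrm{mis}}(\Xi_A)\) and on the safety goals hit on \(\Xi_A\). The goal-analysis labels are computed after this and never feed back into \(\Xi_A\) or the comparison, so the analysis leaves the score unchanged. Any safety goal already hit is observed directly and is counted in the score regardless of the analysis.
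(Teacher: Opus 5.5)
Your proposal is correct and follows the paper's argument. The one-line proof applies Theorem~\ref{thm:closure-sound}, and that theorem's content is exactly your decomposition: reached goals come only from replay-audited traces (evaluation records, or solver witnesses that are re-run rather than trusted); bounded-unreachable goals rest on UNSAT answers through the completeness direction of Theorem~\ref{thm:bounded}, where your ``every allowed execution induces a satisfying assignment'' is the assignment \(\nu_w\) built there; and unresolved goals and the score are untouched by construction.

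The main difference is that you assume encoding completeness, whereas the paper derives exactness from a finite layered representation. For RTL this is Theorem~\ref{thm:rtl-exact} under Assumption~\ref{asm:lr}; for Behavior IR it is Theorem~\ref{thm:exact-b} via the structural induction of Lemma~\ref{lem:cmd}, with Corollary~\ref{cor:source} mapping Behavior statuses back to source goals. The paper's audit also keeps witnesses inside the bounded domain, so reached goals lie in \(\mathrm{Reach}_D\).
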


\noindent\textit{Proof.}
Apply Theorem~\ref{thm:closure-sound} to the exact bounded encodings
in Theorems~\ref{thm:rtl-exact} and~\ref{thm:exact-b}
(Appendix~\ref{app:behave-guarantees}).\qed
\par

\subsection{Timing-Flexible PPA Exploration}
\label{sec:method-ppa}

\paragraph{Candidate search.}
Given \(S\), a fixed workload \(W\), frequency \(f_0\), and cell library,
the agent seeks low area and short workload execution time. It can vary
parallelism and other choices when revising
\(\hat R\) and \(\hat B\) with \behavesim and EDA feedback
(Figure~\ref{fig:ppa-workflow}). \behavesim checks their agreement and
measures workload cycles, while synthesis and timing analysis report cell
area and timing.

\paragraph{Area-time trade-offs.}
Workload execution time is \(\tau_W=N_W/f_0\), where \(N_W\) counts RTL
cycles to complete \(W\). Finally, RTL candidates are evaluated
against the hidden reference \(B^\star\). We report the empirical Pareto frontier
among passing candidates that complete \(W\) and meet timing at \(f_0\).
\par

\subsection{Verifiable RL and Self-Improvement}
\label{sec:method-rl}
\label{sec:method-self-improvement}

\begin{figure}[!t]
\centering
\includegraphics[width=\textwidth]{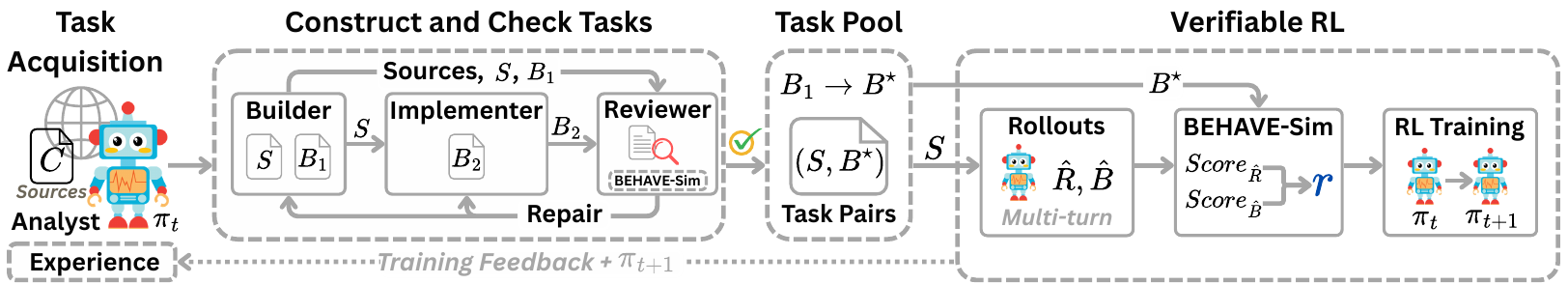}
\caption{\textbf{Task acquisition and verifiable RL.}
All roles use the current policy \(\pi_t\) in separate contexts.
Accepted task pairs support RL, and training feedback guides subsequent
task acquisition.}
\label{fig:training-workflow}
\end{figure}

\paragraph{Verifiable RL.}
Given \((S,B^\star)\) pairs, we train the agent with GRPO~\citep{shao2024deepseekmath} on
multi-turn joint-development rollouts. \behavesim evaluates
\(\hat R\) and \(\hat B\) against \(B^\star\), yielding the rollout reward
\[
  r=\frac{
    \operatorname{score}_{\hat B}(\Xi_{\hat B})
    +\operatorname{score}_{\hat R}(\Xi_{\hat R})
  }{2}
  \in\left\{0,\frac12,1\right\}.
\]

\paragraph{Feedback-guided task acquisition.}
To expand the training set, the current policy \(\pi_t\) acts as an Analyst
(Figure~\ref{fig:training-workflow}). It reviews completed training rollouts
and their feedback to identify capability gaps, then selects relevant
high-level implementations and sets objectives for new tasks. The Analyst
records useful findings as experience to guide subsequent acquisition rounds.

\paragraph{Task construction and continued training.}
Builder constructs \(S\) and \(B_1\) from the sources and objectives.
In a separate context, Implementer produces \(B_2\) from \(S\) and public
interface documentation, without the sources or \(B_1\).
Reviewer checks the task against its sources and compares the models with
\behavesim, requesting repairs when needed. Accepted \((S,B_1)\) pairs
enter the cumulative training pool with \(B_1\) as \(B^\star\).
Training continues from \(\pi_t\), yielding an updated policy
\(\pi_{t+1}\) that uses rollout feedback to guide subsequent acquisition.

\section{Experiments and Discussion}
\label{sec:experimentsrxiv}

\begin{table}[!t]
\centering
\caption{\textbf{Main results on hardware design benchmarks (\%).}
RTL and Behav. report RTL and Behavior pass@1 under \behavesim.
Cov. is line coverage
achieved by the submitted testbench on the submitted RTL, averaged over all
tasks. Missing submissions or compilation/execution failures score zero.
RL and SI denote fixed-dataset RL and self-improvement (see Section~\ref{sec:exp-self-improvement}).}
\label{tab:main_results}
\newcommand{\behaveMainResultRows}{%
Qwen3.5-4B & 34.6 & 28.8 & 6.9 & 12.0 & 20.0 & 0.0 & 4.3 & 4.3 & 0.9 & 0.0 & 6.7 & 0.0 \\
Qwen3.5-9B & 46.8 & 40.4 & 13.4 & 18.0 & 22.0 & 2.1 & 7.0 & 9.6 & 2.4 & 0.0 & 15.0 & 1.8 \\
Claude Haiku 4.5 & 92.9 & 89.1 & 62.7 & 88.0 & 84.0 & 87.2 & 72.2 & 54.8 & 83.8 & 36.7 & 70.0 & 48.1 \\
GPT-5.6 Luna & 94.2 & 94.9 & 84.1 & 98.0 & 98.0 & 85.3 & 93.0 & 93.9 & 91.6 & 68.3 & 73.3 & 89.7 \\
Claude Sonnet 5 & 92.3 & 92.3 & 58.1 & 84.0 & 86.0 & 75.7 & 64.3 & 60.9 & 66.9 & 68.3 & 65.0 & 60.7 \\
GPT-5.6 Terra & 99.4 & 99.4 & 71.0 & 98.0 & 98.0 & 85.6 & 95.7 & 97.4 & 95.9 & 85.0 & 90.0 & 95.7 \\
DeepSeek-V4.1-Flash & 96.8 & 96.2 & 79.9 & 98.0 & 100.0 & 94.0 & 93.0 & 90.4 & 95.5 & 88.3 & 83.3 & 90.7 \\
\midrule
Qwen3.8-27B & 97.4 & 95.5 & 77.3 & 90.0 & 90.0 & 87.9 & 81.7 & 82.6 & 87.0 & 55.0 & 56.7 & 59.1 \\
}

\newcommand{\behaveTrainingResultRow}{%
Qwen3.8-27B-RL (ours) & 98.1 & 96.8 & 66.7 & 96.0 & 92.0 & 85.7 & 91.3 & 87.8 & 92.0 & 73.3 & 71.7 & 80.6 \\
}

\newcommand{\behaveSynResultRow}{%
Qwen3.8-27B-SI (ours) & 98.7 & 96.2 & 80.6 & 96.0 & 92.0 & 85.8 & 93.0 & 92.2 & 91.5 & 75.0 & 75.0 & 77.5 \\
}

\fontsize{8}{9}\selectfont
\setlength{\tabcolsep}{0.7pt}
\begin{tabular*}{\textwidth}{@{\extracolsep{\fill}}l*{12}{c}@{}}
\toprule
& \multicolumn{3}{c}{VerilogEval-v2}
& \multicolumn{3}{c}{RTLLM-v2}
& \multicolumn{3}{c}{CVDP-cid003}
& \multicolumn{3}{c}{BEHAVE-Eval} \\
\cmidrule(lr){2-4}\cmidrule(lr){5-7}\cmidrule(lr){8-10}
\cmidrule(lr){11-13}
Model & RTL & Behav. & Cov. & RTL & Behav. & Cov.
& RTL & Behav. & Cov. & RTL & Behav. & Cov. \\
\midrule
\behaveMainResultRows
\behaveTrainingResultRow
\behaveSynResultRow
\bottomrule
\end{tabular*}
\end{table}

\begin{figure}[!t]
\centering
\includegraphics[width=\linewidth]{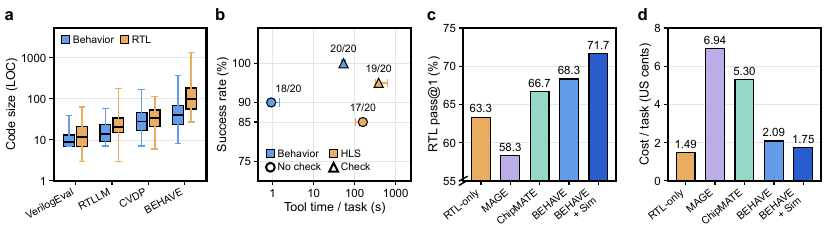}
\caption{\textbf{Benchmark construction and pipeline comparison.}
(a) RTL and Behavior code sizes across benchmarks. Whiskers show the full range.
(b) End-to-end success and median tool time (horizontal bars: interquartile range) for Behavior IR and HLS
on 20 tasks with GPT-5.6 Luna. Check adds development-time correctness
feedback. All settings retain frontend/synthesis feedback and final evaluation
against hidden golden Behavior models.
(c) RTL pass@1 across pipelines on BEHAVE-Eval.
(d) Mean API cost for the same pipelines.}
\label{fig:pipeline-comparison}
\end{figure}

We first introduce the datasets and evaluation settings
(Section~\ref{sec:exp-setup}), then examine how well models jointly perform
hardware design and verification using \name (Section~\ref{sec:exp-overall}).
We compare \name with other development pipelines and test whether
providing \behavesim as a stimulus-generation tool improves success and
reduces cost (Section~\ref{sec:exp-pipelines}). We then test whether agents
can explore area-time trade-offs under a fixed workload and frequency
(Section~\ref{sec:exp-ppa}), and whether verifiable RL and continued task
acquisition improve their capabilities (Section~\ref{sec:exp-self-improvement}).

\subsection{Benchmarks and Evaluation Settings}
\label{sec:exp-setup}

\paragraph{Benchmarks.}
We construct \datasetsize human-reviewed specification-behavior pairs from
high-level implementations across seven application domains, split into
BEHAVE-Train (\trainingdatasetsize tasks) and BEHAVE-Eval (\evaluationdatasetsize tasks).
We extend VerilogEval-v2~\citep{pinckney2025verilogeval},
RTLLM-v2~\citep{liu2024openllmrtl}, and CVDP-cid003~\citep{pinckney2025cvdp}
with golden Behavior models and resolve specification inconsistencies.
Behavior models typically use less code than RTL by omitting hardware
implementation details. BEHAVE has the largest median RTL code size
(Figure~\ref{fig:pipeline-comparison}(a)).
Appendix~\ref{app:benchmark-review} details sources,
construction, review, splits, and licensing.

\paragraph{Behavior IR vs.\ HLS.}
High-level synthesis (HLS) converts designs written in high-level
languages, such as C++, into RTL.
On 20 tasks, the agent translates the same high-level implementations
into Behavior IR or synthesizable C++.
\behavesim and Vitis HLS provide compilation and synthesis feedback, respectively.
With or without \behavesim correctness feedback,
Behavior IR achieves higher success rates and lower median tool time
(Figure~\ref{fig:pipeline-comparison}(b)).
Appendix~\ref{app:exp-hls} provides details.
Our framework is not tied to Behavior IR and can use other executable
reference representations.

\paragraph{Evaluation settings.}
Models share a development environment with a 128K-token context
window and at most 60 turns per rollout. \behavesim evaluates final RTL and
Behavior submissions. Appendices~\ref{app:exp-environment}
and~\ref{app:exp-setup} detail the environment, model configurations and evaluation budgets.

\subsection{Hardware Design and Verification Performance}
\label{sec:exp-overall}

\paragraph{Model capabilities.}
Table~\ref{tab:main_results} shows that frontier foundation models already
exhibit strong hardware design and verification capabilities. Given a
tool-equipped development environment, they can autonomously write, test,
and revise RTL designs and their Behavior references, using tool feedback to
decide their next steps. Smaller models still
struggle with both artifacts, and testbench coverage remains uneven even
among models with high RTL and Behavior pass@1.

\paragraph{Benchmark saturation.}
In our interactive setting, strong models approach saturation on these extended
benchmarks, while BEHAVE-Eval remains more challenging. GPT-5.6 Terra
achieves 95.7-99.4\% RTL pass@1 across the three external benchmarks,
compared with 85.0\% on BEHAVE-Eval.
As algorithms and workloads evolve, we can construct harder
tasks from high-level implementations.

\subsection{Comparison of Development Workflows}
\label{sec:exp-pipelines}

\paragraph{Comparison setup.}
Using GPT-5.6 Luna, we compare RTL-only development,
ChipMATE~\citep{yu2026chipmate}, MAGE~\citep{zhao2025mage}, \name,
and \name with \behavesim on BEHAVE-Eval.
In the last setting, \behavesim generates stimuli and compares the agent's
RTL and Behavior model during development, replacing the agent-written testbench.
All final RTL submissions are evaluated against golden Behavior models.
Appendix~\ref{app:exp-pipelines} details the settings and baseline adaptations.

\paragraph{Performance and cost.}
\name achieves higher RTL pass@1 than RTL-only development, suggesting that
constructing an executable reference can help the agent develop RTL
(Figure~\ref{fig:pipeline-comparison}(c)).
With a single tool-using agent, \name achieves comparable RTL
pass@1 to ChipMATE and MAGE at lower API cost
(Figure~\ref{fig:pipeline-comparison}(d)).
Adding \behavesim
improves success while reducing cost, supporting the value of automated
testing within joint development.

\paragraph{Why functional behavior modeling.}
ChipMATE generates Python reference models for RTL verification.
Reproducing the RTL's cycle-by-cycle execution in Python, however, can
retain its control logic and intermediate state. In an unsigned-division
example, ChipMATE's Python model maintains control state across clock-level
calls, whereas the Behavior reference completes the computation in one
call (see Appendix~\ref{app:exp-additional}). Expressing the same computation at different
abstraction levels allows joint development to separate functional intent
from architectural choices.

\subsection{Timing-Flexible PPA Exploration}
\label{sec:exp-ppa}

\begin{figure}[!htbp]
\centering
\includegraphics[width=\linewidth]{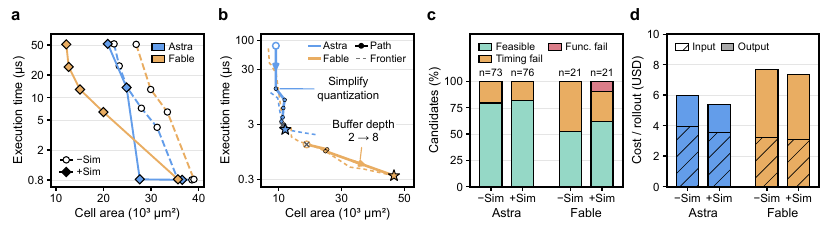}
\caption{\textbf{Timing-flexible PPA exploration.}
(a) MXFP4 frontiers with and without \behavesim (SkyWater 130\,nm, 80\,MHz).
(b) NVFP4 frontiers (dashed) and optimization paths (solid)
(FreePDK45-based Nangate 45\,nm library, 400\,MHz).
Stars mark endpoints and crosses timing failures.
(c) MXFP4 candidate outcomes ($n$: collected candidates).
(d) MXFP4 mean estimated input/output API cost per search with cross-turn cache reuse.
Execution time is simulated cycles / target frequency for
64 continuously offered blocks of 32 (MXFP4) or 16 (NVFP4) BF16 values.}
\label{fig:ppa-exploration}
\end{figure}

\paragraph{Experimental setup.}
We compare \name with and without \behavesim for MXFP4 area-time exploration
using GPT-6 Astra and Claude Fable 5.1. Both use EDA feedback under the same
workload, clock frequency, and SkyWater 130\,nm cell
library~\citep{skywater2020pdk}.
We also explore NVFP4 using the
FreePDK45-based Nangate 45\,nm standard-cell
library~\citep{nangate2008library,stine2007freepdk}. Appendix~\ref{app:exp-ppa}
provides details.

\paragraph{Area-time trade-offs.}
Both models produce correct designs with different area-time trade-offs
(Figure~\ref{fig:ppa-exploration}(a)).
Figure~\ref{fig:ppa-exploration}(b) traces NVFP4 revisions to
selected frontier points. Astra simplifies quantization using comparisons
in place of division. Fable increases buffering from two to eight blocks,
reducing execution time at the cost of area. Within each task, the same
golden Behavior reference validates designs with different architectures
and execution times.

\paragraph{Effect of simulation feedback.}
\behavesim yields lower-area designs at comparable execution times and a
higher fraction of feasible candidates
(Figure~\ref{fig:ppa-exploration}(a,c)).
Estimated mean API cost per search is modestly lower
with \behavesim (Figure~\ref{fig:ppa-exploration}(d)). These results suggest that reusable
stimulus-generation and comparison tools can be more effective than asking
agents to write testbenches.

\subsection{Verifiable RL and Self-Improvement}
\label{sec:exp-self-improvement}

\paragraph{Experimental setup.}
We train Qwen3.8-27B for 40 updates, sampling from the \trainingdatasetsize-task BEHAVE-Train pool.
Self-improvement starts from 60 seed tasks and adds new tasks
through \selfimprovementrounds acquisition rounds, with 18 training updates in total.
The seed-only baseline trains only on the same 60 seed tasks for 18 updates.
Appendix~\ref{app:exp-training}
details the training and acquisition settings.

\begin{figure}[!t]
\centering
\includegraphics[width=\linewidth]{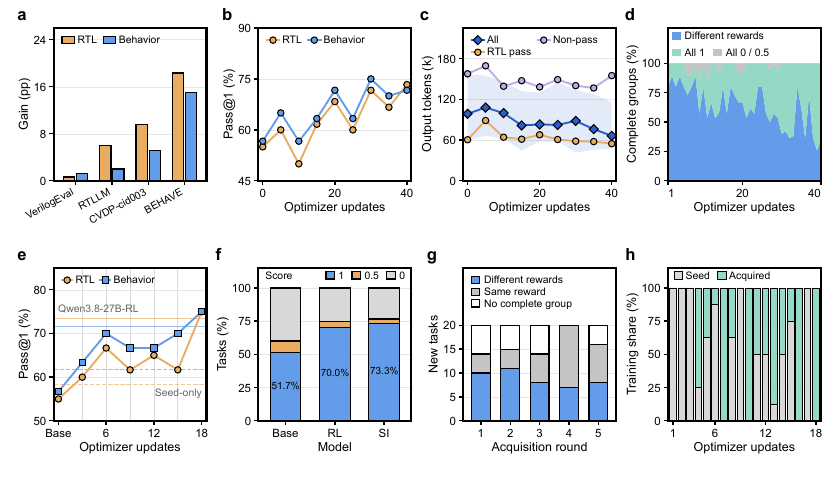}
\caption{\textbf{RL and self-improvement with Qwen3.8-27B.}
(a) RL gains across benchmarks.
(b,c) BEHAVE-Eval pass@1 and median rollout output length.
(d) Reward-group composition during RL.
(e) Self-improvement performance on BEHAVE-Eval.
Horizontal lines show final fixed-dataset RL (solid) and seed-only (dashed) results.
(f) Joint-score distributions on BEHAVE-Eval: 1, 0.5, and 0 indicate both,
one, or neither artifact passing.
(g) Reward variation across eight candidate solutions per newly acquired task.
(h) Seed/acquired task shares in each training update.}
\label{fig:training-self-improvement}
\end{figure}

\paragraph{Learning performance.}
Verifiable RL improves both RTL design and reference-model construction,
with the largest gains on BEHAVE-Eval. RTL pass@1 rises from 55.0\% to
73.3\%, and Behavior pass@1 from 56.7\% to 71.7\%
(Figure~\ref{fig:training-self-improvement}(a-b)). These gains are accompanied
by shorter generations: median total generation length per rollout falls
from 98.6k to 66.2k tokens (Figure~\ref{fig:training-self-improvement}(c)).

\paragraph{Self-improvement.}
The final self-improved model reaches 75.0\% for both
RTL and Behavior pass@1 on BEHAVE-Eval
(Figure~\ref{fig:training-self-improvement}(e)).
Both artifacts pass on 44 of
60 tasks, versus 31 before training and 42 after fixed-dataset RL
(Figure~\ref{fig:training-self-improvement}(f)).
The seed-only baseline reaches 58.3\% RTL and 61.7\% Behavior pass@1.
Starting from 60 seed tasks,
self-improvement reaches the performance of RL using a 540-task pool.
Appendix~\ref{app:exp-self-improvement-case} presents a recorded
self-improvement case.

\paragraph{Learning signal.}
Later fixed-dataset updates contain more sampled all-1 groups,
leaving no reward differences within those groups
(Figure~\ref{fig:training-self-improvement}(d)). Self-improvement expands the pool
with tasks constructed from retrieved software. Newly acquired tasks provide
nonconstant rewards across acquisition rounds
(Figure~\ref{fig:training-self-improvement}(g)) and contribute to subsequent
updates (Figure~\ref{fig:training-self-improvement}(h)).

\section{Related Work}
\label{sec:related}

\paragraph{LLMs for EDA.}
LLMs have been applied to a wide range of digital design and verification tasks,
including RTL and testbench generation, debugging, and design
optimization~\citep{chang2023chipgpt, thakur2023benchmarking, yu2025spec2rtl, qiu2025correctbench, qiu2024autobench, qiu2025confibench, zhang2025llm4dv, xin2026gogotb, yan2025assertllm, mali2024chiraag, orenesvera2023autosva2, bhandari2024fsmtb, ma2024verilogreader, tsai2024rtlfixer, yao2024hdldebugger, yubeaton2026agentic, yu2026agentic, islam2026verigraphi, chang2026specloop, wu2026llmfsm, li2026gensoc, zhang2026refevo, ye2025uvm2, ye2026uvmarvel, zhao2026ctosynth, wang2025symrtlo, tasnia2025veriopt, wang2025mcp4eda, fang2026drrtl, ping2026poet, wang2026veriagent, thorat2025verippa, cuyckens2026ares, chen2026heurigym, hu2025uvllm, tan2026autoverifix, cui2026chia, hemadri2025veriloc}.
A parallel line develops training datasets and post-trains open-source LLMs for
these tasks~\citep{liu2023chipnemo, liu2024rtlcoder, thakur2024verigen, liu2025craftrtl, wang2025codestructrl, yubeaton2025verithoughts, zhu2025codevr1, zhang2026llm4cov, cui2024origen, zhao2024codev, akyash2025rtlpp, calzada2025verilogdb, chen2025chipseek, wang2025verireason, wei2025vericoder, zhang2025salv, deng2025scalertl, zhang2026rtlseek, deng2026acertl, chen2026siliconmind, shi2025earl, wang2025tbfeedback, fu2026chatsva, wu2026codevsva}.
Some methods also use high-level representations to support RTL
training. BetterV translates RTL into cycle-accurate C models for paired
instruction tuning~\citep{pei2024betterv}. AutoVCoder uses Python execution to
filter synthetic combinational RTL examples before supervised
fine-tuning~\citep{gao2024autovcoder}. \name instead uses executable behavior
models to provide correctness rewards during RL.

\paragraph{Verification-Guided Hardware Agents.} MAGE generates Verilog testbenches for RTL debugging, ChatModel generates SystemC reference models, and PRO-V-R1 trains agents to generate Python reference models and testbenches using golden RTL supervision~\citep{zhao2025mage, ye2025chatmodel, zhao2025prov}. FAVer and AutoVeriFix+ use Python reference models to guide RTL generation and repair~\citep{mu2025faver, tan2026autoverifixplus}. ChipMATE trains RTL and Python agents for cross-verification, with the Python agent supervised by RTL-derived, cycle-accurate reference models~\citep{yu2026chipmate}. Cycle-aligned checking couples verification to reference timing, limiting reference reuse across RTL designs with task-permitted latency differences (Table~\ref{tab:related_comparison}).

\paragraph{High-Level Representations and References.} CorrectHDL checks generated RTL against HLS-generated references derived from C/C++~\citep{xu2025correcthdl}, while FormalRTL uses C reference models for equivalence checking~\citep{li2026formalrtl}. HINT generates an executable IR with a frozen transaction and timing contract for RTL verification~\citep{cheng2026hint}. \citet{yin2026hierarchical} use hierarchical IRs to describe module connectivity, operations, and I/O behavior. CUDA Agent and DRTriton compare generated kernels with PyTorch references after kernel completion~\citep{dai2026cudaagent, guo2026drtriton}. \name matches RTL outputs to behavior-model predictions according to task-defined rules, preserving specification-permitted timing flexibility.

\begin{table}[!t]
\centering
\small
\setlength{\tabcolsep}{2pt}
\caption{\textbf{Comparison of related methods.} \emph{Task} lists evaluated capabilities. RM and TB denote reference model and testbench. G and P mark agent-generated verification artifacts and supplied references. \emph{Timing Flex.}\ indicates whether RTL latency or kernel runtime may vary. \emph{Self-improvement} indicates whether the updated agent autonomously acquires new tasks for further training.}
\label{tab:related_comparison}
\resizebox{\textwidth}{!}{%
\begin{tabular}{@{}lccccccc@{}}
\toprule
& \multicolumn{5}{c}{\textbf{Inference}} & \multicolumn{2}{c}{\textbf{Training}} \\
\cmidrule(lr){2-6} \cmidrule(lr){7-8}
\textbf{Method}
& \textbf{Task}
& \textbf{Design}
& \textbf{Verification}
& \textbf{Alignment}
& \makecell{\textbf{Timing}\\\textbf{Flex.}}
& \makecell{\textbf{Training}\\\textbf{Source}}
& \makecell{\textbf{Self-}\\\textbf{Improv.}} \\
\midrule
MAGE~\citep{zhao2025mage} & Design & RTL & Verilog TB\textsuperscript{G/P} & Cycle & No & - & No \\
ChatModel~\citep{ye2025chatmodel} & Verification & - & SystemC RM\textsuperscript{G} & Cycle & No & - & No \\
PRO-V-R1~\citep{zhao2025prov} & Verification & - & Python RM\textsuperscript{G} & Cycle & No & Spec-RTL & No \\
FAVer~\citep{mu2025faver} & Design & RTL & Python RM\textsuperscript{G} & Cycle & No & Spec-RTL & No \\
AutoVeriFix+~\citep{tan2026autoverifixplus} & Both & RTL & Python RM\textsuperscript{G} & Cycle & No & - & No \\
ChipMATE~\citep{yu2026chipmate} & Both & RTL & Python RM\textsuperscript{G} & Cycle & No & Spec-RTL & No \\
CorrectHDL~\citep{xu2025correcthdl} & Design & RTL & HLS RTL\textsuperscript{P} & Task-specific & Yes & - & No \\
FormalRTL~\citep{li2026formalrtl} & Design & RTL & C RM\textsuperscript{P} & Fixed-Latency & No & - & No \\
HINT~\citep{cheng2026hint} & Design & RTL & HINT IR\textsuperscript{G} & Task-specific & No & - & No \\
CUDA Agent~\citep{dai2026cudaagent} & Design & CUDA & PyTorch RM\textsuperscript{P} & Completion & Yes & PyTorch Ops & No \\
DRTriton~\citep{guo2026drtriton} & Design & Triton & PyTorch RM\textsuperscript{P} & Completion & Yes & PyTorch Ops & No \\
\textbf{\name (ours)} & \textbf{Both} & \textbf{RTL} & \textbf{Behavior RM\textsuperscript{G}} & \textbf{Task-specific} & \textbf{Yes} & \textbf{Spec-Behavior} & \textbf{Yes} \\
\bottomrule
\end{tabular}%
}
\end{table}

\paragraph{Self-Improving Agents.}
Agents refine memory
and skills~\citep{zhang2026accelopt,yan2026openskill}, evolve agent code
and harnesses~\citep{zhang2025dgm}, or update model weights through
training~\citep{zhang2026espl,wang2025cure}. Some training methods let
models propose new problems and learn from solving
them~\citep{zhao2025absolute,huang2026rzero,wang2025socratic,yang2026ttcs,lu2026searchselfplay,dong2026agentworld}.
\name acquires new tasks with executable references
from public high-level implementations. The updated agent guides acquisition
using training feedback, and checked specification-behavior pairs support
further RL.

\paragraph{Benchmarks for RTL Design and Verification.}
RTLLM~\citep{lu2024rtllm,liu2024openllmrtl} and
VerilogEval~\citep{liu2023verilogeval,pinckney2025verilogeval} evaluate small
module generation from natural-language specs, and
CVDP~\citep{pinckney2025cvdp} extends the format to other RTL tasks such as
debugging and comprehension.
ArchXBench, NotSoTiny, and RealBench target larger or more
realistic RTL designs~\citep{purini2025archxbench,ghorab2025notsotiny,jin2025realbench}.
RTL-Repo evaluates repository-level code completion~\citep{allam2024rtlrepo},
while CktEvo focuses on repository-level design evolution~\citep{shi2026cktevo}.
ChipBench covers RTL generation, debugging, and reference-model
generation~\citep{yu2026chipbench}, while RTL-OPT targets RTL
optimization~\citep{lu2026rtlopt}.
IC-RTL and Pluto support PPA evaluation across different latencies, but rely
on a dedicated testbench for each
task~\citep{hsin2026evolve,abdelatty2025pluto}. For evaluation, \behavesim
instantiates standardized testbenches from our dataset's golden behavior
models and interface declarations, without requiring separately authored
testbench code for each task.

\section{Conclusion}
\label{sec:conclusion}

We presented \name, a framework for joint hardware design and verification
through functional behavior modeling.
\behavesim independently evaluates RTL designs and behavior
models against golden references while preserving task-permitted timing
flexibility, providing verifiable RL rewards without reference RTL.
Experiments demonstrate multi-turn joint development, area-time exploration
across workloads and cell libraries, and improvements in RTL and
reference-model generation through training. Starting from a small
human-reviewed seed set, the agent constructs and checks new specification-behavior
pairs for continued training, using rollout feedback to guide task acquisition.
This self-improvement process reaches performance comparable to full-dataset RL
on BEHAVE-Eval. By turning high-level implementations into executable
supervision, \name offers a path to continued self-improvement as algorithms
and workloads evolve.

\bibliographystyle{unsrtnat}
\bibliography{references}

\appendix


\clearpage
\section*{Appendix Contents}

\begingroup
\hypersetup{linkcolor=black,pdfborder={0 0 0}}
\setlist[itemize]{label={},leftmargin=1.6em,itemsep=2pt,parsep=0pt,
  topsep=4pt,partopsep=0pt}

\noindent\hyperref[app:statements]{\textbf{Appendix~\ref*{app:statements}: AI Use, Ethics, and Reproducibility}}
\dotfill\pageref{app:statements}

\vspace{0.5\baselineskip}
\noindent\hyperref[app:behave-system]{\textbf{Appendix~\ref*{app:behave-system}: \name: System Design}}
\dotfill\pageref{app:behave-system}
\begin{itemize}
  \item \hyperref[app:behave-workflow]{\ref*{app:behave-workflow} End-to-End Workflow}
    \dotfill\pageref{app:behave-workflow}
  \item \hyperref[app:behave-contracts]{\ref*{app:behave-contracts} Task and Artifact Example}
    \dotfill\pageref{app:behave-contracts}
  \item \hyperref[app:behave-architecture]{\ref*{app:behave-architecture} Execution Architecture}
    \dotfill\pageref{app:behave-architecture}
  \item \hyperref[app:behave-support]{\ref*{app:behave-support} Supported Features and Limitations}
    \dotfill\pageref{app:behave-support}
  \item \hyperref[app:bs-setting]{\ref*{app:bs-setting} Transaction Mapping and Output Matching}
    \dotfill\pageref{app:bs-setting}
  \item \hyperref[app:bs-goals]{\ref*{app:bs-goals} Goals, Coverage and Scoring}
    \dotfill\pageref{app:bs-goals}
  \item \hyperref[app:bs-generation]{\ref*{app:bs-generation} Goal-Guided Exploration and Evaluation}
    \dotfill\pageref{app:bs-generation}
\end{itemize}

\vspace{0.5\baselineskip}
\noindent\hyperref[app:behave-guarantees]{\textbf{Appendix~\ref*{app:behave-guarantees}: \behavesim Formal Semantics and Bounded Guarantees}}
\dotfill\pageref{app:behave-guarantees}
\begin{itemize}
  \item \hyperref[app:bs-generic]{\ref*{app:bs-generic} Trace Domains and Goal Reachability}
    \dotfill\pageref{app:bs-generic}
  \item \hyperref[app:bs-representation]{\ref*{app:bs-representation} Finite Representations and Exact Encodings}
    \dotfill\pageref{app:bs-representation}
  \item \hyperref[app:bs-closure]{\ref*{app:bs-closure} Bounded Reachability Analysis}
    \dotfill\pageref{app:bs-closure}
  \item \hyperref[app:bs-signal]{\ref*{app:bs-signal} RTL Event-Slot Semantics and Protocol Adapters}
    \dotfill\pageref{app:bs-signal}
  \item \hyperref[app:bs-rtl-domain]{\ref*{app:bs-rtl-domain} RTL Closed-Loop Semantics and Trace Domain}
    \dotfill\pageref{app:bs-rtl-domain}
  \item \hyperref[app:bs-rtl]{\ref*{app:bs-rtl} RTL Finite Representation, Exact Encoding and Audits}
    \dotfill\pageref{app:bs-rtl}
  \item \hyperref[app:bs-ir]{\ref*{app:bs-ir} Behavior IR and Procedure Semantics}
    \dotfill\pageref{app:bs-ir}
  \item \hyperref[app:bs-encoding]{\ref*{app:bs-encoding} Behavior Finite Representation and Exact Encoding}
    \dotfill\pageref{app:bs-encoding}
  \item \hyperref[app:bs-assumptions]{\ref*{app:bs-assumptions} Behavior Frontend Fidelity and Audits}
    \dotfill\pageref{app:bs-assumptions}
  \item \hyperref[app:bs-artifact-reporting]{\ref*{app:bs-artifact-reporting} Artifact Instances and Reported Bounded Conclusions}
    \dotfill\pageref{app:bs-artifact-reporting}
  \item \hyperref[app:bs-tcb]{\ref*{app:bs-tcb} Trusted Boundary, Claim Scope and Conclusion}
    \dotfill\pageref{app:bs-tcb}
\end{itemize}

\vspace{0.5\baselineskip}
\noindent\hyperref[app:exp-details]{\textbf{Appendix~\ref*{app:exp-details}: Implementation, Experimental Details and Additional Results}}
\dotfill\pageref{app:exp-details}
\begin{itemize}
  \item \hyperref[app:exp-environment]{\ref*{app:exp-environment} Agent Environment and Tools}
    \dotfill\pageref{app:exp-environment}
  \item \hyperref[app:exp-setup]{\ref*{app:exp-setup} Inference and Evaluation Settings}
    \dotfill\pageref{app:exp-setup}
  \item \hyperref[app:exp-search-coverage]{\ref*{app:exp-search-coverage} Effect of Goal-Guided Search on RTL Coverage}
    \dotfill\pageref{app:exp-search-coverage}
  \item \hyperref[app:exp-hls]{\ref*{app:exp-hls} Behavior IR and HLS Comparison Settings}
    \dotfill\pageref{app:exp-hls}
  \item \hyperref[app:exp-pipelines]{\ref*{app:exp-pipelines} Pipeline Comparison and PPA Exploration Settings}
    \dotfill\pageref{app:exp-pipelines}
  \item \hyperref[app:exp-training]{\ref*{app:exp-training} Training and Self-Improvement Settings}
    \dotfill\pageref{app:exp-training}
  \item \hyperref[app:exp-resources]{\ref*{app:exp-resources} Computational Resources and Cost}
    \dotfill\pageref{app:exp-resources}
  \item \hyperref[app:exp-additional]{\ref*{app:exp-additional} Pipeline Comparison Example}
    \dotfill\pageref{app:exp-additional}
  \item \hyperref[app:exp-self-improvement-case]{\ref*{app:exp-self-improvement-case} Additional Results for RL Training and Self-Improvement}
    \dotfill\pageref{app:exp-self-improvement-case}
\end{itemize}

\vspace{0.5\baselineskip}
\noindent\hyperref[app:benchmark-review]{\textbf{Appendix~\ref*{app:benchmark-review}: Dataset Construction and Validation}}
\dotfill\pageref{app:benchmark-review}
\begin{itemize}
  \item \hyperref[app:benchmark-sources]{\ref*{app:benchmark-sources} Dataset Overview and Sources}
    \dotfill\pageref{app:benchmark-sources}
  \item \hyperref[app:benchmark-authored]{\ref*{app:benchmark-authored} Source-Based Task Construction}
    \dotfill\pageref{app:benchmark-authored}
  \item \hyperref[app:benchmark-construction]{\ref*{app:benchmark-construction} External Benchmark Adaptation}
    \dotfill\pageref{app:benchmark-construction}
  \item \hyperref[app:benchmark-checks]{\ref*{app:benchmark-checks} Human Review}
    \dotfill\pageref{app:benchmark-checks}
  \item \hyperref[app:benchmark-splits]{\ref*{app:benchmark-splits} Data Splits and Release}
    \dotfill\pageref{app:benchmark-splits}
\end{itemize}

\endgroup
\hypersetup{linkcolor=magenta}
\clearpage


\section{AI Use, Ethics, and Reproducibility}
\label{app:statements}

\paragraph{AI use statement.}
\label{app:ai-use}
We used generative AI tools to polish the manuscript and improve
the presentation of the mathematical claims in Appendix~\ref{app:behave-guarantees}.
These tools also assisted literature retrieval and review for
Section~\ref{sec:related}.
LLMs also assisted the retrieval of high-level source implementations
and the construction of specification-behavior pairs, including drafting
specifications, generating behavior models and auxiliary RTL, and revising
these artifacts using execution feedback. BEHAVE benchmark construction and
external-benchmark adaptation are detailed in Appendices~\ref{app:benchmark-authored}
and~\ref{app:benchmark-construction}, respectively. Human review of these benchmark
tasks is described in Appendix~\ref{app:benchmark-checks}. The autonomous acquisition
of additional training tasks during self-improvement is described separately
in Section~\ref{sec:method-self-improvement} and Appendix~\ref{app:exp-self-improvement}.
The authors reviewed the AI-assisted contributions and remain
responsible for the final text, mathematical claims, code, datasets,
experiments, and reported results.

\paragraph{Ethics statement.}
\label{app:ethics}
Our benchmark is derived from publicly available technical material and does
not intentionally collect personal or sensitive data. We record source
provenance and licensing, and will release only human-reviewed benchmark
items with established redistribution permissions. Hardware-generation
systems are dual-use, and generated designs may contain functional, safety or
security defects. The checks in \name reduce but do not eliminate these risks.
Generated designs therefore require independent validation before deployment.

\paragraph{Reproducibility statement.}
\label{app:reproducibility}
We will publicly release the \name source code, benchmark, golden Behavior
models, auxiliary reference RTL, prompts, training and evaluation configurations,
model checkpoints and per-run records under open licenses. Reference RTL is
not required for reward computation or autonomous task acquisition.
The release records tool
and dependency versions, model identifiers, random seeds, run policies and
artifact digests needed to reproduce each result. Source-specific provenance,
licensing and attribution are documented in
Appendix~\ref{app:benchmark-review}.


\section{\name: System Design}
\label{app:behave-system}
\label{app:behave-sim}

\paragraph{Purpose.}
We describe how \name supports joint RTL and behavior development
and independently evaluates artifacts against the golden behavior
model under task-defined protocol requirements.

\paragraph{Overview.}
Subsections~\ref{app:behave-workflow} through~\ref{app:behave-support}
describe the workflow, an artifact example, execution architecture and supported
features. Subsections~\ref{app:bs-setting} through~\ref{app:bs-generation}
define transaction mapping, goals, scoring and evaluation,
from baseline execution to goal-guided search and continuous workloads.
\par

\subsection{End-to-End Workflow}
\label{app:behave-workflow}
\paragraph{Development and evaluation.}
Figure~\ref{fig:end-to-end-workflow} connects joint development to independent
evaluation. Given \(S\), the agent develops \(\hat R\), \(\hat B\), and a testbench through
tool interaction. After submission, \behavesim uses the task interface and
hidden \(B^\star\) to evaluate \(\hat R\) and \(\hat B\) separately.
Stimulus generation and comparison follow Figure~\ref{fig:method-overview}(c)
and Appendix~\ref{app:bs-generation}, which describes baseline-first search
and continuous workloads. We separately report the testbench's coverage
of the agent's own RTL.

\begin{figure}[!t]
\centering
\resizebox{\textwidth}{!}{
\begingroup
\definecolor{wfEnv}{HTML}{5A7082}%
\definecolor{wfRtl}{HTML}{38434C}%
\definecolor{wfObs}{HTML}{7A858E}%
\begin{tikzpicture}[
  x=1cm, y=1cm, text=black!82,
  font=\normalfont\footnotesize,
  heading/.style={font=\normalfont\footnotesize\bfseries},
  small/.style={font=\normalfont\footnotesize},
  block/.style={draw=wfEnv!85, fill=white, line width=0.6pt},
  group/.style={draw=wfEnv!55, fill=wfEnv!3, line width=0.6pt},
  flow/.style={-{Latex[length=1.7mm]}, draw=black!65, line width=0.8pt},
  feedback/.style={-{Latex[length=1.6mm]}, draw=wfEnv,
    densely dashed, line width=0.7pt},
  wirelabel/.style={small, fill=white, inner sep=1.6pt, align=center}
]
\path[use as bounding box] (0,-0.40) rectangle (16,10.95);
\path[draw=black!35, fill=white, line width=0.6pt]
  (0.02,-0.38) rectangle (15.98,10.93);
\node[heading, anchor=west] at (0.70,10.67) {End-to-end workflow};
\draw[draw=black!18, line width=0.45pt]
  (0.18,10.45) -- (15.82,10.45);

\path[block] (0.75,8.90) rectangle (4.55,10.15);
\node[heading] at (2.65,9.91) {Task \((S,B^\star)\)};
\node[small] at (2.65,9.52) {Specification and interface};
\node[small] at (2.65,9.16) {Golden behavior \(B^\star\)};
\coordinate (wfTaskSpec) at (4.55,9.39);
\coordinate (wfTaskEval) at (2.65,8.90);
\coordinate (wfNewTaskIn) at (0.75,9.25);

\path[group] (5.10,8.95) rectangle (14.55,10.15);
\node[heading, anchor=west] at (5.35,9.94) {Multi-turn joint development};
\node[small, anchor=east] at (14.27,9.94)
  {Figure~\ref{fig:method-overview}a};
\node[block, draw=wfRtl, minimum width=1.60cm, minimum height=0.55cm,
  inner sep=2pt] (wfAgent) at (7.65,9.39) {Agent};
\node[block, small, minimum width=3.60cm, minimum height=0.55cm,
  inner sep=2pt] (wfTools) at (12.10,9.39) {Tools / local tests};
\draw[flow] (wfTaskSpec) -- (wfAgent.west)
  node[pos=0.12, above=2pt, small] {\(S\)};
\draw[flow] ([yshift=0.08cm]wfAgent.east) --
  ([yshift=0.08cm]wfTools.west);
\draw[feedback] ([yshift=-0.08cm]wfTools.west) --
  ([yshift=-0.08cm]wfAgent.east);

\node[block, draw=wfRtl, line width=0.85pt, align=center,
  minimum width=4.20cm, minimum height=0.78cm, inner sep=3pt]
  (wfSubmission) at (7.65,8.08)
  {\textbf{Submission}\\[2pt]\(\hat R,\;\hat B,\;\mathrm{TB}\)};
\draw[flow] (wfAgent.south) -- (wfSubmission.north);

\path[group] (4.05,4.78) rectangle (14.55,7.25);
\node[heading, anchor=west] at (4.35,6.95) {BEHAVE-Sim};
\node[small, anchor=east] at (14.27,6.95)
  {Figure~\ref{fig:method-overview}c};
\draw[flow] (wfSubmission.south) -- (7.65,7.25);
\draw[flow] (wfTaskEval) -- (2.65,6.12) -- (4.05,6.12);

\path[block, draw=wfObs] (4.35,5.68) rectangle (8.60,6.55);
\node[heading] at (6.475,6.12) {Stimulus generation};
\path[block, draw=wfObs] (9.25,5.68) rectangle (14.25,6.55);
\node[heading] at (11.75,6.30) {Execute and compare};
\node[small] at (11.75,5.94)
  {\(\hat R\) vs. \(B^\star\)\qquad\(\hat B\) vs. \(B^\star\)};
\draw[flow] (8.60,6.12) -- (9.25,6.12);
\path[block, draw=wfEnv!65] (4.35,4.98) rectangle (14.25,5.38);
\node[small] at (9.30,5.18)
  {\(\operatorname{score}_{\hat R},\;\operatorname{score}_{\hat B}\) and execution records};
\draw[flow] (11.75,5.68) -- (11.75,5.38);

\coordinate (wfRoute) at (7.65,4.35);
\draw[draw=black!65, line width=0.8pt] (7.65,4.98) -- (wfRoute);
\node[small, anchor=west] at (7.92,4.59) {By run type};
\draw[draw=black!65, line width=0.8pt] (2.50,4.35) -- (12.80,4.35);
\fill[black!65] (wfRoute) circle (1pt);

\node[block, heading, minimum width=3.40cm, minimum height=0.53cm,
  inner sep=2pt] (wfReport) at (2.50,2.985) {Benchmark report};
\node[block, heading, minimum width=4.20cm, minimum height=0.53cm,
  inner sep=2pt] (wfGaps) at (7.65,2.985) {Capability gaps};
\node[block, heading, minimum width=3.50cm, minimum height=0.53cm,
  inner sep=2pt] (wfReward) at (12.80,2.985) {Reward \(r\)};
\draw[flow] (2.50,4.35) -- (wfReport.north);
\draw[flow] (wfRoute) -- (wfGaps.north);
\draw[flow] (12.80,4.35) -- (wfReward.north);
\node[wirelabel] at (2.50,3.85) {Held-out\\evaluation};
\node[wirelabel] at (7.65,3.85) {Self-improvement};
\node[wirelabel] at (12.80,3.85) {Training rollouts};

\node[block, heading, minimum width=3.50cm, minimum height=0.55cm,
  inner sep=2pt] (wfTrain) at (12.80,1.975) {RL training};
\draw[flow] (wfReward.south) -- (wfTrain.north);
\draw[feedback] (wfTrain.east) -- (15.55,1.975)
  -- (15.55,8.72) -- (8.15,8.72) -- ([xshift=0.50cm]wfAgent.south);
\node[wirelabel, text=wfEnv] at (11.75,8.72) {Updated agent};

\path[group] (4.95,0.82) rectangle (10.35,2.23);
\node[heading] at (7.65,2.00) {Autonomous task acquisition};
\draw[flow] (wfGaps.south) -- (7.65,2.23);
\path[block] (5.17,1.03) rectangle (7.55,1.77);
\node[small, align=center] at (6.36,1.40) {Search code\\sources};
\path[block] (7.98,1.03) rectangle (10.13,1.77);
\node[small, align=center] at (9.055,1.40) {Construct and\\check tasks};
\draw[flow] (7.55,1.40) -- (7.98,1.40);
\draw[feedback] (9.055,1.03) -- (9.055,0.52)
  -- (0.34,0.52) -- (0.34,9.25) -- (wfNewTaskIn);
\node[wirelabel, text=wfEnv] at (5.20,0.52)
  {New training tasks \((S,B^\star)\)};

\draw[flow] (3.40,-0.10) -- (4.00,-0.10);
\node[small, anchor=west] at (4.13,-0.10) {Data flow};
\draw[feedback] (9.30,-0.10) -- (9.90,-0.10);
\node[small, anchor=west] at (10.03,-0.10) {Feedback / updates};
\end{tikzpicture}
\endgroup}
\caption{\textbf{End-to-end workflow.} The shared evaluator supports held-out
reporting, capability diagnosis for self-improvement, and reward
computation for training rollouts.}
\label{fig:end-to-end-workflow}
\end{figure}

\paragraph{Training and self-improvement.}
Training rollouts supply reward \(r\) for RL updates.
Devel\-opment-set feedback guides acquisition of new
\((S,B^\star)\) pairs for further
training, and the updated agent guides subsequent acquisition rounds.
The held-out set remains fixed for evaluation. Optional bounded analysis is
separate from scoring and rewards, as
described in Appendix~\ref{app:behave-guarantees}.

\subsection{Task and Artifact Example}
\label{app:behave-contracts}
\paragraph{Task.}
The MXFP4 block-quantization task accepts 32 BF16 encodings and returns a
shared E8M0 scale and 32 E2M1 codes packed into 16 bytes. The specification
defines scale selection, round-to-nearest-even quantization, saturation and
packing order. Its non-finite-input policy assigns scale code 255 and zero
data to any block containing NaN or infinity.

\paragraph{Behavior and RTL.}
Figures~\ref{fig:behavior-example} and~\ref{fig:rtl-example} show the behavior
model and one RTL implementation. The model selects the shared scale,
quantizes the inputs and packs the outputs in \texttt{process()}, without
retaining state across calls. The RTL registers each block and
its scale before quantization and output packing, forming a two-stage pipeline.
Both stages stall under output backpressure.

\begin{figure}[!t]
\begin{lstlisting}[style=papercode, language=Python]
class ai_ml_quant_mx_mxfp4_block_quant:
    def __init__(self):
        self.reset()

    def reset(self):
        pass

    def process(self, in_x0, ..., in_x31):
        xs = [in_x0, ..., in_x31]
        maximum = 0
        for i in range(32):
            magnitude = xs[i] & 32767
            if magnitude > maximum:
                maximum = magnitude
        scale = (maximum >> 7) - 2
        if scale < 0:
            scale = 0
        if maximum >= 32640:
            scale = 255
        q = [0] * 32
        for i in range(32):
            q[i] = quantize_one(xs[i], scale)
        return {
            "out_scale": scale,
            "out_q0": q[0] | (q[1] << 4),
            # Remaining output fields follow the same packing rule.
            "out_q15": q[30] | (q[31] << 4)
        }
\end{lstlisting}
\caption{\textbf{Behavior model structure and computation.}
Ellipses abbreviate repeated interface fields. The bit-exact element
quantizer \texttt{quantize\_one} is omitted.}
\label{fig:behavior-example}
\end{figure}

\paragraph{Interface alignment.}
Figure~\ref{fig:rtl-example} shows an RTL implementation.
Other designs may use different pipeline depths or resource sharing while respecting the
task's protocol and timing constraints. The same behavior model can check
these designs through protocol-defined input mapping and output matching.
\par

\begin{figure}[!t]
\begin{lstlisting}[style=papercode, language=Verilog,
  morekeywords={logic,always_comb,always_ff}]
logic valid_scale, advance;
logic [15:0] xs [0:31], xs_s1 [0:31];
logic [14:0] maximum;
logic [7:0] scale, scale_s1;
logic [3:0] q [0:31];

assign advance = !out_valid || out_ready;
assign in_ready = advance;

always_comb begin
    xs[0] = in_x0;
    // The remaining input lanes are wired identically.
    xs[31] = in_x31;

    maximum = 0;
    for (integer i = 0; i < 32; i = i + 1)
        if (xs[i][14:0] > maximum)
            maximum = xs[i][14:0];
    scale = maximum[14:7] < 2 ? 0 : maximum[14:7] - 8'd2;
    if (maximum >= 15'h7F80) scale = 255;
    for (integer i = 0; i < 32; i = i + 1)
        q[i] = quantize(xs_s1[i], scale_s1);
end

always_ff @(posedge clk or posedge rst) begin
    if (rst) begin
        valid_scale <= 0;
        out_valid <= 0;
        scale_s1 <= 0;
        for (integer i = 0; i < 32; i = i + 1) xs_s1[i] <= 0;
        out_scale <= 0;
        out_q0 <= 0;
        // out_q1 through out_q14 are also cleared.
        out_q15 <= 0;
    end else if (advance) begin
        valid_scale <= in_valid;
        out_valid <= valid_scale;
        if (in_valid) begin
            scale_s1 <= scale;
            for (integer i = 0; i < 32; i = i + 1)
                xs_s1[i] <= xs[i];
        end
        if (valid_scale) begin
            out_scale <= scale_s1;
            out_q0 <= {q[1], q[0]};
            // out_q1 through out_q14 pack the remaining pairs.
            out_q15 <= {q[31], q[30]};
        end
    end
end
\end{lstlisting}
\caption{\textbf{Two-stage pipelined RTL.}
Port declarations, repeated lane assignments and the bit-exact element
quantizer \texttt{quantize} are omitted.}
\label{fig:rtl-example}
\end{figure}

\subsection{Execution Architecture}
\label{app:behave-architecture}
\label{app:behavesim-architecture}

\paragraph{RTL execution.}
As shown in Figure~\ref{fig:artifact-execution-architecture}, \behavesim
executes \(\hat R\) with an input driver, an output receiver, clock/reset
control, and a task-declared external-memory model when needed. The stimulus
supplies input transactions and schedules their presentation, output readiness,
and clock/reset events. Passive interface observations feed task-defined
input/output mapping and protocol/timing monitors. \behavesim records logical
I/O and collects goal hits from monitors and RTL instrumentation.
\par

\paragraph{Behavior execution.}
For \(\hat B\), logical inputs and task-visible events directly drive model
invocations. State persists between invocations, with resets applied as
declared by the task. \behavesim records outputs and goal hits, while
\(B^\star\) runs with independent state on the same input history to compute
expected outputs. Behavior and RTL evaluation use the same output-matching
and scoring rules.

\begin{figure}[!t]
\centering
\resizebox{\textwidth}{!}{
\definecolor{beEnv}{HTML}{5A7082}%
\definecolor{beRtl}{HTML}{38434C}%
\definecolor{beObs}{HTML}{7A858E}%
\begin{tikzpicture}[
  x=1cm, y=1cm,
  font=\normalfont\fontsize{9}{11}\selectfont,
  ink/.style={text=black!82},
  heading/.style={ink, font=\normalfont\bfseries\fontsize{9}{11}\selectfont},
  small/.style={ink, font=\normalfont\fontsize{8}{9.5}\selectfont},
  block/.style={line width=0.6pt},
  environment/.style={block, draw=beEnv!85, fill=white},
  logic/.style={block, draw=beRtl, fill=white},
  check/.style={block, draw=beObs, fill=white},
  envwire/.style={-{Latex[length=1.7mm]}, draw=beEnv, line width=0.85pt},
  rtlwire/.style={-{Latex[length=1.7mm]}, draw=beRtl, line width=0.85pt},
  datawire/.style={-{Latex[length=1.7mm]}, draw=black!65, line width=0.75pt},
  observe/.style={draw=beObs, line width=0.65pt, densely dashed},
  observation/.style={observe, -{Latex[length=1.6mm]}},
  config/.style={-{Latex[length=1.4mm]}, draw=beEnv, line width=0.55pt},
  wirelabel/.style={small, fill=white, inner sep=1.5pt},
  pinlabel/.style={font=\ttfamily\fontsize{8}{9.5}\selectfont,
    fill=white, inner sep=1.5pt}
]
\path[use as bounding box] (0,0) rectangle (16,8.55);
\path[draw=black!35, fill=white, line width=0.6pt]
  (0.02,0.02) rectangle (15.98,8.53);

\begin{scope}[yshift=-3cm]
\path[fill=beEnv!3] (0.18,5.47) rectangle (15.82,11.02);
\node[anchor=west, heading] at (0.45,11.27) {BEHAVE-Sim};
\draw[draw=black!18, line width=0.45pt] (0.18,11.02) -- (15.82,11.02);
\node[heading, anchor=east, align=right] at (15.40,10.50)
  {BEHAVE-Sim environment};

\path[environment] (0.60,9.95) rectangle (3.20,10.78);
\node[heading] at (1.90,10.50) {Stimulus \(\xi\)};
\node[small] at (1.90,10.18) {inputs and schedule};
\path[environment] (6.35,10.05) rectangle (10.20,10.78);
\node[heading] at (8.275,10.50) {Clock / reset control};
\draw[draw=beEnv, line width=0.7pt]
  (7.23,10.12) -- (7.48,10.12) -- (7.48,10.27) -- (7.73,10.27)
  -- (7.73,10.12) -- (7.98,10.12) -- (7.98,10.27) -- (8.23,10.27);
\draw[draw=beEnv, line width=0.7pt]
  (8.67,10.27) -- (9.00,10.27) -- (9.00,10.12) -- (9.34,10.12);
\draw[config] (3.20,10.42) -- (6.35,10.42)
  node[midway, above=2pt, small] {schedule};
\draw[config] (1.90,9.95) -- (1.90,9.48);
\draw[config] (0.60,10.35) -- (0.35,10.35) -- (0.35,6.85) -- (0.60,6.85);

\path[environment] (0.60,7.95) rectangle (3.20,9.48);
\node[heading] at (1.90,9.16) {Input driver};
\foreach \x/\token in {0.96/3,1.52/2,2.08/1} {
  \path[draw=beEnv!70, fill=white, line width=0.45pt]
    (\x,8.36) rectangle ++(0.43,0.40);
  \node[small] at (\x+0.215,8.56) {\(u_{\token}\)};
}
\draw[config] (2.63,8.56) -- (2.99,8.56);

\path[environment] (0.60,6.13) rectangle (3.20,7.42);
\node[heading] at (1.90,7.11) {Output receiver};
\draw[draw=beEnv, line width=0.75pt]
  (0.95,6.75) -- (1.24,6.75) -- (1.24,6.56) -- (1.73,6.56)
  -- (1.73,6.75) -- (2.23,6.75) -- (2.23,6.56) -- (2.78,6.56);
\node[small] at (1.90,6.34) {backpressure};

\path[logic, fill=beEnv!5, line width=1.05pt]
  (6.55,6.22) rectangle (10.00,9.33);
\foreach \y in {8.90,8.22,7.08,6.42}
  \path[draw=beRtl, fill=white, line width=0.65pt]
    (6.50,\y-0.05) rectangle (6.60,\y+0.05);
\foreach \y in {8.90,8.22}
  \path[draw=beRtl, fill=white, line width=0.65pt]
    (9.95,\y-0.05) rectangle (10.05,\y+0.05);
\node[heading] at (8.275,9.00) {Submitted RTL \(\hat R\)};
\foreach \y in {7.61,7.87,8.13}
  \path[draw=beRtl!35, fill=white, line width=0.5pt]
    (6.98,\y) rectangle ++(0.78,0.16);
\path[draw=beRtl!35, fill=white, line width=0.5pt]
  (8.68,8.37) -- (9.35,8.37) -- (9.57,7.95)
  -- (9.35,7.53) -- (8.68,7.53) -- (8.90,7.95) -- cycle;
\draw[datawire, draw=beRtl!35, line width=0.5pt]
  (7.76,7.95) -- (8.88,7.95);
\draw[datawire, draw=beRtl!35, line width=0.5pt]
  (9.57,7.95) -- (9.76,7.95) -- (9.76,7.18)
  -- (7.37,7.18) -- (7.37,7.61);
\draw[envwire] (8.275,10.05) -- (8.275,9.33);

\foreach \lo/\hi in {8.10/9.03,6.30/7.21}
  \path[observe, rounded corners=1pt] (3.40,\lo) rectangle (3.72,\hi);
\path[observe, rounded corners=1pt] (10.45,8.10) rectangle (10.77,9.03);
\path[observe, rounded corners=1pt] (8.18,9.60) rectangle (8.37,9.92);
\draw[observe] (3.72,8.56) -- (3.94,8.56) -- (3.94,5.64);
\draw[observe] (3.72,6.75) -- (3.94,6.75);
\draw[observe] (10.45,8.56) -- (10.23,8.56);
\draw[observe] (8.37,9.76) -- (10.23,9.76) -- (10.23,5.64);
\draw[observe] (3.94,5.64) -- (13.15,5.64);
\foreach \x/\y in {3.94/6.75,10.23/8.56,10.23/5.64}
  \fill[beObs] (\x,\y) circle (0.9pt);

\draw[envwire] (3.20,8.90) -- (6.55,8.90)
  node[midway, above=2pt, pinlabel] {in\_valid, in\_data};
\draw[rtlwire] (6.55,8.22) -- (3.20,8.22)
  node[midway, above=2pt, pinlabel] {in\_ready};
\draw[rtlwire] (6.55,7.08) -- (3.20,7.08)
  node[midway, above=2pt, pinlabel] {out\_valid, out\_data};
\draw[envwire] (3.20,6.42) -- (6.55,6.42)
  node[midway, above=2pt, pinlabel] {out\_ready};

\path[environment] (12.60,7.40) rectangle (15.40,9.48);
\node[heading, align=center] at (14.00,9.08) {External memory\\model};
\foreach \x in {13.13,13.57,14.01,14.45}
  \foreach \y in {7.66,8.00,8.34}
    \path[draw=beEnv!50, fill=beEnv!4, line width=0.4pt]
      (\x,\y) rectangle ++(0.34,0.22);
\draw[rtlwire] (10.00,8.90) -- (12.60,8.90)
  node[midway, above=2pt, wirelabel] {request};
\draw[envwire] (12.60,8.22) -- (10.00,8.22)
  node[midway, above=2pt, wirelabel] {response};

\coordinate (rtlInternalTap) at (8.825,6.22);
\coordinate (mappingTap) at (3.94,5.64);
\coordinate (monitorTap) at (13.15,5.64);
\end{scope}

\node[small, fill=white, inner sep=1pt] at (6.10,2.82) {Interface observations};
\path[check] (0.60,1.61) rectangle (6.85,2.22);
\node[heading] at (3.725,1.915) {Input/output mapping};
\draw[observation] (mappingTap) -- (3.94,2.22);
\path[check] (10.90,1.61) rectangle (15.40,2.22);
\node[heading] at (13.15,1.915) {Protocol / timing monitors};
\draw[observation] (monitorTap) -- (13.15,2.22);

\path[block, draw=beObs!75, fill=beEnv!3]
  (0.60,0.63) rectangle (15.40,1.17);
\draw[draw=beObs!45, line width=0.45pt]
  (3.80,0.63) -- (3.80,1.17) (7.70,0.63) -- (7.70,1.17);
\node[heading] at (2.20,0.90) {Execution records};
\node[small] at (5.75,0.90) {Logical I/O};
\node[small] at (11.55,0.90) {Goal hits};
\draw[datawire] (5.20,1.61) -- (5.20,1.17);
\draw[datawire] (13.15,1.61) -- (13.15,1.17);
\draw[observation, preaction={draw=white,line width=2.3pt}]
  (rtlInternalTap) -- (8.825,1.17);
\node[small, anchor=west, align=left] at (9.00,1.915) {Internal\\goal hits};

\draw[datawire] (2.50,0.25) -- (3.10,0.25);
\node[small, anchor=west] at (3.23,0.25) {Signal / record flow};
\draw[observation] (9.20,0.25) -- (9.80,0.25);
\node[small, anchor=west] at (9.93,0.25) {Passive observation};
\end{tikzpicture}}
\caption{\textbf{RTL execution environment in \behavesim.}
Expanded view of the RTL execution in Figure~\ref{fig:method-overview}(c),
using ready/valid interfaces and an optional task-declared external-memory
model with a schematic request/response interface. Task-defined mapping and
monitors record logical I/O, task-visible events, and goal hits. Internal
goal hits are collected by instrumentation.}
\label{fig:artifact-execution-architecture}
\end{figure}

\subsection{Supported Features and Limitations}
\label{app:behave-support}

\paragraph{Interfaces and protocols.}
Table~\ref{tab:interface-capabilities} summarizes interface capabilities and
representative protocols. Each task specifies its protocol configurations and
may combine capabilities across its interfaces.
\par

\begin{table}[!t]
\centering
\renewcommand{\arraystretch}{1.08}
\setlength{\tabcolsep}{6pt}
\caption{\textbf{Interface capabilities and representative protocols.}}
\label{tab:interface-capabilities}
\begin{tabular}{@{}
  l
  >{\raggedright\arraybackslash}p{5.6cm}
  l@{}}
\toprule
\textbf{Capability} & \textbf{Interaction requirements} &
\textbf{Typical interfaces} \\
\midrule
Sampling and flow control &
Sampling events, transfer detection and backpressure &
\makecell[tl]{Plain, valid-only and\\ready/valid interfaces} \\
\addlinespace
Framing and byte validity &
Frame boundaries, valid bytes and sideband fields &
AXI4-Stream~\citep{arm2010axistream} \\
\addlinespace
Phased transfers &
Setup, access, wait states and completion &
APB~\citep{arm2023apb} \\
\addlinespace
Multi-channel assembly &
Independently arriving address and data &
AXI4-Lite~\citep{arm2020axi} \\
\addlinespace
Outstanding requests &
Pending-request limits and response ordering &
\makecell[tl]{Memory request/response\\interfaces} \\
\addlinespace
\makecell[tl]{Invocation and\\completion} &
Start, busy, completion\newline and result consumption &
Start/done interfaces \\
\addlinespace
\makecell[tl]{Multi-interface\\composition} &
Mixed protocols and cross-interface ordering &
\makecell[tl]{AXI4-Lite configuration\\with AXI4-Stream data} \\
\bottomrule
\end{tabular}
\end{table}

\paragraph{Transaction mapping.}
The task defines how RTL interface events form logical inputs to the behavior
model and how observed outputs are matched with its predictions. A logical
operation may involve one transfer or several transfers. This mapping is fixed
before evaluation.

\paragraph{Interface composition.}
A task may combine different interfaces, such as configuration and data
interfaces. It specifies when configuration changes take effect and how
simultaneous interactions are handled, so that the behavior model captures
their combined effect.
\par

\paragraph{Capability separation.}
Table~\ref{tab:behave-support} distinguishes the features
supported by evaluation and replay, goal-guided stimulus generation and exact
bounded analysis. Generation and exact analysis require
their own encodings of the selected protocol and transaction mapping.
Goal-guided generation may extend the evaluation stimulus
pool, whose concrete execution records determine the artifact score. Exact bounded analysis is
optional, runs after scoring and changes neither the evaluation pool nor the
score.

\begin{table*}[!tp]
\centering
\renewcommand{\arraystretch}{1.10}
\setlength{\tabcolsep}{4pt}
\caption{\textbf{\behavesim features and limitations.} Rows list artifact
features, and the remaining columns report support for evaluation and replay,
goal-guided generation and exact bounded analysis.}
\label{tab:behave-support}
\resizebox{\textwidth}{!}{%
\begin{tabular}{@{}
  >{\raggedright\arraybackslash}p{2.3cm}
  >{\raggedright\arraybackslash}p{5.3cm}
  >{\raggedright\arraybackslash}p{5.3cm}
  >{\raggedright\arraybackslash}p{5.3cm}@{}}
\toprule
\textbf{Feature} & \textbf{Evaluation and replay} &
\textbf{Goal-guided generation} & \textbf{Exact bounded analysis} \\
\midrule
Behavior models &
Behavior models use a restricted Python subset. I/O values may be Boolean,
bounded integers, fixed-size arrays, records or enumerations. &
The \texttt{process()} implementation must have statically proved finite
bounds. &
Every analyzed operation and loop must have a finite exact encoding. \\
\addlinespace
RTL designs &
Fixed-width SystemVerilog designs that pass structural checks execute in
isolated Verilator. Native source-event observation does not require a
statically proved loop trip count. &
Each target goal requires a finite symbolic query relating selected input
fields to whether that goal is reached. &
A faithful encoding must cover the complete RTL transition relation within the
analysis bound. \\
\addlinespace
Protocols &
Behavior uses logical transactions. RTL supports plain,
valid-only, ready/valid and start/done interfaces, task-configured APB,
AXI4-Lite and framed AXI4-Stream subsets, and bounded request/response. &
Requires a finite symbolic query covering the selected
protocol and transaction mapping. Generation varies logical input fields under
fixed protocol rules, transaction order and environment schedule. &
Requires an exact encoding of the selected protocol,
transaction mapping and monitors. Schedules remain fixed except for
single-clock ready/valid with one input and one output. \\
\addlinespace
Endpoints and clock domains &
Replay supports multiple endpoints and clock domains for
Behavior and RTL. A task's RTL interfaces may use different protocols. &
Behavior generation supports multiple endpoints within one clock domain. RTL
generation with multiple endpoints or clocks requires a fixed global schedule. &
Behavior analysis supports multiple endpoints. Multi-clock Behavior and
multiple-endpoint or multi-clock RTL require all analyzed stimuli to share one
complete global clock schedule. \\
\addlinespace
Reset &
Replay follows the reset events in each stimulus. Reset during traffic requires
the task to define how pending transactions are handled. &
Goal-guided generation does not add or move reset events. &
Reset timing remains fixed unless it is explicitly represented as a symbolic
schedule choice. \\
\addlinespace
State and internal memory &
Replay supports persistent Behavior state, RTL registers and writable internal
memories. &
Behavior state must be finite and initialized. RTL goal-guided generation does
not support writable internal memories. &
All analyzed state must have a finite exact encoding. RTL exact analysis does
not support writable internal memories. \\
\addlinespace
External memory &
Replay supports a task-defined external-memory model for RTL and single-domain
Behavior models. &
Goal-guided generation is unavailable for artifacts that access external
memory. &
Exact analysis supports external memory only for RTL, using a fixed finite
memory model. \\
\addlinespace
Goals and monitors &
Replay records reached coverage goals and detected safety violations. For RTL,
monitor violations are recorded as safety-goal hits. &
A goal is targeted only when a finite symbolic query can propose inputs that
reach it. &
Every catalogued goal, including monitor violations, must have an exact
reachability encoding. \\
\bottomrule
\end{tabular}
}
\end{table*}

\paragraph{Unsupported and unresolved cases.}
If a symbolic capability cannot represent an artifact, goal or schedule, it
refuses without affecting evaluation and replay. A generation query
finding no candidate adds no stimulus and proves nothing about reachability.
A timeout or unknown result in exact analysis leaves the corresponding goal
unresolved. None of these outcomes changes a completed artifact score. An
artifact failure during concrete evaluation replay remains a scored evaluation failure.

\subsection{Transaction Mapping and Output Matching}
\label{app:bs-setting}

\paragraph{Overview.}
\behavesim evaluates \(\hat B\) and \(\hat R\) separately against \(B^\star\) on stimulus sets. For each stimulus, it records which inputs the target accepts under each schedule entry, obtains the expected output histories from \(B^\star\), and matches them against the target's observed output histories. This subsection formalizes stimuli, accepted inputs, output histories, and matching.

\begin{definition}[Transaction interface]
\label{def:txn-interface}
An \emph{endpoint} \(c\) of an artifact \(A\) is a named input or output with
a nonempty finite type \(T_c\). A \emph{transaction} on \(c\) is a value
\(x\in T_c\) exchanged between \(A\) and its environment. Transactions on each
endpoint form an ordered stream. The task declares finite families of input
endpoints \(C^{\mathrm{in}}\) and output endpoints \(C^{\mathrm{out}}\), which
form the logical transaction interface.
\end{definition}

\begin{definition}[Stimulus]
\label{def:stimulus}
For each input endpoint \(c\), let
\[
  \mathbf i_c=(x_{c,0},\ldots,x_{c,k_c-1}) \in T_c^{k_c}
\]
be an ordered queue of \(k_c\) transactions, and write
\(\mathbf i=(\mathbf i_c)_{c\in C^{\mathrm{in}}}\) for the queue family.
A \emph{stimulus} is a pair \(\xi=(\mathbf i,\sigma)\), where
\[
  \sigma=(\sigma_t)_{0\le t<m}
\]
is a finite schedule of length \(m\). Each entry
\(\sigma_t\) specifies clock and reset events and an environment action.
The schedule controls queue presentation, allowing simultaneous presentation
across endpoints while preserving each queue's order.
A runtime request supplies \(\mathbf i\) and an execution policy,
recording \(\sigma\) during execution. Pending inputs are offered until accepted.
After all inputs are accepted and expected outputs collected, a finite
observation tail follows. Task-declared timing requirements are checked
separately. Resource exhaustion alone is not a task violation.
\end{definition}

\begin{definition}[Accepted input trace]
\label{def:accepted-input-trace}
Fix \(A\in\{\hat B,\hat R\}\) and a stimulus \(\xi=(\mathbf i,\sigma)\)
with \(|\sigma|=m\). The index \(t\) identifies a schedule entry, not a cycle
count for any particular clock domain. For \(0\le t<m\) and
\(c\in C^{\mathrm{in}}\), let
\(\bot\) be a distinguished symbol outside every transaction type and let
\[
  a_{t,c}^A(\xi)\in T_c\cup\{\bot\}
\]
be the transaction from \(\mathbf i_c\) that \(A\) accepts on endpoint \(c\) under \(\sigma_t\), or \(\bot\) if it accepts none. The \emph{accepted input batch} under \(\sigma_t\) is
\[
  a_t^A(\xi)
  :=
  \bigl(a_{t,c}^A(\xi)\bigr)_{c\in C^{\mathrm{in}}}
  \in
  \prod_{c\in C^{\mathrm{in}}}
  \bigl(T_c\cup\{\bot\}\bigr).
\]
The \emph{accepted input trace} is
\[
  \mathbf a_A(\xi)
  =\bigl(a_0^A(\xi),\ldots,a_{m-1}^A(\xi)\bigr).
\]
Thus one batch corresponds to one schedule entry. A batch may be all-\(\bot\) when no endpoint accepts a transaction, and several non-\(\bot\) entries in one batch are accepted together.
For \(A=\hat B\), the Behavior interface has no input handshake, so every presented input batch is accepted and placed at its scheduled entry in \(\mathbf a_{\hat B}(\xi)\). For \(A=\hat R\), the non-\(\bot\) entries of \(\mathbf a_{\hat R}(\xi)\) are the acceptance events reported by its declared protocol adapter.
\end{definition}

\begin{definition}[Output histories]
\label{def:output-histories}
For each output endpoint \(c\), let \(\ell_{A,c}(\xi)\) be the number of output transactions observed on \(c\) under \(\xi\), and write
\[
  o_{A,c}(\xi)
  =
  \bigl(
    o_{A,c,0}(\xi),
    \ldots,
    o_{A,c,\ell_{A,c}(\xi)-1}(\xi)
  \bigr)
  \in T_c^{\ell_{A,c}(\xi)}.
\]
Collect these histories as
\[
  \mathbf o_A(\xi)
  =\bigl(o_{A,c}(\xi)\bigr)_{c\in C^{\mathrm{out}}}.
\]
Let \(\pi\) be the task-defined mapping from an execution
schedule and an accepted-input trace to the input history supplied to
\(B^\star\). Thus, \(\pi(\sigma,\mathbf a)\) contains the accepted
transactions, their declared ordering, and task-visible events such as resets
and external-memory responses, while omitting functionally irrelevant
schedule details.
The corresponding expected histories are
\[
  \mathbf o_A^\star(\xi)
  =B^\star\bigl(\pi(\sigma,\mathbf a_A(\xi))\bigr),
  \qquad
  o_{A,c}^\star(\xi)=\bigl(\mathbf o_A^\star(\xi)\bigr)_c.
\]
The task requires \(B^\star\) to determine every expected history completely
and deterministically, including its length and every
compared field. Behavior-model histories are observed directly, while RTL histories are
reconstructed from the emission events reported by its protocol adapter.
\end{definition}

\paragraph{Matching outputs.}
For each output endpoint \(c\), the observed history \(o_{A,c}(\xi)\) is matched against the expected history \(o^\star_{A,c}(\xi)\) under a task-declared policy. By default, transactions are paired in order. If the task permits reordered completion, it may declare an identifier that uniquely distinguishes outstanding transactions, and transactions with the same identifier are paired. An endpoint without such an identifier may instead use content-based matching under Assumption~\ref{asm:content-key}. Each unpaired expected or observed transaction and each paired transaction with unequal fields contributes one mismatch. For \(A\in\{\hat B,\hat R\}\), let \(n^A_{\mathrm{mis}}(\xi)\) be the total number of mismatches across output endpoints.

\begin{assumption}[Validity of content-based matching]
\label{asm:content-key}
For every output endpoint compared by content, output order carries no task-defined meaning. An observed history satisfies the task's output requirement exactly when it contains the same multiset of transaction values as the expected history.
\end{assumption}

\begin{definition}[Evaluation counts]
For \(A\in\{\hat B,\hat R\}\), let \(n^A_{\mathrm{txn}}(\xi)\) denote the number of input transactions that \(A\) accepts under \(\xi\):
\[
  n^A_{\mathrm{txn}}(\xi)
  = \sum_{t=0}^{|\mathbf a_A(\xi)|-1}
      \sum_{c\in C^{\mathrm{in}}}
      \mathbf 1\!\left[a^A_{t,c}(\xi)\neq\bot\right].
\]
For a nonempty finite stimulus set \(\Xi\), let
\[
  n^A_{\mathrm{txn}}(\Xi)
  =\sum_{\xi\in\Xi}n^A_{\mathrm{txn}}(\xi),
  \qquad
  n^A_{\mathrm{mis}}(\Xi)
  =\sum_{\xi\in\Xi}n^A_{\mathrm{mis}}(\xi).
\]
The accepted-transaction count is activity telemetry rather than a correctness
condition.
\end{definition}

\subsection{Goals, Coverage and Scoring}
\label{app:bs-goals}

\paragraph{Overview.}
Output matching checks whether an artifact agrees with \(B^\star\) on the generated stimuli. That comparison concerns only the observed outputs and does not characterize what occurs during those executions. We therefore examine which execution cases the stimuli exercise and which bad events they trigger. This subsection defines and classifies those cases and events in a finite catalogue.

\begin{definition}[Goals]
\label{def:goals}
For each target \(T\in\{B^\star,\hat B,\hat R\}\), let \(\Gamma_T\) be a finite catalogue of named events fixed before execution. A goal is \emph{declared} when it is added to this catalogue, whether it is derived automatically or supplied explicitly by the task. An execution of \(T\) \emph{hits} a goal \(\gamma\in\Gamma_T\) exactly when the event named by \(\gamma\) occurs during that execution. Every goal belongs to exactly one class:
\[
  \Gamma_T
  =
  \Gamma_T^{\mathrm{cov}}\cup\Gamma_T^{\mathrm{safe}},
  \qquad
  \Gamma_T^{\mathrm{cov}}\cap\Gamma_T^{\mathrm{safe}}
  =
  \emptyset.
\]
A coverage goal names an execution case that testing should exercise. A safety goal names a bad event that should not occur, either a designated artifact failure or a violation detected by an attached monitor. The catalogue of \(B^\star\) is used only to guide stimulus generation. The catalogues of \(\hat B\) and \(\hat R\) are evaluated and reported separately.
\end{definition}

\begin{figure}[!htbp]
\centering
\footnotesize
\resizebox{\linewidth}{!}{%
\begin{tikzpicture}[
  taxbox/.style={draw, rounded corners=2pt, align=left, inner sep=4pt,
                 text width=#1, font=\footnotesize},
  rootbox/.style={taxbox=1.6cm, align=center},
  classbox/.style={taxbox=2.5cm, align=center},
  familybox/.style={taxbox=7.3cm},
  taxline/.style={thick},
  taxarrow/.style={-{Latex[length=1.8mm]}, thick}
]
\node[rootbox] (goals) at (0,0) {Goals};
\node[classbox, right=1.6cm of goals, yshift=1.8cm]
  (coverage) {Coverage goals};
\node[classbox, right=1.6cm of goals, yshift=-2.45cm]
  (safety) {Safety goals};

\node[familybox, right=1.6cm of coverage, yshift=1.55cm] (structure)
  {\textbf{\textcolor{blue!70!black}{Program structure}}\quad
   location \(\cdot\) control choice \(\cdot\)
   expression \(\cdot\) iteration};
\node[familybox, right=1.6cm of coverage] (state)
  {\textbf{\textcolor{violet!75!black}{Value/state}}\quad
   value observation \(\cdot\) operation event \(\cdot\) state activity};
\node[familybox, right=1.6cm of coverage, yshift=-1.55cm] (scenario)
  {\textbf{\textcolor{orange!85!black}{Scenario/timing}}\quad
   sequence \(\cdot\) latency \(\cdot\) schedule/interleaving};
\node[familybox, right=1.6cm of safety, yshift=0.45cm] (failure)
  {\textbf{\textcolor{blue!70!black}{Artifact failure}}\quad
   partial operation \(\cdot\) internal error \(\cdot\) assertion};
\node[familybox, right=1.6cm of safety, yshift=-1.10cm] (obligation)
  {\textbf{\textcolor{violet!75!black}{Monitor obligation}}\quad
   protocol \(\cdot\) timing \(\cdot\) ordering \(\cdot\) reset \(\cdot\) memory};

\path (goals.east) -- ++(0.75cm,0) coordinate (rootfork);
\draw[taxline] (goals.east) -- (rootfork);
\draw[taxarrow] (rootfork) |- (coverage.west);
\draw[taxarrow] (rootfork) |- (safety.west);

\path (coverage.east) -- ++(0.75cm,0) coordinate (covfork);
\draw[taxline] (coverage.east) -- (covfork);
\draw[taxarrow] (covfork) |- (structure.west);
\draw[taxarrow] (covfork) |- (state.west);
\draw[taxarrow] (covfork) |- (scenario.west);

\path (safety.east) -- ++(0.75cm,0) coordinate (safefork);
\draw[taxline] (safety.east) -- (safefork);
\draw[taxarrow] (safefork) |- (failure.west);
\draw[taxarrow] (safefork) |- (obligation.west);
\end{tikzpicture}
}
\caption{\textbf{Goal taxonomy and catalogue sources.} Coverage goals name cases to exercise, while safety goals name bad events to avoid. Colors indicate default schema sources: \textcolor{blue!70!black}{blue} denotes automatic derivation from the artifact or protocol adapter, \textcolor{orange!85!black}{orange} task declarations, and \textcolor{violet!75!black}{purple} either source.}
\label{fig:goal-taxonomy}
\end{figure}

\paragraph{Goal families.}
Figure~\ref{fig:goal-taxonomy} groups goals by class and
family, with finite instance rules and hit conditions for each schema given
in Table~\ref{tab:goal-catalogue}. A finite monitor is a finite-state observer
that emits goals when a declared pattern completes or an obligation is
violated. Tasks and providers may add finite instances or define new schemas
by specifying their instance rules and hit conditions.

\begin{table}[!t]
\centering
\footnotesize
\renewcommand{\arraystretch}{1.20}
\caption{\textbf{Standard goal schemas.} Each row defines a finite instance rule and hit condition for Behavior or RTL.}
\label{tab:goal-catalogue}
\begin{tabular}{@{}
  >{\raggedright\arraybackslash}p{2.4cm}
  >{\raggedright\arraybackslash}p{3.2cm}
  >{\raggedright\arraybackslash}p{4.9cm}
  >{\raggedright\arraybackslash}p{4.7cm}@{}}
\toprule
\textbf{Family} & \textbf{Goal schema} &
\textbf{Eligible instances} & \textbf{Goal is hit when} \\
\midrule
Program structure & location: execution-site entry &
each admitted executable source site of the Behavior or RTL artifact &
execution enters any occurrence of that site \\
Program structure & control choice: binary branch or multiway selection
outcome & the true and false branches of each admitted binary decision, and
each alternative of an admitted multiway selection & execution takes that
branch or selects that alternative \\
Program structure & expression: Boolean-expression evaluation pattern &
each policy-generated partial Boolean assignment to the terms of an admitted
expression & the expression is evaluated, and every term named by the pattern
is evaluated with the specified value \\
Program structure & iteration: loop zero- or non-zero activation &
admitted run-time loops, including procedural RTL loops but excluding generate loops & on a
dynamic entry, control exits before body entry for the zero goal, or takes the
first body-entry edge for the non-zero goal \\
Value/state & value observation: state/value bin & a sampling point and a total Boolean
predicate over one or more values sampled there & the sampling point occurs and
the predicate is true \\
Value/state & operation event: semantic predicate & an admitted operation site
and a total predicate over its operands, result and semantic flags & the
operation occurs and the predicate is true \\
Value/state & state activity: transition or toggle & a sampling point and a
Boolean predicate over two consecutive observations, with one bit and one
direction fixed for a bit-toggle goal & at the second observation, the predicate holds for
the preceding and current values \\
Scenario/timing & sequence: event or transaction sequence & a task-declared finite
monitor over the artifact's transaction or event alphabet & the monitor emits
the named goal when an event completes the sequence \\
Scenario/timing & latency: transaction-latency bin & a task-declared event pair,
pairing rule, timebase and finite latency bins & on a matched end event, the
count from its paired start in the declared timebase lies in the named bin \\
Scenario/timing & schedule/interleaving: finite pattern & a finite monitor over
schedule-entry event sets and a declared finite order or simultaneity pattern &
the monitor emits the named goal when the observed entries complete the pattern \\
Artifact failure & failure transition & partial operations and internal
failures of the admitted Behavior semantics, or embedded errors and assertions
designated by the admitted RTL semantics & the designated failure transition
executes and emits the named goal \\
Monitor obligation & obligation violation & an attached finite obligation
monitor over observed Behavior or RTL execution, including task-supplied
assertions & the monitor emits the named violation goal when the obligation is
violated \\
\bottomrule
\end{tabular}
\end{table}

\paragraph{Catalogue construction.}
Before testing, \behavesim fixes a \emph{goal policy}
recording each schema's status, finite identifiers and hit
rules. Structural goals are identified at source sites, with scopes, instances
and call occurrences resolved by the compiler. Each enabled schema
contributes every prescribed instance, regardless of hits.
Optimization and loop unrolling do not redefine the catalogue.

\begin{definition}[Goal coverage]
\label{def:goal-coverage}
For a nonempty finite stimulus set \(\Xi\) for target \(T\), let \(\mathrm{Hit}_T(\Xi)\subseteq\Gamma_T\) be the union of goals observed in the concrete executions of \(T\) under all \(\xi\in\Xi\). For any nonempty set \(G\subseteq\Gamma_T^{\mathrm{cov}}\), define
\[
  \operatorname{cov}_T(\Xi;G)
  =
  \frac{|\mathrm{Hit}_T(\Xi)\cap G|}{|G|}.
\]
Choosing \(G\) as the goals of one family or schema gives its coverage. The safety goals observed in these executions are \(\mathrm{Hit}_T(\Xi)\cap\Gamma_T^{\mathrm{safe}}\).
\end{definition}

\begin{definition}[Artifact score]
\label{def:artifact-score}
For \(A\in\{\hat B,\hat R\}\), define its score on a
nonempty finite stimulus set \(\Xi\) with completed checks as
\[
  \operatorname{score}_A(\Xi)
  =
  \mathbf 1\!\left[
    n^A_{\mathrm{mis}}(\Xi)=0
    \land
    \mathrm{Hit}_A(\Xi)\cap\Gamma_A^{\mathrm{safe}}=\emptyset
  \right].
\]
\end{definition}

\paragraph{Training reward.}
The rollout reward is the mean of the two artifact scores,
\[
  r
  =
  \frac{
    \operatorname{score}_{\hat B}(\Xi_{\hat B})
    +
    \operatorname{score}_{\hat R}(\Xi_{\hat R})
  }{2}.
\]

\paragraph{Common coverage metrics.}
Definition~\ref{def:goal-coverage} recovers branch, expression and toggle coverage by taking \(G\) to be, respectively, the control-choice, Boolean-expression evaluation-pattern and directional bit-toggle goals. FSM state and transition coverage use state-bin and state-transition goals, covergroup coverage uses its declared value-observation goals, and a finite cover property uses a scenario/timing monitor goal. Separately, we report RTL line coverage: the fraction of executable RTL source lines exercised by the stimuli. It is coarser than location-goal coverage because all execution sites on the same source line count as one item.

\subsection{Goal-Guided Exploration and Evaluation}
\label{app:bs-generation}

\paragraph{Overview.}
Stimulus generation combines random testing with solver-guided
exploration~\citep{godefroid2005dart, sen2005cute, majumdar2007hybrid}.
For each \(A\in\{\hat B,\hat R\}\), evaluation has two
phases. First, isolated tests run from reset: the baseline is
executed before reference-directed and self-directed searches generate and
execute further stimuli. Second, compatible sequences form continuous
workloads that preserve state across sequence boundaries. Results from both
phases are aggregated using the score and coverage definitions in
Subsection~\ref{app:bs-goals}.

\paragraph{Search baseline.}
Let \(\mathcal S_A\) denote legal isolated tests with bounded
input queues and finite execution limits. The nonempty baseline
\(\Xi^A_{\mathrm{base}}\subseteq\mathcal S_A\) contains edge-case patterns
and seeded pseudorandom sequences. Search uses baseline and generated
stimuli as seeds for further exploration.

\paragraph{Search targets.}
For \(T\in\{B^\star,A\}\), \(T\to A\) denotes search on \(T\) to generate
stimuli for execution on \(A\). Let \(\Delta_{T\to A}\subseteq\Gamma_T\)
denote goals whose local encodings are enabled and represent
the corresponding catalogue hit rules.
Excluded goals remain in the catalogue and may still be hit during execution.

\begin{definition}[Artifact replay]
\label{def:artifact-replay}
Let \(\mathcal R_A\) be the type of raw execution records for \(A\). For a
generated artifact \(A\) and any legal finite test \(\xi\),
artifact replay is the map
\[
  \textsc{Replay}:\quad
  (A,\xi)
  \longmapsto
  \bigl(\mathbf a_A(\xi),\mathbf o_A(\xi),e_A(\xi)\bigr).
\]
For a runtime request, \textsc{Replay} follows its execution
policy and retains both the request and resolved stimulus in \(e_A(\xi)\).
Here, \(\xi=(\mathbf i,\sigma)\) uses the schedule recorded during execution.
\(e_A(\xi)\in\mathcal R_A\) records the sampled state, inputs, outputs and
goal hits together with a record-completeness flag, an artifact outcome and a
domain identifier. The flag states whether the full scheduled execution was
recorded. The outcome distinguishes normal completion from an artifact failure.
\(\mathbf a_A(\xi)\) and \(\mathbf o_A(\xi)\) are the accepted input trace and
output histories derived from that record as in
Subsection~\ref{app:bs-setting}. The concrete evaluation stores the
collection of execution records
\[
  \mathcal E_A
  :=\{(\xi,e_A(\xi))\mid \xi\in\Xi_A\}.
\]
\end{definition}

\paragraph{Execution feedback.}
Let \(\mathcal M_A\) store execution traces and goal hits for \(A\) and
\(B^\star\), together with the results of completed output comparisons and
monitor checks. Each search obtains goal hits by reusing a suitable execution
record or running the stimulus and saving its record.
We denote this operation by \(\operatorname{Observe}_{T\to A}(\xi;\mathcal M_A)\).
It returns observed goal hits \(J\subseteq\Gamma_T\) from a
complete execution of \(T\) or a trusted prefix ending in an artifact failure.
If neither is available, it returns \(\bot\).
Each direction maintains its own hit set \(H_{T\to A}\subseteq\Gamma_T\),
initialized from baseline hits and updated only by execution feedback.
The unhit searchable goals are
\[
  U_{T\to A}=\Delta_{T\to A}\setminus H_{T\to A}.
\]

\begin{definition}[Solver-guided provider]
\label{def:probe}
Fix \(A\in\{\hat B,\hat R\}\), the generated artifact to be evaluated, and \(\mathcal S_A\) as above. For \(T\in\{B^\star,A\}\), a \emph{solver-guided provider} is a triple
\[
  \mathsf{Prov}_{T\to A}
  =
  \bigl(\Delta_{T\to A},
        \operatorname{Select}_{T\to A},
        \operatorname{Probe}_{T\to A}\bigr).
\]
Given a seed \(\xi=(\mathbf i,\sigma)\in\mathcal S_A\) and
unhit targets \(U\subseteq\Delta_{T\to A}\),
\(\operatorname{Select}_{T\to A}(\xi,U)\) uses the seed's complete execution
record of \(T\) to construct a finite, deterministically ordered list of
solver requests. The list may be empty. Each request \(q\) determines a
nonempty target set
\[
  \emptyset\neq G(q)\subseteq U
\]
and a nonempty set \(F(q)\) of transaction-field positions in \(\mathbf i\)
that the query may change. Before each probe,
\(\textsc{Restrict}(q,U)\) removes any targets hit after \(q\) was selected,
without changing its input fields or execution context.
It returns \(\bot\) if no targets remain.
Let \(\mathcal Q_{T\to A}(\xi)\) denote the requests that
\(\operatorname{Select}_{T\to A}\) can produce for seed \(\xi\), including
versions with reduced but nonempty target sets.
The probe takes a seed \(\xi\in\mathcal S_A\) and a request
\(q\in\mathcal Q_{T\to A}(\xi)\).
Let \(u_F=(u_f)_{f\in F(q)}\) be symbolic replacement values for these
fields, and let \(\mathbf h_q=(h_\gamma)_{\gamma\in G(q)}\) contain one
Boolean hit bit for each target goal. The provider constructs a finite symbolic-execution relation
\[
  \mathcal H_{T\to A}(\xi,q,u_F,\mathbf h_q)
\]
in which \(h_\gamma\) is true exactly when the encoded local execution of \(T\) hits \(\gamma\). The probe asks the solver to satisfy
\[
  \mathsf{Query}_{T\to A}(\xi,q)
  :=
  \mathcal H_{T\to A}(\xi,q,u_F,\mathbf h_q)
  \;\wedge\;
  \bigvee_{\gamma\in G(q)} h_\gamma
\]
over \(u_F\), \(\mathbf h_q\) and any provider-specific auxiliary execution variables in \(\mathcal H_{T\to A}\).
Each local query fixes the recorded schedule \(\sigma\).
For a model \(\nu\), let \(\xi[F(q)\leftarrow\nu(u_F)]\) denote the test
obtained by replacing only the fields in \(F(q)\). Fixed-schedule tests retain
\(\sigma\). Runtime requests retain their execution policy, so their realized
schedules may differ. Then
\[
  \operatorname{Probe}_{T\to A}(\xi,q)
  =
  \begin{cases}
    \xi[F(q)\leftarrow\nu(u_F)],
      & \text{if }\nu\models \mathsf{Query}_{T\to A}(\xi,q)
        \text{ and is decodable},\\
    \bot,
      & \text{otherwise.}
  \end{cases}
\]
\end{definition}

\paragraph{Search budget.}
The baseline is executed and checked before the search clock
starts. Both searches share a wall-clock budget \(\beta\),
including provider preparation, selection, solver calls and candidate
execution. Reference-directed search runs
up to the budget midpoint, followed by self-directed search using the
remaining time. Baseline evaluation, completion of missing isolated
checks and continuous workloads use separate finite execution limits outside
the search budget.

\paragraph{Isolated exploration.}
Algorithm~\ref{alg:explore} initializes each direction's hit set from the
baseline records, then alternates solver-guided stimulus generation with
execution from reset, with scheduled backpressure where
supported. Only seeds with complete execution records are used
to select requests. Search stops when its targets or worklist are exhausted,
its provider is unavailable, or its deadline is reached, retaining the
candidates and records. Write
\[
  \Xi_{T\to A}
  =
  \textsc{Explore}
  \bigl(A,T,\Xi^A_{\mathrm{base}},\mathsf{Prov}_{T\to A},d;\mathcal M_A\bigr)
\]
for the new stimuli retained by that direction.
\par

\begin{algorithm}[!t]
\caption{\textsc{Explore}: Search from baseline execution records}
\label{alg:explore}
\KwIn{artifact \(A\), target \(T\in\{B^\star,A\}\), baseline \(\Xi^A_{\mathrm{base}}\subseteq\mathcal S_A\), provider \(\mathsf{Prov}_{T\to A}\), deadline \(d\), shared record store \(\mathcal M_A\)}
\KwOut{new candidate set \(\Xi_{T\to A}\subseteq\mathcal S_A\), with \(\mathcal M_A\) updated in place}
\tcp{On expiry, retain candidates and records already obtained.}
\(\mathrm{Seen} \gets \Xi^A_{\mathrm{base}}\)\;
\(\mathrm{Work} \gets \textsc{Queue}(\Xi^A_{\mathrm{base}})\)\;
\(\Xi_{T\to A} \gets \emptyset\)\;
\(H\gets\) baseline hits of \(T\) from complete records or trusted failure prefixes in \(\mathcal M_A\)\;
\(U\gets\Delta_{T\to A}\setminus H\)\;
\While{\(\mathrm{Work}\neq\emptyset\) and \(U\neq\emptyset\) and \(\textsc{Now}()<d\)}{
  \(\xi \gets \textsc{Pop}(\mathrm{Work})\)\;
  \If{no complete execution record of \(T\) for \(\xi\) in \(\mathcal M_A\)}{
    \textbf{continue}\;
  }
  \ForEach{\(q\in\operatorname{Select}_{T\to A}(\xi,U)\) while \(\textsc{Now}()<d\)}{
    \(q\gets\textsc{Restrict}(q,U)\)\;
    \If{\(q\neq\bot\)}{
      \(\xi' \gets \operatorname{Probe}_{T\to A}(\xi,q)\)\;
      \If{\(\xi' \neq \bot\) and \(\xi' \notin \mathrm{Seen}\)}{
        \(\mathrm{Seen} \gets \mathrm{Seen} \cup \{\xi'\}\)\;
        \(\textsc{Push}(\mathrm{Work},\xi')\)\;
        \(\Xi_{T\to A} \gets \Xi_{T\to A} \cup \{\xi'\}\)\;
        \(J\gets\operatorname{Observe}_{T\to A}(\xi';\mathcal M_A)\)\;
        \If{\(J\neq\bot\)}{\(H\gets H\cup J\)\;}
        \(U\gets\Delta_{T\to A}\setminus H\)\;
      }
    }
  }
}
\Return \(\Xi_{T\to A}\)\;
\end{algorithm}

\paragraph{Continuous workloads.}
After both searches, the isolated test pool is
\[
  \Xi_A^{\mathrm{iso}}=
    \Xi^A_{\mathrm{base}}\cup\Xi_{B^\star\to A}\cup\Xi_{A\to A}.
\]
For each \(\xi\in\Xi_A^{\mathrm{iso}}\),
\(\textsc{ExecuteAndCheck}(A,B^\star,\xi;\mathcal M_A)\) reuses records and
check results for the same artifacts, stimulus and evaluation configuration.
It performs only missing execution, output comparisons and monitor checks,
and stores the results in \(\mathcal M_A\). We then construct
\[
  \Xi_A^{\mathrm{cont}}=\textsc{Continuous}(\Xi_A^{\mathrm{iso}}).
\]
\(\textsc{Continuous}\) groups compatible sequences in a fixed order,
preserving the task's initial-state, interface and event requirements.
Sequences that cannot be combined are kept as isolated tests. Each workload
is evaluated as a new stimulus using \(\textsc{ExecuteAndCheck}\), with
\(A\) and \(B^\star\) initialized once and state preserved
between sequences. Expected outputs come from running \(B^\star\)
continuously, not concatenating isolated-test results.
By default, output receivers remain ready.

\paragraph{Result aggregation.}
The final evaluation pool is
\[
  \Xi_A=\Xi_A^{\mathrm{iso}}\cup\Xi_A^{\mathrm{cont}}.
\]
Completed records from both phases provide the output mismatches and
artifact goal hits used by the score and coverage definitions in
Subsection~\ref{app:bs-goals}. Algorithm~\ref{alg:twosided} gives the full
procedure, returning \(\Xi_A\), \(\operatorname{score}_A(\Xi_A)\) and
the execution records \(\mathcal E_A\).
The procedure below assumes completed checks.
Evaluation-budget interruptions are handled as described in
Appendix~\ref{app:exp-evaluation}.
\par

\begin{algorithm}[!t]
\caption{\textsc{GenerateAndEvaluate}: Two-sided search and two-phase evaluation}
\label{alg:twosided}
\KwIn{reference \(B^\star\), artifact \(A\), baseline \(\Xi^A_{\mathrm{base}}\), providers \(\mathsf{Prov}_{B^\star\to A}\) and \(\mathsf{Prov}_{A\to A}\), total search time \(\beta\geq0\)}
\KwOut{stimulus set \(\Xi_A\), score \(\operatorname{score}_A(\Xi_A)\), execution records \(\mathcal E_A\)}
\(\mathcal M_A\gets\emptyset\)\;
\tcp{Phase 1: evaluate the baseline, then start search timing.}
\ForEach{\(\xi\in\Xi^A_{\mathrm{base}}\)}{
  \(\textsc{ExecuteAndCheck}(A,B^\star,\xi;\mathcal M_A)\)\;
}
\(t_0\gets\textsc{Now}()\); \(t_{\mathrm{end}}\gets t_0+\beta\)\;
\(\Xi_{B^\star\to A} \gets \textsc{Explore}(A,B^\star,\Xi^A_{\mathrm{base}},\mathsf{Prov}_{B^\star\to A},t_0+\beta/2;\mathcal M_A)\)\;
\(\Xi_{A\to A} \gets \textsc{Explore}(A,A,\Xi^A_{\mathrm{base}},\mathsf{Prov}_{A\to A},t_{\mathrm{end}};\mathcal M_A)\)\;
\(\Xi_A^{\mathrm{iso}} \gets \Xi^A_{\mathrm{base}} \cup \Xi_{B^\star\to A} \cup \Xi_{A\to A}\)\;
\tcp{Complete isolated checks outside the search deadline.}
\ForEach{\(\xi \in \Xi_A^{\mathrm{iso}}\)}{
  \(\textsc{ExecuteAndCheck}(A,B^\star,\xi;\mathcal M_A)\)\;
}
\tcp{Phase 2: construct and evaluate continuous workloads.}
\(\Xi_A^{\mathrm{cont}}\gets\textsc{Continuous}(\Xi_A^{\mathrm{iso}})\)\;
\ForEach{\(\xi\in\Xi_A^{\mathrm{cont}}\)}{
  \(\textsc{ExecuteAndCheck}(A,B^\star,\xi;\mathcal M_A)\)\;
}
\tcp{Aggregate completed records from both phases.}
\(\Xi_A\gets\Xi_A^{\mathrm{iso}}\cup\Xi_A^{\mathrm{cont}}\)\;
\(\mathcal E_A\gets\{(\xi,e_A(\xi))\mid\xi\in\Xi_A\}\) from \(\mathcal M_A\)\;
\Return \(\bigl(\Xi_A,\operatorname{score}_A(\Xi_A),\mathcal E_A\bigr)\)\;
\end{algorithm}

\paragraph{Provider instantiations.}
Behavior and RTL instantiate \(\operatorname{Select}_{T\to A}\) and \(\mathcal H_{T\to A}\)
from Definition~\ref{def:probe}. They share the solve-and-update step using
Z3~\citep{demoura2008z3}.

\paragraph{Behavior provider.}
The Behavior provider handles \(B^\star\to\hat B\),
\(B^\star\to\hat R\) and \(\hat B\to\hat B\). For \(B^\star\to A\),
the provider extracts the accepted input trace
\(\mathbf a_A(\xi)\) from \(A\)'s execution record and collects goal hits
and local model states from \(B^\star\) running on
\(\pi(\sigma,\mathbf a_A(\xi))\). For \(\hat B\to\hat B\), the same
information comes directly from \(\hat B\). Local queries use these states and
retain the transaction fields' source positions in \(\mathbf i\).
In each equation, \(\Phi_q\) is the symbolic execution constraint over
replacement values \(u_F\), and \(\psi_{\gamma,q}\) is the Boolean condition
that the execution hits \(\gamma\in G(q)\). The hit relations are
\[
  \mathcal H_{B^\star\to A}(\xi,q,u_F,\mathbf h_q)
  \equiv
  \Phi_q^{B^\star\to A}(u_F)
  \;\wedge\;
  \bigwedge_{\gamma\in G(q)}
  \bigl(h_\gamma\leftrightarrow
        \psi_{\gamma,q}^{B^\star\to A}(u_F)\bigr),
  \qquad A\in\{\hat B,\hat R\},
\]
and
\[
  \mathcal H_{\hat B\to\hat B}(\xi,q,u_F,\mathbf h_q)
  \equiv
  \Phi_q^{\hat B\to\hat B}(u_F)
  \;\wedge\;
  \bigwedge_{\gamma\in G(q)}
  \bigl(h_\gamma\leftrightarrow
        \psi_{\gamma,q}^{\hat B\to\hat B}(u_F)\bigr).
\]

\paragraph{RTL provider.}
An enabled RTL provider handles \(\hat R\to\hat R\).
The seed's execution record supplies \((\mathbf i,\sigma)\) and, at each
schedule boundary \(k\), the state
\(s_k^{\mathrm{dyn}}=(r_k,\eta_k,z_k)\), comprising adapter bookkeeping,
RTL state and monitor state.
Each request \(q\) selects indices \(0\leq k\leq\ell<|\sigma|\), the replayed
state \(s_k^{\mathrm{dyn}}\) before \(\sigma_{k:\ell}\), target goals
\(\emptyset\neq G(q)\subseteq U\) and mutable fields
\(F(q)\) first presented in that segment. The query varies only \(F(q)\),
fixes the schedule and every other field, and requires the modified execution
prefix to reproduce \(s_k^{\mathrm{dyn}}\) at boundary \(k\).
The input queues \(\mathbf i\) are excluded from this state constraint.
Let
\(\Phi_q^{\hat R\to\hat R}(u_F)\) be the Yosys-generated RTL segment
constraint~\citep{wolf2013yosys}, composed with the protocol adapter
and finite monitors, with internal execution variables existentially
quantified. Let \(\psi_{\gamma,q}^{\hat R\to\hat R}(u_F)\)
encode the catalogue hit rule for \(\gamma\in G(q)\) over the
segment. For source goals, this requires a source-event encoding in addition
to the circuit constraint. The RTL hit relation is
\[
  \mathcal H_{\hat R\to\hat R}(\xi,q,u_F,\mathbf h_q)
  \equiv
  \Phi_q^{\hat R\to\hat R}(u_F)
  \;\wedge\;
  \bigwedge_{\gamma\in G(q)}
  \bigl(h_\gamma\leftrightarrow
        \psi_{\gamma,q}^{\hat R\to\hat R}(u_F)\bigr).
\]


\section{\behavesim: Formal Semantics and Bounded Guarantees}
\label{app:behave-guarantees}

\paragraph{Purpose.}
Appendix~\ref{app:behave-system} evaluates artifacts on a finite
stimulus pool. We add optional reachability analysis over a fixed
bounded domain. Given an exact Behavior or RTL instance, we use replay-audited
traces and solver queries to classify coverage and safety goals as
reached, bounded-unreachable or unresolved. These statuses supplement
evaluation without changing its stimulus set or score.

\paragraph{Overview.}
Subsections~\ref{app:bs-generic} through~\ref{app:bs-closure} define the common
formalism for bounded trace domains, finite representations, exact encodings,
replay audits and classification. Subsections~\ref{app:bs-signal}
through~\ref{app:bs-rtl} and Subsections~\ref{app:bs-ir}
through~\ref{app:bs-assumptions} construct the RTL and Behavior instances.
Subsection~\ref{app:bs-artifact-reporting} pairs each instance with its evaluated
pool and states the reported conclusions. The final subsection states the
trusted boundary, claim scope and conclusion. Figure~\ref{fig:formal-handoff}
summarizes these dependencies.

\begin{figure}[!t]
\centering
\footnotesize
\resizebox{\linewidth}{!}{%
\begin{tikzpicture}[
  box/.style={draw, rounded corners=2pt, align=center, inner sep=4pt,
              text width=#1, font=\footnotesize},
  stage/.style={box=5.0cm},
  small/.style={box=3.2cm},
  provider/.style={box=3.0cm},
  wide/.style={box=8.2cm},
  runbox/.style={box=2.0cm},
  label/.style={font=\scriptsize, fill=white, inner sep=1pt, align=center},
  flowline/.style={thick},
  flowarrow/.style={-{Latex[length=1.8mm]}, thick},
  looparrow/.style={-{Latex[length=1.8mm]}, thick, dashed}
]

\node[stage] (concrete) at (-3.8,0)
  {\textbf{Concrete evaluation}\\[-1pt]
   Algorithm~\ref{alg:twosided}\\[-1pt]\scriptsize
   Subsections~\ref{app:bs-setting}-\ref{app:bs-generation}};
\node[stage] (instance) at (3.8,0)
  {\textbf{Exact bounded instance}\\[-1pt]
   for artifact \(A\)\\[-1pt]\scriptsize
   Subsections~\ref{app:bs-signal}-\ref{app:bs-assumptions}};

\node[small] (score) at (-5.7,-1.85)
  {\textbf{Artifact score}\\[-1pt]
   \(\operatorname{score}_A(\Xi_A)\)\\[-1pt]\scriptsize
   Subsection~\ref{app:bs-goals}};
\node[small] (records) at (-1.9,-1.85)
  {\textbf{Execution records}\\[-1pt]
   \(\mathcal E_A\)\\[-1pt]\scriptsize
   Subsection~\ref{app:bs-generation}};
\node[provider] (behavior) at (2.1,-1.85)
  {\textbf{Behavior}\\[-1pt] \(D_M,\Gamma_M\)\\[-1pt]\scriptsize
   Subsections~\ref{app:bs-ir}-\ref{app:bs-assumptions}};
\node[provider] (rtl) at (5.5,-1.85)
  {\textbf{RTL}\\[-1pt] \(D_{\hat R},\Gamma_{\hat R}\)\\[-1pt]\scriptsize
   Subsections~\ref{app:bs-signal}-\ref{app:bs-rtl}};

\coordinate (concretefork) at (-3.8,-0.85);
\draw[flowline] (concrete.south) -- (concretefork);
\draw[flowarrow] (concretefork) -| (score.north);
\draw[flowarrow] (concretefork) -| (records.north);

\coordinate (instancefork) at (3.8,-0.85);
\draw[flowline] (instance.south) -- (instancefork);
\draw[flowarrow] (instancefork) -| (behavior.north);
\draw[flowarrow] (instancefork) -| (rtl.north);

\coordinate (instancejoin) at (3.8,-2.60);
\draw[flowline] (behavior.south) |- (instancejoin);
\draw[flowline] (rtl.south) |- (instancejoin);

\node[wide] (audit) at (0,-3.95)
  {\textbf{Replay audit and trace pool \(W\)}\\
   \(\textsc{Audit}_{D_A}\)\\[-1pt]\scriptsize
   Subsections~\ref{app:bs-closure} and~\ref{app:bs-artifact-reporting}};
\coordinate (auditjoin) at (0,-2.85);
\draw[flowline] (records.south) |- (auditjoin);
\draw[flowline] (instancejoin) |- (auditjoin);
\draw[flowarrow] (auditjoin) -- (audit.north);

\node[wide] (classifier) at (0,-5.85)
  {\textbf{Feasibility and exact bounded classification}\\
   Algorithm~\ref{alg:closure}\\[-1pt]\scriptsize
   Subsection~\ref{app:bs-closure}};
\node[runbox] (run) at (7.45,-5.85)
  {\textbf{Bounded run}\\[-1pt]$\textsc{Run}_{D_A}$\\[-1pt]
   \scriptsize Subsection~\ref{app:bs-closure}};
\draw[flowarrow] (audit) -- (classifier)
  node[midway, right=1pt, label] {audited pool \(W_0\)};
\draw[looparrow] (classifier.east) -- (run.west)
  node[midway, above=2pt, label, text width=1.7cm]
    {bounded-run\\input \((x_0,\mathbf u)\)};
\coordinate (runreturn) at (7.45,-3.95);
\draw[looparrow] (run.north) -- (runreturn) -- (audit.east);
\path (runreturn) -- (audit.east)
  node[midway, above=1pt, label] {raw record};

\node[wide] (status) at (0,-7.45)
  {\textbf{Bounded status}\\[-1pt]\scriptsize
   reached \quad $\cdot$ \quad bounded-unreachable \quad $\cdot$ \quad unresolved
   \\[-1pt]Subsection~\ref{app:bs-closure}};
\node[wide] (claims) at (0,-8.95)
  {\textbf{Conditional bounded conclusions}\\[-1pt]\scriptsize
   coverage goals resolved \quad $\cdot$ \quad safety goals unreachable
   \\[-1pt]Subsection~\ref{app:bs-artifact-reporting}};
\draw[flowarrow] (classifier) -- (status);
\draw[flowarrow] (status) -- (claims);

\end{tikzpicture}
}
\caption{\textbf{Concrete evaluation and bounded analysis.} Concrete
evaluation returns the artifact score and raw records. With an exact
instance, replay-audited traces initialize \(W_0\). The dashed arrows show the
validation path for a bounded-run input: artifact replay produces a raw
record, which must pass audit before it may update \(W\). This path is used for
feasibility initialization and every decoded solver witness.
Behavior statuses are mapped back to source goals before reporting.
Each bounded conclusion is reported only when its premises hold.
Classification does not change the artifact score.}
\label{fig:formal-handoff}
\end{figure}

\subsection{Trace Domains and Goal Reachability}
\label{app:bs-generic}

\paragraph{Overview.}
The analysis considers executions only up to a fixed bound and under fixed task
constraints. This subsection gives Behavior and RTL a common semantics for
those executions, defines when they reach a declared goal, and embeds declared
failures as terminal states.

\begin{definition}[Instrumented semantics]
\label{def:system}
An \emph{instrumented semantics} is
\[
  \mathcal M=(X,U,Y,I,\delta,\Gamma),
\]
where \(X\) is the state space, \(\emptyset\neq I\subseteq X\) is the
initial-state set, \(U\) and \(Y\) are the input and output sets, and
\(\Gamma\) is the full finite goal catalogue. Its total step
function is
\[
  \delta:X\times U\longrightarrow X\times Y\times 2^\Gamma.
\]
Writing \(\delta(x,u)=(x',y,H)\), a step from \(x\) under input \(u\)
moves to \(x'\), emits \(y\), and hits exactly the goals in \(H\). The Behavior
and RTL instances define separately what counts as one step. In both instances,
\(X\) also contains any finite adapter, harness and monitor state, and \(H\) records
all goal hits produced during the step. If early termination is represented,
the instantiation also designates terminal states
\(X^{\mathrm h}\subseteq X\) and a padding output \(\bot_Y\in Y\) such
that
\[
  \delta(x_{\mathrm h},u)
  =(x_{\mathrm h},\bot_Y,\emptyset)
  \qquad\text{for every }x_{\mathrm h}\in X^{\mathrm h},\ u\in U.
\]
Early termination is therefore padded to the horizon by stuttering steps.
\end{definition}

\begin{definition}[Fallible step and absorption]
\label{def:absorb}
Let \(X,U,Y\) and \(\mathit{Err}\) be sets, let \(\Gamma\) be finite, and fix
\(\emptyset\neq I^0\subseteq X\). A \emph{fallible step} is a total
function
\[
  \delta^0:X\times U\longrightarrow
  \bigl(X\times Y\times2^\Gamma\bigr)
  \cup
  \{\mathsf{Err}(\varepsilon,H)
      \mid \varepsilon\in\mathit{Err},\ H\subseteq\Gamma\}.
\]
Its result is either a completed step \((x',y,H)\) or a failure
\(\mathsf{Err}(\varepsilon,H)\). Regard the errors as terminal states disjoint
from \(X\), renaming them if necessary. For a fresh \(\bot_Y\notin Y\), define
\[
\begin{aligned}
  \bar X&=X\cup\mathit{Err},\\
  \bar Y&=Y\cup\{\bot_Y\}.
\end{aligned}
\]
The \emph{absorption} of \(\delta^0\) at \(I^0\) is
\[
  \mathrm{Abs}_{I^0}(\delta^0)
  =(\bar X,U,\bar Y,I^0,\bar\delta,\Gamma),
\]
where
\[
  \bar\delta(x,u)=
  \begin{cases}
    (x',y,H)
      & \delta^0(x,u)=(x',y,H),\\
    (\varepsilon,\bot_Y,H)
      & \delta^0(x,u)=\mathsf{Err}(\varepsilon,H),
  \end{cases}
\]
and
\[
  \bar\delta(\varepsilon,u)
  =(\varepsilon,\bot_Y,\emptyset)
  \qquad(\varepsilon\in\mathit{Err}).
\]
Thus this is an instrumented semantics of Definition~\ref{def:system} with
initial set \(I^0\) and terminal set \(\mathit{Err}\).
\end{definition}

\begin{definition}[Bounded trace domain and reachability]
\label{def:domain}
Write \(\mathsf{Bool}=\{\mathsf{false},\mathsf{true}\}\). To define bounded
reachability, fix a horizon \(N\in\mathbb N_{>0}\) and an admissibility
predicate
\[
  K_N:X^{N+1}\times U^N\times Y^N
  \longrightarrow\mathsf{Bool}
\]
that holds exactly for the state, input and output sequences allowed by the
fixed task and artifact configuration. The horizon and admissibility predicate
are fixed before artifact replay or solver analysis. The \emph{bounded trace
domain} is
\[
  D=(\mathcal M,N,K_N).
\]
The set of admitted \(N\)-step traces is
\[
\begin{aligned}
  \mathrm{Tr}(D):=\bigl\{(\mathbf x,\mathbf u,\mathbf y,\mathbf H)\ \bigm|\;&
  \mathbf x\in X^{N+1},\ \mathbf u\in U^N,\
  \mathbf y\in Y^N,\ \mathbf H\in(2^\Gamma)^N,\\
  &x_0\in I,\quad
  K_N(\mathbf x,\mathbf u,\mathbf y)=\mathsf{true},\\
  &\delta(x_t,u_t)=(x_{t+1},y_t,H_t)
  \text{ for every }t<N\bigr\}.
\end{aligned}
\]
Write \(w=(\mathbf x,\mathbf u,\mathbf y,\mathbf H)\) for an admitted trace in
\(\mathrm{Tr}(D)\). It contains \(N+1\) states and one input, output and hit set
per step. Define
\[
  \mathrm{Hit}(w):=\bigcup_{t<N}H_t,
  \qquad
  \mathrm{Hit}(W):=\bigcup_{w\in W}\mathrm{Hit}(w)
  \quad\text{for }W\subseteq\mathrm{Tr}(D).
\]
Here \(\mathrm{Hit}\) takes a trace or a set of traces,
whereas \(\mathrm{Hit}_T(\Xi)\) in Appendix~\ref{app:bs-goals} takes a
stimulus set. The target is fixed by \(\mathcal M\), so we omit its
subscript.
The goals reachable within \(D\) are
\[
  \mathrm{Reach}_D
  :=\{\gamma\in\Gamma\mid
      \exists(\mathbf x,\mathbf u,\mathbf y,\mathbf H)\in\mathrm{Tr}(D),
      \ \exists t<N\text{ such that }\gamma\in H_t\}.
\]
An admitted trace \(w\) is a \emph{witness} for \(\gamma\) when
\(\gamma\in\mathrm{Hit}(w)\).
The domain \(D\) is \emph{feasible} when \(\mathrm{Tr}(D)\neq\emptyset\).
\end{definition}

\subsection{Finite Representations and Exact Encodings}
\label{app:bs-representation}

\paragraph{Overview.}
Exact bounded analysis applies only when the fixed trace domain has a finite
layered representation. This subsection defines that representation and its
exact bounded encoding. Lemma~\ref{lem:finite-support} gives a sufficient
condition for finite layers to exist. The RTL and Behavior instances below
construct their carriers explicitly.

\begin{definition}[Finite layered representation]
\label{def:finite-representation}
A \emph{finite layered representation} of \(D\) consists of finite state sets
\(X_t^D\subseteq X\) for \(t\le N\), finite input sets
\(U_t^D\subseteq U\) for \(t<N\), and finite output sets
\(Y_t^D\subseteq Y\) for \(t<N\). It also supplies a finite initial carrier
\(I_D\) satisfying
\[
  \{x_0\mid
      (\mathbf x,\mathbf u,\mathbf y,\mathbf H)\in\mathrm{Tr}(D)\}
  \subseteq I_D\subseteq I\cap X_0^D.
\]
The remaining trace-support conditions are
\[
  x_t\in X_t^D\quad(t\le N),
  \qquad
  u_t\in U_t^D,\quad y_t\in Y_t^D\quad(t<N)
\]
for every
\((\mathbf x,\mathbf u,\mathbf y,\mathbf H)\in\mathrm{Tr}(D)\).
The sets must also satisfy the layer-closure condition
\[
  \delta(x,u)\in X_{t+1}^D\times Y_t^D\times2^\Gamma
  \quad\text{for every }x\in X_t^D,\ u\in U_t^D,\ t<N.
\]
The corresponding finite layered system is
\[
  \mathcal A_D
  =
  \big((X_t^D)_{t\le N},(U_t^D)_{t<N},(Y_t^D)_{t<N},I_D,
        (\delta\restriction(X_t^D\times U_t^D))_{t<N},
        \Gamma,K_N\big).
\]
Here \(f\restriction S\) denotes the restriction of a function \(f\) to
inputs in \(S\).
Its trace set is
\[
\begin{aligned}
  \mathrm{Tr}(\mathcal A_D)
  :=\bigl\{(\mathbf x,\mathbf u,\mathbf y,\mathbf H)\ \bigm|\;&
      \mathbf x\in\prod_{t\le N}X_t^D,\quad
      \mathbf u\in\prod_{t<N}U_t^D,\\
    & \mathbf y\in\prod_{t<N}Y_t^D,\quad
      \mathbf H\in(2^\Gamma)^N,\quad x_0\in I_D,\\
    & K_N(\mathbf x,\mathbf u,\mathbf y)=\mathsf{true},\quad
      \delta(x_t,u_t)=(x_{t+1},y_t,H_t)\ (t<N)\bigr\}.
\end{aligned}
\]
If no such representation can be constructed, bounded analysis of \(D\)
returns \(\mathsf{Refused}\).
\end{definition}

\begin{lemma}[Trace preservation]
\label{lem:trace-preservation}
Every finite layered representation of \(D\) satisfies
\[
  \mathrm{Tr}(\mathcal A_D)=\mathrm{Tr}(D).
\]
Consequently, its reachable-goal set is \(\mathrm{Reach}_D\).
\end{lemma}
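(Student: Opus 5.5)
The equality \(\mathrm{Tr}(\mathcal A_D)=\mathrm{Tr}(D)\) follows by proving both inclusions directly from Definitions~\ref{def:domain} and~\ref{def:finite-representation}. Membership in either set is a conjunction of conditions on a tuple \((\mathbf x,\mathbf u,\mathbf y,\mathbf H)\). The two conjunctions share the admissibility condition \(K_N(\mathbf x,\mathbf u,\mathbf y)=\mathsf{true}\) and the step equations \(\delta(x_t,u_t)=(x_{t+1},y_t,H_t)\) for \(t<N\). They differ only in the carriers: \(\mathrm{Tr}(D)\) uses the ambient sets \(X,U,Y\) and the initial set \(I\), while \(\mathrm{Tr}(\mathcal A_D)\) uses the layers \(X_t^D,U_t^D,Y_t^D\) and the initial carrier \(I_D\). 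The step equations in \(\mathcal A_D\) use the restriction \(\delta\restriction(X_t^D\times U_t^D)\). On the layered carriers this agrees with \(\delta\), so the two step conditions coincide whenever the arguments lie in the layers.

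\emph{Inclusion \(\mathrm{Tr}(\mathcal A_D)\subseteq\mathrm{Tr}(D)\).} Take a layered trace. Each layer is a subset of the corresponding ambient set, so \(\mathbf x\in X^{N+1}\), \(\mathbf u\in U^N\) and \(\mathbf y\in Y^N\). From \(I_D\subseteq I\) we get \(x_0\in I\). The admissibility condition and the step equations carry over verbatim, because the restricted step function agrees with \(\delta\) on the layers. Layer closure is not needed in this direction.

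\emph{Inclusion \(\mathrm{Tr}(D)\subseteq\mathrm{Tr}(\mathcal A_D)\).} Take an admitted trace \(w\in\mathrm{Tr}(D)\). The trace-support conditions of Definition~\ref{def:finite-representation} place every \(x_t\), \(u_t\) and \(y_t\) in its layer. The lower bound on \(I_D\) gives \(x_0\in I_D\). Each step uses arguments \((x_t,u_t)\in X_t^D\times U_t^D\), so \(\delta(x_t,u_t)=(\delta\restriction(X_t^D\times U_t^D))(x_t,u_t)\), and the step equations hold in \(\mathcal A_D\). The admissibility condition is the same predicate \(K_N\) in both systems. Layer closure is again not invoked; it only guarantees that the restricted maps are well-typed into the next layer, which is what later encodings need.

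\emph{Reachable goals.} \(\mathrm{Reach}_D\) is defined purely from the trace set, as the set of goals \(\gamma\) with some \(t<N\) such that \(\gamma\in H_t\). Define the reachable-goal set of \(\mathcal A_D\) by the same formula with \(\mathrm{Tr}(\mathcal A_D)\) in place of \(\mathrm{Tr}(D)\). Equal trace sets then give equal reachable-goal sets, and for the same reason witnesses for any goal are the same traces in both systems.

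The argument is elementary, so the only real obstacle is bookkeeping. One point needs care: the hit sets \(\mathbf H\) are not restricted by the layers, and they are determined by \(\delta\) in both systems, so no separate support condition on \(\mathbf H\) is required.
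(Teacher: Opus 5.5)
Your proof is correct and follows the paper's argument essentially step for step. In one direction you use the layers being subsets of the ambient carriers together with \(I_D\subseteq I\); in the other you use the trace-support and initial-carrier conditions; and the reachable-goal equality is read off from the equal trace sets, with layer closure (correctly) never invoked.
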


\begin{proof}
Fix a finite layered representation of \(D\), and let
\(w=(\mathbf x,\mathbf u,\mathbf y,\mathbf H)\). If
\(w\in\mathrm{Tr}(\mathcal A_D)\), then the layer-membership conditions,
\(x_0\in I_D\subseteq I\), and the shared admissibility and transition
conditions give
\[
  w\in\mathrm{Tr}(\mathcal A_D)\Longrightarrow w\in\mathrm{Tr}(D),
  \qquad
  \mathrm{Tr}(\mathcal A_D)\subseteq\mathrm{Tr}(D).
\]
Conversely, let \(w\in\mathrm{Tr}(D)\). The trace-support conditions of the
fixed finite layered representation give
\[
  x_t\in X_t^D\ (t\le N),
  \qquad
  u_t\in U_t^D,\quad y_t\in Y_t^D\ (t<N),
\]
while the initial-carrier condition gives \(x_0\in I_D\). The admissibility and
transition conditions already hold, so
\[
  w\in\mathrm{Tr}(D)\Longrightarrow w\in\mathrm{Tr}(\mathcal A_D),
  \qquad
  \mathrm{Tr}(D)\subseteq\mathrm{Tr}(\mathcal A_D).
\]
The two inclusions prove equality. By Definition~\ref{def:domain}, the
reachable-goal set is therefore
\[
  \bigl\{\gamma\in\Gamma\bigm|
    \exists(\mathbf x,\mathbf u,\mathbf y,\mathbf H)
      \in\mathrm{Tr}(\mathcal A_D),\
    \ \exists t<N:\gamma\in H_t\bigr\}
  =\mathrm{Reach}_D.\qedhere
\]
\end{proof}

\begin{lemma}[Finite-layer construction]
\label{lem:finite-support}
Suppose the set of initial states occurring in \(\mathrm{Tr}(D)\) is finite
and, at every \(t<N\), only finitely many values occur as \(u_t\) among traces
in \(\mathrm{Tr}(D)\). Then \(D\) admits a finite layered representation.
\end{lemma}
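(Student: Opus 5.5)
The construction is to take, at each layer, exactly the values that actually occur at that position in some admitted trace. For \(t\le N\), \(t<N\) respectively, define
\[
  X_t^D:=\{x_t\mid (\mathbf x,\mathbf u,\mathbf y,\mathbf H)\in\mathrm{Tr}(D)\},\quad
  U_t^D:=\{u_t\mid \cdots\},\quad
  Y_t^D:=\{y_t\mid \cdots\},
\]
and set \(I_D:=X_0^D\).

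With these choices most conditions of Definition~\ref{def:finite-representation} hold by construction:
\begin{itemize}
  \item \emph{Trace support.} Every trace in \(\mathrm{Tr}(D)\) has its \(t\)-th state, input and output in the corresponding layer by definition.
  \item \emph{Initial carrier.} Since each trace has \(x_0\in I\), we get \(I_D=\{x_0\mid\cdots\}\subseteq I\cap X_0^D\), with equality to the lower bound.
  \item \emph{Finiteness of \(I_D\) and \(U_t^D\).} Both are immediate from the hypotheses.
\end{itemize}

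The next step is finiteness of the state and output layers, by induction on \(t\). Because \(\delta\) is a function, every trace satisfies \((x_{t+1},y_t,H_t)=\delta(x_t,u_t)\). Hence
\[
  X_{t+1}^D\times Y_t^D \subseteq \pi_{1,2}\bigl(\delta(X_t^D\times U_t^D)\bigr),
\]
and if \(X_t^D\) is finite, this image of a finite set is finite. The base case is \(X_0^D=I_D\), which is finite.

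The main obstacle is layer closure. The definition requires \(\delta(x,u)\in X_{t+1}^D\times Y_t^D\times 2^\Gamma\) for \emph{every} pair \(x\in X_t^D\), \(u\in U_t^D\), not only for pairs that co-occur in a trace. The occurrence sets fail this, because a state and an input can each appear at position \(t\) in different traces. I would therefore enlarge the layers forward instead of using occurrence sets for states and outputs. Keep \(U_t^D\) as the occurrence set, keep \(I_D\) as above, set \(X_0^D:=I_D\), and define recursively
\[
  X_{t+1}^D:=\{x'\mid \delta(x,u)=(x',y,H),\ x\in X_t^D,\ u\in U_t^D\},
\]
\[
  Y_t^D:=\{y\mid \delta(x,u)=(x',y,H),\ x\in X_t^D,\ u\in U_t^D\}.
\]

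These enlarged layers satisfy all the conditions:
\begin{itemize}
  \item \emph{Finiteness.} Each layer is finite by the same induction as before, being the image of a finite product under \(\delta\).
  \item \emph{Layer closure.} This now holds by construction, since \(X_{t+1}^D\) and \(Y_t^D\) contain every image of \(X_t^D\times U_t^D\).
  \item \emph{Trace support.} This still holds by induction on \(t\). For a trace \(w\), if \(x_t\in X_t^D\), then \(u_t\in U_t^D\) by the occurrence definition, and \(\delta(x_t,u_t)=(x_{t+1},y_t,H_t)\) puts \(x_{t+1}\in X_{t+1}^D\) and \(y_t\in Y_t^D\).
  \item \emph{Initial carrier.} The bounds on \(I_D\) are unchanged, since \(I_D\) and \(X_0^D\) are as before.
\end{itemize}

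Finally, \(\mathcal A_D\) is a finite layered representation of \(D\). Lemma~\ref{lem:trace-preservation} then applies to it without further work.
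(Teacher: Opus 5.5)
Your final construction is exactly the paper's: take \(I_D=X_0^D\) and \(U_t^D\) as occurrence sets, then let \(X_{t+1}^D\) and \(Y_t^D\) collect the next states and outputs of \(\delta\) over \(X_t^D\times U_t^D\). Your induction for finiteness and trace support, together with layer closure by construction, matches the paper's proof. The opening detour through occurrence sets for states and outputs is unnecessary. The product inclusion you wrote there is also too strong: only the coordinatewise projections are contained in the image. This does not affect your final argument.
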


\begin{proof}
Let \(I_D=X_0^D\) be the set of initial states occurring in
\(\mathrm{Tr}(D)\), and set
\[
  U_t^D
  =\{u_t\mid
      (\mathbf x,\mathbf u,\mathbf y,\mathbf H)\in\mathrm{Tr}(D)\}
  \qquad(t<N).
\]
Definition~\ref{def:domain} gives \(I_D\subseteq I\), so the initial-carrier
condition holds.
Each \(U_t^D\) is finite by the hypothesis of the lemma.
For \(t=0,\ldots,N-1\) in order, apply \(\delta\) to every pair in
\(X_t^D\times U_t^D\). Let \(X_{t+1}^D\) collect the resulting next states
and \(Y_t^D\) the resulting outputs. We show by induction that \(X_t^D\) is
finite and contains every \(x_t\) occurring in an admitted trace. The base
case is \(X_0^D=I_D\). For the induction step, \(X_t^D\times U_t^D\) is
finite, so its image under \(\delta\), and hence \(X_{t+1}^D\) and
\(Y_t^D\), is finite. For any admitted trace, \(u_t\in U_t^D\). Its
transition equation and the induction hypothesis then give
\(x_{t+1}\in X_{t+1}^D\) and \(y_t\in Y_t^D\). This establishes trace
support. Finally, because the construction includes every
\(x\in X_t^D\) and \(u\in U_t^D\), their result satisfies
\(\delta(x,u)\in X_{t+1}^D\times Y_t^D\times2^\Gamma\) for every
\(t<N\). This is the layer-closure condition.
\end{proof}

\begin{definition}[Exact bounded encoding]
\label{def:exact-enc}
An \emph{exact bounded encoding} translates the finite layered system
\(\mathcal A_D\) into solver formulas without adding or removing an initial
state, admitted sequence, transition or goal hit. Let
\(\mathbb B=\{0,1\}\). The encoding first supplies injective maps
\[
  \bigl(\mathrm{enc}^X_t:X_t^D\to\mathbb B^{d_t^X}\bigr)_{t\le N},
  \qquad
  \bigl(\mathrm{enc}^U_t:U_t^D\to\mathbb B^{d_t^U}\bigr)_{t<N},
  \qquad
  \bigl(\mathrm{enc}^Y_t:Y_t^D\to\mathbb B^{d_t^Y}\bigr)_{t<N},
\]
where the exponents are positive bit widths. Each \(\mathrm{enc}\) map converts
a semantic value into a bit vector. Write \(\mathrm{dec}^X_t\),
\(\mathrm{dec}^U_t\) and \(\mathrm{dec}^Y_t\) for the respective inverse maps
on bit vectors produced by these encodings. Since not every bit vector need
encode a value, the following Boolean predicates recognize exactly the valid
encodings at each trace position:
\[
\begin{aligned}
  \mathrm{Valid}^X_t(\widehat x)
  &\iff \widehat x\in
    \{\mathrm{enc}^X_t(x)\mid x\in X_t^D\},\\
  \mathrm{Valid}^U_t(\widehat u)
  &\iff \widehat u\in
    \{\mathrm{enc}^U_t(u)\mid u\in U_t^D\},\\
  \mathrm{Valid}^Y_t(\widehat y)
  &\iff \widehat y\in
    \{\mathrm{enc}^Y_t(y)\mid y\in Y_t^D\}.
\end{aligned}
\]
The encoding also supplies Boolean predicates for the semantic conditions:
\(\widehat I_D\) tests initial-state
membership, \(\widehat K_N\) tests sequence admissibility, and
\(\widehat{\delta}_t^D\) tests whether an encoded tuple represents a transition
at position \(t<N\). On every
\(\widehat x\) satisfying \(\mathrm{Valid}^X_0\),
\[
  \widehat I_D(\widehat x)
  \iff \mathrm{dec}^X_0(\widehat x)\in I_D.
\]
Hence \(\mathrm{Valid}^X_0\) recognizes every encoded state in \(X_0^D\),
whereas \(\widehat I_D\) selects those whose decoded states belong to
\(I_D\).
For encoded sequences
\[
  \widehat{\mathbf x}=(\widehat x_t)_{t\le N},
  \qquad
  \widehat{\mathbf u}=(\widehat u_t)_{t<N},
  \qquad
  \widehat{\mathbf y}=(\widehat y_t)_{t<N}
\]
whose components satisfy the corresponding position-indexed validity
predicates, namely \(\mathrm{Valid}^X_t(\widehat x_t)\) for \(t\le N\) and
\(\mathrm{Valid}^U_t(\widehat u_t)\) and
\(\mathrm{Valid}^Y_t(\widehat y_t)\) for \(t<N\),
\[
\begin{split}
  \widehat K_N(\widehat{\mathbf x},\widehat{\mathbf u},\widehat{\mathbf y})
  \iff{}&
  K_N\Bigl(
    \bigl(\mathrm{dec}^X_t(\widehat x_t)\bigr)_{t\le N},
    \bigl(\mathrm{dec}^U_t(\widehat u_t)\bigr)_{t<N},\\
    &\hspace{34mm}
    \bigl(\mathrm{dec}^Y_t(\widehat y_t)\bigr)_{t<N}
  \Bigr).
\end{split}
\]
For every \(t<N\), every
\(\mathbf h=(h_\gamma)_{\gamma\in\Gamma}
\in\{\mathsf{false},\mathsf{true}\}^{\Gamma}\), and every
\((\widehat x,\widehat u,\widehat x',\widehat y)\) satisfying
\(\mathrm{Valid}^X_t\), \(\mathrm{Valid}^U_t\),
\(\mathrm{Valid}^X_{t+1}\) and \(\mathrm{Valid}^Y_t\), respectively,
\[
\begin{split}
  \widehat{\delta}_t^D
  (\widehat x,\widehat u,\widehat x',\widehat y,\mathbf h)
  \iff{}&
  \delta\bigl(\mathrm{dec}^X_t(\widehat x),
                 \mathrm{dec}^U_t(\widehat u)\bigr)\\
  ={}&\bigl(\mathrm{dec}^X_{t+1}(\widehat x'),
            \mathrm{dec}^Y_t(\widehat y),
            \{\gamma\in\Gamma\mid h_\gamma=\mathsf{true}\}\bigr).
\end{split}
\]
\end{definition}

\subsection{Bounded Reachability Analysis}
\label{app:bs-closure}

\paragraph{Overview.}
Subsection~\ref{app:bs-generation} evaluates artifacts on a
finite stimulus pool. Here, exact queries analyze goal reachability over the
full bounded domain \(\mathrm{Tr}(D)\).
The goal set selected for classification is \(G\subseteq\Gamma\), and the
full-catalogue analysis takes \(G=\Gamma\).
Solver answers and replay audits
classify each goal as reached, bounded-unreachable or unresolved.

\begin{definition}[Bounded queries]
\label{def:bounded-query}
Fix a finite layered representation of \(D\) and an exact bounded encoding of
\(\mathcal A_D\). For a solver query, let
\(\widehat{\mathbf x},\widehat{\mathbf u},\widehat{\mathbf y}\) denote its
encoded state, input and output variables. For \(t<N\), let
\(\mathbf h_t=(h_{t,\gamma})_{\gamma\in\Gamma}\) be fresh Boolean variables
encoding the hit set at step \(t\).
The base formula encoding an admitted trace in \(D\) is
\[
\begin{aligned}
  F_D ={}&
  \widehat I_D(\widehat x_0)
  \wedge \bigwedge_{t\le N}\mathrm{Valid}^X_t(\widehat x_t) \\
  &\wedge \bigwedge_{t<N}
    \bigl(\mathrm{Valid}^U_t(\widehat u_t)
          \wedge\mathrm{Valid}^Y_t(\widehat y_t)\bigr) \\
  &\wedge \bigwedge_{t<N}
    \widehat{\delta}_t^D(\widehat x_t,\widehat u_t,
              \widehat x_{t+1},\widehat y_t,\mathbf h_t)
  \wedge \widehat K_N(\widehat{\mathbf x},
                       \widehat{\mathbf u},
                       \widehat{\mathbf y}).
\end{aligned}
\]
For a target goal set \(S\subseteq\Gamma\), define
\[
  \Phi_{D,S}
  =F_D\wedge
    \bigvee_{\gamma\in S}\ \bigvee_{t<N}h_{t,\gamma}.
\]
The single-goal query is \(\Phi_{D,\{\gamma\}}\).
\end{definition}

\begin{theorem}[Bounded-query correctness]
\label{thm:bounded}
Given a finite layered representation of \(D\) and an exact bounded encoding
of \(\mathcal A_D\),
\[
  F_D\text{ is satisfiable}
  \quad\Longleftrightarrow\quad
  \mathrm{Tr}(D)\neq\emptyset.
\]
Thus \(F_D\) is satisfiable exactly when \(D\) is feasible. For every
\(S\subseteq\Gamma\),
\[
  \Phi_{D,S}\text{ is satisfiable}
  \quad\Longleftrightarrow\quad
  S\cap\mathrm{Reach}_D\neq\emptyset.
\]
Every satisfying assignment to \(F_D\) decodes to a trace in
\(\mathrm{Tr}(D)\). Every satisfying assignment to \(\Phi_{D,S}\) decodes
to one that hits a goal in \(S\).
\end{theorem}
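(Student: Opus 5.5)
The plan is to build an explicit bijection-up-to-decoding between satisfying assignments of \(F_D\) and admitted traces of the finite layered system \(\mathcal A_D\). Then Lemma~\ref{lem:trace-preservation} transfers the statement to \(\mathrm{Tr}(D)\) and \(\mathrm{Reach}_D\). Everything rests on the defining equivalences in Definition~\ref{def:exact-enc}. The one point needing care is that each of these equivalences is only asserted on bit vectors that satisfy the position-indexed validity predicates. So the first step is to note that \(F_D\) conjoins \(\mathrm{Valid}^X_t\), \(\mathrm{Valid}^U_t\) and \(\mathrm{Valid}^Y_t\) for every position. Any satisfying assignment therefore lies in the region where \(\widehat I_D\), \(\widehat K_N\) and \(\widehat\delta_t^D\) faithfully reflect \(I_D\), \(K_N\) and \(\delta\).

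\textbf{Soundness.} Take a model \(\nu\) of \(F_D\) and decode it. Set \(x_t=\mathrm{dec}^X_t(\nu(\widehat x_t))\), \(u_t=\mathrm{dec}^U_t(\nu(\widehat u_t))\), \(y_t=\mathrm{dec}^Y_t(\nu(\widehat y_t))\) and \(H_t=\{\gamma\mid \nu(h_{t,\gamma})=\mathsf{true}\}\). Validity makes each decoding well defined and places \(x_t\in X_t^D\), \(u_t\in U_t^D\) and \(y_t\in Y_t^D\). The initial-state equivalence gives \(x_0\in I_D\). The admissibility equivalence gives \(K_N(\mathbf x,\mathbf u,\mathbf y)=\mathsf{true}\). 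The transition equivalence gives \(\delta(x_t,u_t)=(x_{t+1},y_t,H_t)\) for each \(t<N\). Hence the decoded tuple lies in \(\mathrm{Tr}(\mathcal A_D)=\mathrm{Tr}(D)\).

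\textbf{Completeness.} Take \(w\in\mathrm{Tr}(D)=\mathrm{Tr}(\mathcal A_D)\). The layer memberships come from the trace-support conditions of the representation. Assign \(\widehat x_t:=\mathrm{enc}^X_t(x_t)\), and similarly for the inputs and outputs, and set \(h_{t,\gamma}:=[\gamma\in H_t]\). Validity holds by construction, and \(\mathrm{dec}\circ\mathrm{enc}=\mathrm{id}\) by injectivity. The three equivalences read right-to-left then make every conjunct of \(F_D\) true. This proves \(F_D\) satisfiable iff \(\mathrm{Tr}(D)\neq\emptyset\), and it also proves the decoding claim for \(F_D\).

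\textbf{Goal queries.} For \(\Phi_{D,S}\), reuse both directions and track the extra disjunct. If \(\nu\models\Phi_{D,S}\), then some \(h_{t,\gamma}\) with \(\gamma\in S\) is true. Hence \(\gamma\in H_t\) in the decoded admitted trace, so that trace hits a goal of \(S\), and \(\gamma\in\mathrm{Reach}_D\cap S\). Conversely, if \(\gamma\in S\cap\mathrm{Reach}_D\), pick a witness trace with \(\gamma\in H_t\). Its encoding satisfies \(F_D\), and it sets \(h_{t,\gamma}\) true, so it satisfies \(\Phi_{D,S}\).

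The main obstacle is keeping track of where each equivalence of Definition~\ref{def:exact-enc} is licensed. In particular, the hit bits must be the solver's own Boolean variables, so that the transition equivalence pins them to exactly \(H_t\) rather than to a superset. The equivalence is stated for every \(\mathbf h\in\mathsf{Bool}^\Gamma\), with equality of the hit set to \(\{\gamma\mid h_\gamma\}\). That equality is what rules out spurious hits in the soundness direction, and I would check this explicitly.
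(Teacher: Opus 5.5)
Your proposal is correct and follows the paper's proof essentially step for step. You encode an admitted trace (placed in \(\mathrm{Tr}(\mathcal A_D)\) by Lemma~\ref{lem:trace-preservation}) as an assignment satisfying \(F_D\), decode any model of \(F_D\) through the validity conjuncts and the exactness equivalences into a trace of \(\mathcal A_D\), and then unfold the hit disjunction for \(\Phi_{D,S}\). Your explicit points that the validity conjuncts license each equivalence and that the transition equivalence fixes the hit bits to exactly \(H_t\) are left implicit in the paper, but they do not change the argument.
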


\begin{proof}
For \(w=(\mathbf x,\mathbf u,\mathbf y,\mathbf H)\in\mathrm{Tr}(D)\),
Lemma~\ref{lem:trace-preservation} gives
\(w\in\mathrm{Tr}(\mathcal A_D)\). Define an assignment \(\nu_w\) by
\[
\begin{aligned}
  \nu_w(\widehat x_t)&=\mathrm{enc}^X_t(x_t) &&(t\le N),\\
  \nu_w(\widehat u_t)&=\mathrm{enc}^U_t(u_t),\qquad
  \nu_w(\widehat y_t)=\mathrm{enc}^Y_t(y_t) &&(t<N),\\
  \nu_w(h_{t,\gamma})&=\mathsf{true}
    \quad\Longleftrightarrow\quad \gamma\in H_t
    &&(t<N,\ \gamma\in\Gamma).
\end{aligned}
\]
Exactness of the encoding gives \(\nu_w\models F_D\).
Conversely, let \(\nu\models F_D\). The validity predicates permit every
encoded component to be decoded. Set
\[
  H_t^\nu
  =\{\gamma\in\Gamma\mid
      \nu(h_{t,\gamma})=\mathsf{true}\}.
\]
The initial-state, transition and admissibility conjuncts of \(F_D\) give a
decoded trace \(w_\nu\in\mathrm{Tr}(\mathcal A_D)\). By
Lemma~\ref{lem:trace-preservation}, \(w_\nu\in\mathrm{Tr}(D)\). Therefore
\[
  F_D\text{ is satisfiable}
  \quad\Longleftrightarrow\quad
  \mathrm{Tr}(D)\neq\emptyset.
\]
Finally, expanding the hit disjunction and then decoding gives
\[
\begin{aligned}
  \Phi_{D,S}\text{ is satisfiable}
  &\Longleftrightarrow
    \exists\nu,\ \exists t<N,\ \exists\gamma\in S:
    \nu\models F_D\ \wedge\
    \nu(h_{t,\gamma})=\mathsf{true}\\
  &\Longleftrightarrow
    \exists(\mathbf x,\mathbf u,\mathbf y,\mathbf H)\in\mathrm{Tr}(D),\
    \ \exists t<N,\ \exists\gamma\in S:\gamma\in H_t\\
  &\Longleftrightarrow S\cap\mathrm{Reach}_D\neq\emptyset.
\end{aligned}
\]
The constructions of \(\nu_w\) and \(w_\nu\) also prove the decoding claims.
\end{proof}

\begin{definition}[Replay audit and evaluated pool]
\label{def:replay-audit}
Fix a generated artifact \(A\) together with its analysis domain \(D\), a
finite layered representation and an exact bounded encoding. The replay audit
is
\[
  \textsc{Audit}_D:
  \mathcal R_A\longrightarrow
  \mathrm{Tr}(D)\cup
  \{\mathsf{Outside},\mathsf{Inconsistent},\mathsf{Refused}\}.
\]
For a raw record \(e\in\mathcal R_A\), return
\(\mathsf{Refused}\) if it is incomplete or lacks a valid domain identifier.
For a record identifying \(D\), reconstruct
\(w=(\mathbf x,\mathbf u,\mathbf y,\mathbf H)\) and require exactly
\(N+1\) states and \(N\) inputs, outputs and hit sets, or
return \(\mathsf{Refused}\). Records are not truncated or padded.
Complete records of artifact failures are audited normally.
Return \(\mathsf{Outside}\) if the record belongs to another domain or,
for a record identifying \(D\), a state, input or output lies outside its
position-specific carrier, \(x_0\notin I_D\), or
\(K_N(\mathbf x,\mathbf u,\mathbf y)=\mathsf{false}\). If the record is not
outside \(D\), require \(H_t\subseteq\Gamma\) for every \(t<N\) and define
\[
\begin{aligned}
  \widehat x_t^w&:=\mathrm{enc}_t^X(x_t) &&(t\le N),\\
  \widehat u_t^w&:=\mathrm{enc}_t^U(u_t),\qquad
  \widehat y_t^w:=\mathrm{enc}_t^Y(y_t) &&(t<N).
\end{aligned}
\]
Let \(\mathbf h_t^w=(h_{t,\gamma}^w)_{\gamma\in\Gamma}\), where
\(h_{t,\gamma}^w=\mathsf{true}\) iff \(\gamma\in H_t\). Each step must satisfy
\[
  \widehat{\delta}_t^D(\widehat x^w_t,\widehat u^w_t,
            \widehat x^w_{t+1},\widehat y^w_t,\mathbf h^w_t)
  \qquad(t<N).
\]
A failed requirement returns \(\mathsf{Inconsistent}\). Otherwise exactness
gives \(w\in\mathrm{Tr}(D)\), and the audit returns \(w\). Audit every record
in \(\mathcal E_A\). Any \(\mathsf{Inconsistent}\) or \(\mathsf{Refused}\)
result stops bounded classification, while \(\mathsf{Outside}\) records are
discarded. If no audit stops classification, define
\[
  W_{\mathrm{eval}}^A
  :=\bigl\{w\in\mathrm{Tr}(D)\bigm|
      \textsc{Audit}_D(e)=w
      \text{ for some }(\xi,e)\in\mathcal E_A\bigr\}.
\]
\end{definition}

\paragraph{Bounded-run interface.}
Each artifact supplies
\[
  \textsc{Run}_D:
  I_D\times\prod_{t<N}U_t^D
  \longrightarrow\mathcal R_A\cup\{\mathsf{Refused}\},
\]
where \(\textsc{Run}_D(x_0,\mathbf u)\) realizes \(x_0\) as an
artifact-specific concrete state, runs \(A\) for \(N\) steps under
\(\mathbf u\), and returns the resulting raw record. It returns
\(\mathsf{Refused}\) if no complete raw record can be produced.

\paragraph{Initial audited pool.}
After the preceding audit processes every evaluation record without returning
\(\mathsf{Refused}\) or \(\mathsf{Inconsistent}\),
if \(W_{\mathrm{eval}}^A\neq\emptyset\), set
\(W_0=W_{\mathrm{eval}}^A\). Otherwise,
solve \(F_D\). By Theorem~\ref{thm:bounded}, an unsatisfiable result establishes
\(\mathrm{Tr}(D)=\emptyset\), while an unresolved result leaves feasibility
unresolved. From a satisfying assignment, decode \(x_0\in I_D\) and
\(\mathbf u\in\prod_{t<N}U_t^D\), then invoke
\(\textsc{Run}_D(x_0,\mathbf u)\). A refused run is reported as
\(\mathsf{Refused}\). Any returned raw record is passed to
\(\textsc{Audit}_D\). If the audit returns
\(w_{\mathrm f}\in\mathrm{Tr}(D)\), set \(W_0=\{w_{\mathrm f}\}\). A refused
audit is reported as \(\mathsf{Refused}\). An \(\mathsf{Outside}\) or
\(\mathsf{Inconsistent}\) result is reported as \(\mathsf{Inconsistent}\).

\paragraph{Goal classification.}
Given a nonempty \(W_0\) and the selected set \(G\),
Algorithm~\ref{alg:closure} marks the goals hit by \(W_0\) as reached and
queries subsets of the remaining goals. A satisfiable query marks goals reached
only after the artifact is run on the decoded values and the resulting record
passes the audit. An unsatisfiable query marks
its target set as bounded-unreachable. If a query over several goals is
unresolved, the algorithm retries smaller subsets. Only an unresolved
singleton is marked unresolved. The result is
\((\mathit{Rch},\mathit{Unr},\mathit{Unk},W)\), where the first three sets
contain the reached, bounded-unreachable and unresolved goals, and \(W\) is
the audited pool witnessing \(\mathit{Rch}\).

\begin{algorithm}[!t]
\caption{\textsc{ClassifyGoals}: Replay-audited bounded reachability}
\label{alg:closure}
\DontPrintSemicolon
\KwIn{goal set \(G\subseteq\Gamma\), nonempty audited pool
      \(W_0\subseteq\mathrm{Tr}(D)\)}
\KwOut{\(\mathsf{Inconsistent}\), \(\mathsf{Refused}\), or
\((\mathit{Rch},\mathit{Unr},\mathit{Unk},W)\)}
\(W\gets W_0\); \(\mathit{Rch}\gets\mathrm{Hit}(W_0)\cap G\)\;
\(\mathit{Unr}\gets\emptyset\); \(\mathit{Unk}\gets\emptyset\);
\(\mathit{Pend}\gets G\setminus\mathit{Rch}\)\;
\While{\(\mathit{Pend}\neq\emptyset\)}{
  choose nonempty \(S\subseteq\mathit{Pend}\)\;
  solve \(\Phi_{D,S}\)\;
  \While{the result is unresolved and \(|S|>1\)}{
    choose a nonempty proper subset \(S'\subsetneq S\)\;
    \(S\gets S'\)\;
    solve \(\Phi_{D,S}\)\;
  }
  \uIf{satisfiable}{
    decode the initial state and input sequence from the satisfying model
      \(\nu\)\;
    invoke \(\textsc{Run}_D\) on the decoded values\;
    \If{it returns \(\mathsf{Refused}\)}{\Return \(\mathsf{Refused}\)\;}
    let \(e\) be the returned raw record\;
    \(r_{\mathrm{aud}}\gets\textsc{Audit}_D(e)\)\;
    \If{\(r_{\mathrm{aud}}=\mathsf{Refused}\)}{
      \Return \(\mathsf{Refused}\)\;}
    \If{\(r_{\mathrm{aud}}\in
        \{\mathsf{Outside},\mathsf{Inconsistent}\}\)}{
      \Return \(\mathsf{Inconsistent}\)\;}
    \(w\gets r_{\mathrm{aud}}\)\;
    \If{\(\mathrm{Hit}(w)\cap S=\emptyset\) or
        \(\mathrm{Hit}(w)\cap\mathit{Unr}\neq\emptyset\)}{
      \Return \(\mathsf{Inconsistent}\)\;
    }
    \(W\gets W\cup\{w\}\)\;
    \(\mathit{Rch}\gets\mathit{Rch}\cup
       \bigl(\mathrm{Hit}(w)\cap G\bigr)\)\;
    \(\mathit{Unk}\gets\mathit{Unk}\setminus\mathrm{Hit}(w)\)\;
    \(\mathit{Pend}\gets\mathit{Pend}\setminus\mathrm{Hit}(w)\)\;
  }
  \uElseIf{unsatisfiable}{
    \(\mathit{Unr}\gets\mathit{Unr}\cup S\)\;
    \(\mathit{Pend}\gets\mathit{Pend}\setminus S\)\;
  }
  \Else(\tcp*[f]{unresolved singleton}){
    \(\mathit{Unk}\gets\mathit{Unk}\cup S\)\;
    \(\mathit{Pend}\gets\mathit{Pend}\setminus S\)\;
  }
}
\Return \((\mathit{Rch},\mathit{Unr},\mathit{Unk},W)\)\;
\end{algorithm}

\begin{theorem}[Bounded-status soundness]
\label{thm:closure-sound}
Fix a bounded trace domain \(D\), a finite layered representation of \(D\)
with an exact bounded encoding, a goal set \(G\subseteq\Gamma\), and a
nonempty audited pool \(W_0\subseteq\mathrm{Tr}(D)\). Assume that every solver
call returns a satisfying model, a correct unsatisfiable result or an
unresolved result, and that every decoding,
\(\textnormal{\textsc{Run}}_D\) invocation and
audit terminates. Then \(\textnormal{\textsc{ClassifyGoals}}(G,W_0)\) in
Algorithm~\ref{alg:closure} terminates. If it returns
\((\mathit{Rch},\mathit{Unr},\mathit{Unk},W)\), its three status sets
partition \(G\) and
\[
  \mathit{Rch}\subseteq\mathrm{Reach}_D,
  \qquad
  \mathit{Unr}\cap\mathrm{Reach}_D=\emptyset,
\]
and every goal in \(\mathit{Rch}\) has a witness in \(W\).
\end{theorem}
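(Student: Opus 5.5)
The proof is a loop-invariant argument on Algorithm~\ref{alg:closure}, combined with a termination measure. The invariant, checked at the head of the outer loop, has five parts: \(W\subseteq\mathrm{Tr}(D)\); \(\mathit{Rch}=\mathrm{Hit}(W)\cap G\); the sets \(\mathit{Rch},\mathit{Unr},\mathit{Unk},\mathit{Pend}\) are pairwise disjoint with union \(G\); \(\mathit{Unr}\cap\mathrm{Reach}_D=\emptyset\); and \(\mathit{Unr}\cap\mathrm{Hit}(W)=\emptyset\).

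At initialization, \(W=W_0\subseteq\mathrm{Tr}(D)\) holds by hypothesis. We have \(\mathit{Rch}=\mathrm{Hit}(W_0)\cap G\), \(\mathit{Pend}=G\setminus\mathit{Rch}\), and the other two sets are empty, so every part holds.

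For preservation I split on the three exits of the inner loop. The inner loop strictly shrinks the finite set \(S\), so it terminates with either a definite answer or an unresolved singleton. Throughout, \(S\subseteq\mathit{Pend}\).

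\textbf{Satisfiable.} Either the algorithm returns \(\mathsf{Refused}\) or \(\mathsf{Inconsistent}\), which is not a tuple and imposes no obligation, or the audit returns some \(w\). In the latter case \(w\in\mathrm{Tr}(D)\) by the codomain of \(\textsc{Audit}_D\) in Definition~\ref{def:replay-audit}; this relies on the exactness argument given there, and it is why soundness of reached goals does not depend on solver correctness. The explicit check \(\mathrm{Hit}(w)\cap\mathit{Unr}=\emptyset\) preserves the last invariant part. Adding \(\mathrm{Hit}(w)\cap G\) to \(\mathit{Rch}\), and removing \(\mathrm{Hit}(w)\) from \(\mathit{Unk}\) and \(\mathit{Pend}\), keeps the partition, since the new reached goals come only from \(\mathit{Pend}\cup\mathit{Unk}\cup\mathit{Rch}\). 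The check \(\mathrm{Hit}(w)\cap S\neq\emptyset\) guarantees that \(\mathit{Pend}\) strictly shrinks.

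\textbf{Unsatisfiable.} By assumption the answer is correct, so Theorem~\ref{thm:bounded} gives \(S\cap\mathrm{Reach}_D=\emptyset\). Moving \(S\) from \(\mathit{Pend}\) to \(\mathit{Unr}\) therefore preserves the fourth part. For the last part, \(\mathrm{Hit}(W)\subseteq\mathrm{Reach}_D\), because each \(w\in W\) is an admitted trace and hence a witness.

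\textbf{Unresolved singleton.} \(S\) moves from \(\mathit{Pend}\) to \(\mathit{Unk}\), which preserves disjointness and the union.

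In every branch \(|\mathit{Pend}|\) strictly decreases, and \(G\subseteq\Gamma\) is finite. Each iteration is a finite composition of terminating steps: finitely many solver calls, one decoding, one \(\textsc{Run}_D\) invocation and one audit. Hence the algorithm terminates.

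At exit \(\mathit{Pend}=\emptyset\), so \(\mathit{Rch},\mathit{Unr},\mathit{Unk}\) partition \(G\). From \(\mathit{Rch}=\mathrm{Hit}(W)\cap G\), each reached goal lies in \(\mathrm{Hit}(w)\) for some \(w\in W\subseteq\mathrm{Tr}(D)\). That \(w\) is a witness in \(W\), and this also gives \(\mathit{Rch}\subseteq\mathrm{Reach}_D\). Finally, \(\mathit{Unr}\cap\mathrm{Reach}_D=\emptyset\) is the fourth invariant part.

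I expect the main obstacle to be the bookkeeping in the satisfiable branch. There I must show that removing \(\mathrm{Hit}(w)\) from \(\mathit{Unk}\) together with adding to \(\mathit{Rch}\) yields an exact partition with no overlap with \(\mathit{Unr}\); this is exactly where the guard \(\mathrm{Hit}(w)\cap\mathit{Unr}=\emptyset\) is required. I must also confirm that the audit, not the solver model, is the sole source of elements of \(W\).
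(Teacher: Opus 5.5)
Your proposal is correct and follows essentially the same loop-invariant argument as the paper: the same invariants $W\subseteq\mathrm{Tr}(D)$ and $\mathit{Rch}=\mathrm{Hit}(W)\cap G$, the four-way partition of $G$, the same three-branch case analysis, and $|\mathit{Pend}|$ as the termination measure. Your extra invariant parts only make explicit what the paper argues directly. $\mathit{Unr}\cap\mathrm{Reach}_D=\emptyset$ is the paper's "every set added to $\mathit{Unr}$ was proved disjoint" step, and $\mathit{Unr}\cap\mathrm{Hit}(W)=\emptyset$ is redundant because it already follows from the partition together with $\mathit{Rch}=\mathrm{Hit}(W)\cap G$.
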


\begin{proof}
At the head of each outer-loop iteration, \(\mathit{Rch}\),
\(\mathit{Unr}\) and \(\mathit{Unk}\) contain the goals currently marked
reached, bounded-unreachable and unresolved. The set \(W\) is the accumulated
pool of audited traces, and \(\mathit{Pend}\subseteq G\) contains the goals
not yet assigned a status. Maintain
\[
  W\subseteq\mathrm{Tr}(D),
  \qquad
  \mathit{Rch}=\mathrm{Hit}(W)\cap G,
\]
and require \(\mathit{Rch}\), \(\mathit{Unr}\), \(\mathit{Unk}\) and
\(\mathit{Pend}\) to be pairwise disjoint with union \(G\). These properties
hold initially. Within one outer-loop iteration,
\(S\subseteq\mathit{Pend}\) is the nonempty goal set targeted by the current
query. Each unresolved inner-loop iteration replaces
\(S\) with a nonempty proper subset, so the inner loop terminates. For a
satisfiable query, a successful audit gives
\(w\in\mathrm{Tr}(D)\). Let \(J=\mathrm{Hit}(w)\cap G\). The checks give
\(J\cap S\neq\emptyset\) and \(J\cap\mathit{Unr}=\emptyset\). The updates add
\(w\) to \(W\), move the goals in
\(J\cap(\mathit{Pend}\cup\mathit{Unk})\) to \(\mathit{Rch}\), and preserve
the invariant. They also strictly decrease \(|\mathit{Pend}|\) because
\(J\cap S\neq\emptyset\) and \(S\subseteq\mathit{Pend}\). For an
unsatisfiable query,
Theorem~\ref{thm:bounded} gives
\(S\cap\mathrm{Reach}_D=\emptyset\). Moving the nonempty set \(S\) from
\(\mathit{Pend}\) to \(\mathit{Unr}\) therefore preserves the partition,
adds only bounded-unreachable goals and strictly decreases
\(|\mathit{Pend}|\). If the result
remains unresolved after the inner loop, then
\(|S|=1\). Moving this singleton from \(\mathit{Pend}\) to \(\mathit{Unk}\)
again preserves the invariant and decreases \(|\mathit{Pend}|\). Any failed
check returns
immediately. Since \(G\) is finite, the outer loop terminates.
At a normal return, \(\mathit{Pend}=\emptyset\), so the three status sets
partition \(G\).
For any \(\gamma\), the invariant gives
\[
  \gamma\in\mathit{Rch}
  \quad\Longrightarrow\quad
  \gamma\in\mathrm{Hit}(W)
  \quad\Longrightarrow\quad
  \exists w\in W\text{ such that }\gamma\in\mathrm{Hit}(w).
\]
Since \(W\subseteq\mathrm{Tr}(D)\), this \(w\) is a witness within \(D\), and
Definition~\ref{def:domain} gives
\(\gamma\in\mathrm{Reach}_D\). Finally, every set added to \(\mathit{Unr}\)
was proved disjoint from \(\mathrm{Reach}_D\) by
Theorem~\ref{thm:bounded}. Hence
\(\mathit{Unr}\cap\mathrm{Reach}_D=\emptyset\).
\end{proof}

\begin{corollary}[Complete bounded classification]
\label{cor:closure}
Under the hypotheses of Theorem~\ref{thm:closure-sound}, suppose
Algorithm~\ref{alg:closure} returns normally with \(\mathit{Unk}=\emptyset\).
Then
\[
  \mathit{Rch}=\mathrm{Reach}_D\cap G,
  \qquad
  \mathit{Unr}=G\setminus\mathrm{Reach}_D.
\]
\end{corollary}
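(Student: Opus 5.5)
The corollary follows from Theorem~\ref{thm:closure-sound} by a short set-theoretic argument. I will use three of its conclusions. First, the status sets \(\mathit{Rch}\), \(\mathit{Unr}\), \(\mathit{Unk}\) partition \(G\). Second, \(\mathit{Rch}\subseteq\mathrm{Reach}_D\). Third, \(\mathit{Unr}\cap\mathrm{Reach}_D=\emptyset\). Under the added hypothesis \(\mathit{Unk}=\emptyset\), the partition reduces to \(G=\mathit{Rch}\,\dot\cup\,\mathit{Unr}\). Each claimed equality then comes from two inclusions.

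For \(\mathit{Rch}=\mathrm{Reach}_D\cap G\):
\begin{itemize}
\item Since \(\mathit{Rch}\subseteq G\) by the partition and \(\mathit{Rch}\subseteq\mathrm{Reach}_D\) by soundness, we get \(\mathit{Rch}\subseteq\mathrm{Reach}_D\cap G\).
\item Conversely, take \(\gamma\in\mathrm{Reach}_D\cap G\). Because \(G=\mathit{Rch}\cup\mathit{Unr}\), either \(\gamma\in\mathit{Rch}\) or \(\gamma\in\mathit{Unr}\). The second case is impossible, since \(\mathit{Unr}\) is disjoint from \(\mathrm{Reach}_D\). So \(\gamma\in\mathit{Rch}\).
\end{itemize}

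For \(\mathit{Unr}=G\setminus\mathrm{Reach}_D\):
\begin{itemize}
\item We have \(\mathit{Unr}\subseteq G\) by the partition and \(\mathit{Unr}\cap\mathrm{Reach}_D=\emptyset\) by soundness. Together these give \(\mathit{Unr}\subseteq G\setminus\mathrm{Reach}_D\).
\item Conversely, take \(\gamma\in G\setminus\mathrm{Reach}_D\). Then \(\gamma\notin\mathit{Rch}\), because \(\mathit{Rch}\subseteq\mathrm{Reach}_D\). Since \(G=\mathit{Rch}\cup\mathit{Unr}\), it follows that \(\gamma\in\mathit{Unr}\).
\end{itemize}

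There is no real obstacle here. The one point to state explicitly is that the hypothesis ``returns normally'' means the output is the tuple \((\mathit{Rch},\mathit{Unr},\mathit{Unk},W)\), not \(\mathsf{Inconsistent}\) or \(\mathsf{Refused}\). That is exactly the case in which Theorem~\ref{thm:closure-sound} supplies the partition and the two soundness inclusions used above.
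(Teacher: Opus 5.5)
Your proof is correct and matches the paper's argument: both use $\mathit{Unk}=\emptyset$ to reduce the partition to $G=\mathit{Rch}\cup\mathit{Unr}$, then combine it with the two soundness properties of Theorem~\ref{thm:closure-sound} to prove the first equality by double inclusion. The only cosmetic difference is that the paper obtains the second equality directly as $\mathit{Unr}=G\setminus\mathit{Rch}=G\setminus(\mathrm{Reach}_D\cap G)=G\setminus\mathrm{Reach}_D$, whereas you prove it by a second double inclusion.
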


\begin{proof}
Since \(\mathit{Unk}=\emptyset\), Theorem~\ref{thm:closure-sound} gives
\(G=\mathit{Rch}\cup\mathit{Unr}\), with the two sets disjoint. Consider any
\(\gamma\in\mathrm{Reach}_D\cap G\). The partition places \(\gamma\) in
\(\mathit{Rch}\) or \(\mathit{Unr}\), and
\(\mathit{Unr}\cap\mathrm{Reach}_D=\emptyset\) excludes the latter. Hence
\(\mathrm{Reach}_D\cap G\subseteq\mathit{Rch}\). Conversely, if
\(\gamma\in\mathit{Rch}\), then Theorem~\ref{thm:closure-sound} gives
\(\gamma\in\mathrm{Reach}_D\), while the partition gives \(\gamma\in G\).
Thus \(\mathit{Rch}\subseteq\mathrm{Reach}_D\cap G\), proving the first
equality. The partition then gives
\begin{align*}
  \mathit{Unr}
  &=G\setminus\mathit{Rch}\\
  &=G\setminus(\mathrm{Reach}_D\cap G)\\
  &=G\setminus\mathrm{Reach}_D.\qedhere
\end{align*}
\end{proof}

\subsection{RTL Event-Slot Semantics and Protocol Adapters}
\label{app:bs-signal}

\paragraph{Overview.}
This subsection instantiates the instrumented semantics of
Definition~\ref{def:system} for pin-level RTL. It defines the RTL transition
for one event slot. Behavior consumes and emits logical transactions directly.
RTL instead uses a task-declared protocol adapter to drive interface pins and
identify the transactions accepted and emitted in each event slot.

\paragraph{Goal producers.}
The coverage/safety partition of Definition~\ref{def:goals} classifies goals
by purpose. Independently, the RTL catalogue has the producer partition
\[
  \Gamma_{\hat R}
  =\Gamma_{\hat R}^{\mathrm{slot}}\cup\Gamma_{\hat R}^{\mathrm{mon}},
  \qquad
  \Gamma_{\hat R}^{\mathrm{slot}}\cap\Gamma_{\hat R}^{\mathrm{mon}}
  =\emptyset.
\]
The event-slot semantics emits only slot goals, and the finite monitors emit
only monitor goals. Either producer class may contain coverage and safety
goals.

\begin{definition}[RTL event-slot semantics]
\label{def:slot}
Let \(\mathcal E_R\) be the finite set of legal clock and reset event sets.
Each event is tagged by its signal, domain and polarity. Let \(Q\) be the
finite RTL-state set, let \(U_R\) and \(Y_R\) be the finite pin-input and
pin-observation sets, and let
\(\emptyset\neq I_R\subseteq Q\) be the initial-state set. The slot semantics
is the total function
\[
  \delta_{\hat R}^{\mathrm{slot}}:
  Q\times U_R\times\mathcal E_R
  \longrightarrow Q\times Y_R\times2^{\Gamma_{\hat R}^{\mathrm{slot}}}.
\]
One application of \(\delta_{\hat R}^{\mathrm{slot}}\) is an \emph{RTL event slot}.
Each slot is one scheduled step: inputs and resets are driven
and settled before pin sampling and active clock events.
Writing
\(\delta_{\hat R}^{\mathrm{slot}}(\eta_t,u_t^{\mathrm{pin}},e_t)
=(\eta_{t+1},o_t,H_t^{\mathrm{slot}})\),
\(o_t\) is sampled before active clock events,
\(\eta_{t+1}\) is the settled state after clocks return to inactive levels,
and \(H_t^{\mathrm{slot}}\) contains exactly the raw RTL goals hit at
declared sampling phases. \(N\) counts slots, not internal phases.
In the single-clock measurement profile, each slot spans one cycle.
\end{definition}

\begin{definition}[Finite protocol adapter]
\label{def:interface}
Fix the transaction interface of Definition~\ref{def:txn-interface} and the
RTL event-slot semantics of Definition~\ref{def:slot}. A finite protocol
adapter is a tuple
\[
  \Pi=(X_\Pi,I_\Pi,\mathcal A_\Pi,\lambda_\Pi,\tau_\Pi)
\]
selected by the task's endpoint bindings. These bindings declare each
endpoint's pins, their ownership, and its clock and reset domain. An adapter
state has the form \(d=(\mathbf i,r)\in X_\Pi\), where \(\mathbf i\) is an input-queue
family permitted by the fixed bounds of \(\mathcal S_{\hat R}\) and \(r\) is
finite environment and protocol bookkeeping. When the task declares a finite
external-memory model, its state is a component of \(r\), as shown in
Figure~\ref{fig:artifact-execution-architecture}. The queues remain fixed
during a run, while \(r\) may change. Let
\(I_\Pi\subseteq X_\Pi\) be the nonempty set of states
with initial bookkeeping. For a stimulus \(\xi=(\mathbf i,\sigma)\),
initialization selects \(d_0=(\mathbf i,r_0)\in I_\Pi\). For RTL, each
schedule entry introduced in
Definition~\ref{def:stimulus} has the form
\[
  \sigma_t=(e_t,\alpha_t)
  \in\Sigma_\Pi:=\mathcal E_R\times\mathcal A_\Pi.
\]
Here \(\mathcal A_\Pi\) is the finite environment-action set. The component
\(e_t\) contains the clock and reset events, while \(\alpha_t\) specifies which
queued inputs become available for presentation, which outputs to accept, any
external-memory response, and other environment choices. The total drive function
\[
  \lambda_\Pi:X_\Pi\times\Sigma_\Pi\longrightarrow U_R,
  \qquad
  u_t^{\mathrm{pin}}=\lambda_\Pi(d_t,\sigma_t)
\]
constructs the RTL pin inputs from the queued transactions, current
bookkeeping, slot events and environment choices.
After the RTL produces the pin observation \(o_t\in Y_R\), the adapter updates
its state and identifies the transfers in that slot. Let
\[
  \mathcal B^{\mathrm{in}}
  =\prod_{c\in C^{\mathrm{in}}}(T_c\cup\{\bot\}),
  \qquad
  \mathcal B^{\mathrm{out}}
  =\prod_{c\in C^{\mathrm{out}}}(T_c\cup\{\bot\}),
\]
where each component is the transaction accepted or emitted on that endpoint,
or \(\bot\) if there is none. The total update function is
\[
  \tau_\Pi:
  X_\Pi\times\Sigma_\Pi\times U_R\times Y_R
  \longrightarrow
  X_\Pi\times\mathcal B^{\mathrm{in}}\times\mathcal B^{\mathrm{out}},
\]
with
\[
  (d_{t+1},a_t,b_t)
  =\tau_\Pi(d_t,\sigma_t,u_t^{\mathrm{pin}},o_t).
\]
Writing \(a_{t,c}\) for the \(c\)-component of \(a_t\), the non-\(\bot\)
entries of \((a_{t,c})_t\) form a prefix of \(\mathbf i_c\). Thus the adapter
preserves queue order and reports no input transaction more than once.
\end{definition}

\paragraph{Ready/valid instance.}
The endpoint binding losslessly packs each transaction \(v\in T_c\) onto its
payload pins and unpacks each observed payload, with
\(\mathit{unpack}_c(\mathit{pack}_c(v))=v\). Figure~\ref{fig:artifact-execution-architecture}
shows the signal ownership. For an input endpoint \(c\),
\(\lambda_\Pi\) drives \(\mathit{in\_valid}\) and \(\mathit{in\_data}\), while
\(\tau_\Pi\) observes the RTL-driven \(\mathit{in\_ready}\). The bookkeeping
contains a pointer \(p_{c,t}\in\{0,\ldots,k_c\}\) to the first unaccepted input,
initially zero, and tracks the inputs made available by \(\alpha_t\) that remain
pending. For each input endpoint, the adapter presents the oldest pending
transaction and holds it stable until acceptance. Reset updates pending work
according to the task's declared reset policy. Let
\(\mathit{present}_{c,t}\) indicate that a pending transaction is being
presented. The driven signals are
\[
  \mathit{in\_valid}_{c,t}
  =\mathit{present}_{c,t}\wedge[p_{c,t}<k_c],
  \qquad
  \mathit{in\_data}_{c,t}=
  \begin{cases}
    \mathit{pack}_c(x_{c,p_{c,t}}) & \mathit{in\_valid}_{c,t},\\
    0 & \text{otherwise},
  \end{cases}.
\]
For an output endpoint, \(\lambda_\Pi\) drives \(\mathit{out\_ready}\) from
\(\alpha_t\), while \(\tau_\Pi\) observes the RTL-driven
\(\mathit{out\_valid}\) and \(\mathit{out\_data}\). At an active, non-reset
event for the corresponding endpoint, the input and output transfer conditions
are
\[
\begin{aligned}
  \mathit{fire}^{\mathrm{in}}_{c,t}
  &\iff
  \mathit{in\_valid}_{c,t}\wedge\mathit{in\_ready}_{c,t}
  &&(c\in C^{\mathrm{in}}),\\
  \mathit{fire}^{\mathrm{out}}_{c,t}
  &\iff
  \mathit{out\_valid}_{c,t}\wedge\mathit{out\_ready}_{c,t}
  &&(c\in C^{\mathrm{out}}).
\end{aligned}
\]
No transfer occurs for that endpoint in any other slot. An input transfer puts
\(x_{c,p_{c,t}}\) in \(a_t\) and advances the pointer. An output transfer puts
\(\mathit{unpack}_c(\mathit{out\_data}_{c,t})\) in \(b_t\). A non-transferring
endpoint contributes \(\bot\), and a reset applies the adapter's declared
reset update. If reset occurs with transactions in flight, the task declares
whether they are cancelled or remain outstanding. Without this declaration,
checks that depend on them remain unresolved. The adapter retains a presented
input transaction until transfer unless a reset intervenes. Thus, for every input
endpoint \(c\) and \(t<N_{\hat R}-1\)
such that neither \(e_t\) nor \(e_{t+1}\) resets \(c\),
\[
  \mathit{present}_{c,t}\wedge[p_{c,t}<k_c]
  \wedge\neg\mathit{fire}^{\mathrm{in}}_{c,t}
  \quad\Longrightarrow\quad \mathit{present}_{c,t+1}.
\]
The dual design obligation retains an output while it is stalled. For every
output endpoint \(c\) and \(t<N_{\hat R}-1\) such that neither \(e_t\) nor
\(e_{t+1}\) resets \(c\),
\[
  \mathit{out\_valid}_{c,t}\wedge\neg\mathit{out\_ready}_{c,t}
  \quad\Longrightarrow\quad
  \mathit{out\_valid}_{c,t+1}\wedge
  \mathit{out\_data}_{c,t+1}=\mathit{out\_data}_{c,t}.
\]
The corresponding monitor emits a safety goal if this implication fails.

\paragraph{Transaction projection.}
During an RTL replay of \(\xi\), write the per-slot batches returned by
\(\tau_\Pi\) as
\[
  a_t^{\hat R}(\xi)
  =\bigl(a_{t,c}^{\hat R}(\xi)\bigr)_{c\in C^{\mathrm{in}}},
  \qquad
  b_t^{\hat R}(\xi)
  =\bigl(b_{t,c}^{\hat R}(\xi)\bigr)_{c\in C^{\mathrm{out}}}.
\]
The accepted input trace is
\(\mathbf a_{\hat R}(\xi)=(a_t^{\hat R}(\xi))_{t<N_{\hat R}}\). For each output
endpoint \(c\), \(o_{\hat R,c}(\xi)\) is obtained from
\((b_{t,c}^{\hat R}(\xi))_{t<N_{\hat R}}\) by removing the \(\bot\) entries. The
output-history family is
\(\mathbf o_{\hat R}(\xi)=(o_{\hat R,c}(\xi))_{c\in C^{\mathrm{out}}}\).

\subsection{RTL Closed-Loop Semantics and Trace Domain}
\label{app:bs-rtl-domain}

\paragraph{Overview.}
Using the event-slot semantics and protocol adapter of
Subsection~\ref{app:bs-signal}, this subsection instantiates the instrumented
semantics and bounded trace domain of Subsection~\ref{app:bs-generic} for RTL.
It composes the adapter, RTL design and enabled finite monitors, then defines
the allowed clock, reset and environment-action sequences.

\begin{definition}[Finite monitor product]
\label{def:monitor}
Let \(Z\) be the finite state set of the product of all monitors enabled for
\(\hat R\), let \(z_0\in Z\) be its initial state, and let its total update
function be
\[
  \mu:Z\times\Sigma_\Pi\times Q\times U_R\times Y_R\times Q
       \times\mathcal B^{\mathrm{in}}\times\mathcal B^{\mathrm{out}}
       \longrightarrow Z\times2^{\Gamma_{\hat R}^{\mathrm{mon}}}
\,.
\]
It observes the schedule entry, the RTL step and the adapter's transaction
events, then returns the next monitor state and the monitor goals hit in that
slot. Its components emit the enabled coverage and safety goals assigned to
monitors, and their reset behavior is part of \(\mu\).
An optional task-declared response deadline pairs an accepted input with its first
response offer and emits a safety goal if the declared bound expires first. If
the task declares
reset restoration, another component tracks the declared reset sequence. It
emits a safety goal when the declared post-reset condition fails at completion,
and a scenario goal when the sequence completes after at least one non-reset
slot. If no monitor is enabled, then \(Z=\{z_0\}\) and \(\mu\) emits no goals.
\end{definition}

\begin{definition}[Closed-loop RTL semantics]
\label{def:product}
Define
\[
\begin{aligned}
  X_{\hat R}^{\mathrm{cl}}&=X_\Pi\times Q\times Z,\\
  Y_{\hat R}^{\mathrm{cl}}
    &=Y_R\times\mathcal B^{\mathrm{in}}\times\mathcal B^{\mathrm{out}},\\
  I_{\hat R}^{\mathrm{cl}}
    &=I_\Pi\times I_R\times\{z_0\}.
\end{aligned}
\]
The instrumented semantics used to analyze \(\hat R\) is
\[
  \mathcal M_{\hat R}
  =\bigl(X_{\hat R}^{\mathrm{cl}},\Sigma_\Pi,Y_{\hat R}^{\mathrm{cl}},
         I_{\hat R}^{\mathrm{cl}},\delta_{\hat R}^{\mathrm{cl}},
         \Gamma_{\hat R}\bigr),
\]
where, for one slot, \(d_t\in X_\Pi\), \(\eta_t\in Q\), \(z_t\in Z\),
\(\sigma_t=(e_t,\alpha_t)\), and
\(u_t^{\mathrm{pin}}=\lambda_\Pi(d_t,\sigma_t)\),
\[
\begin{aligned}
  (\eta_{t+1},o_t,H_t^{\mathrm{slot}})
    &=\delta_{\hat R}^{\mathrm{slot}}
      (\eta_t,u_t^{\mathrm{pin}},e_t),\\
  (d_{t+1},a_t,b_t)
    &=\tau_\Pi(d_t,\sigma_t,u_t^{\mathrm{pin}},o_t),\\
  (z_{t+1},H_t^{\mathrm{mon}})
    &=\mu(z_t,\sigma_t,\eta_t,u_t^{\mathrm{pin}},o_t,
          \eta_{t+1},a_t,b_t),\\
  H_t&=H_t^{\mathrm{slot}}\cup H_t^{\mathrm{mon}},\\
  \delta_{\hat R}^{\mathrm{cl}}((d_t,\eta_t,z_t),\sigma_t)
    &=((d_{t+1},\eta_{t+1},z_{t+1}),(o_t,a_t,b_t),H_t).
\end{aligned}
\]
\end{definition}

\begin{definition}[RTL trace domain]
\label{def:env-contract}
Set \(N=N_{\hat R}\). Let \(K_N^{\mathrm{env}}\) be the admissibility
predicate of Definition~\ref{def:domain} for \(\mathcal M_{\hat R}\). It holds
exactly for the \(N\)-slot closed-loop traces whose clock and reset events
satisfy the fixed harness configuration and whose environment actions satisfy
the obligations of the selected protocol adapter. Define
\[
  D_{\hat R}=(\mathcal M_{\hat R},N,K_N^{\mathrm{env}}).
\]
An environment-side violation therefore excludes the trace from
\(\mathrm{Tr}(D_{\hat R})\), whereas a design-side protocol violation is
recorded as a safety hit.
\end{definition}

\subsection{RTL Finite Representation, Exact Encoding and Audits}
\label{app:bs-rtl}

\paragraph{Overview.}
This subsection constructs the finite RTL representation and exact encoding,
states the required semantic, catalogue and replay fidelity assumptions, and
defines the catalogue and replay audits.

\paragraph{Admitted RTL.}
The RTL analysis admits elaborated designs whose fixed-width signals and
fixed-size memories range over a finite logic domain. It requires an explicit
initial-state predicate and a total deterministic event-slot transition. The
selected protocol adapter and monitors must likewise be finite and total.

\paragraph{Component encoding.}
Assign fixed injective encodings and exact validity predicates to the
components of \(X_{\hat R}^{\mathrm{cl}}\), \(\Sigma_\Pi\) and
\(Y_{\hat R}^{\mathrm{cl}}\), and to the internal pin-input set \(U_R\). At
slot \(t\), write the product encodings as
\[
  \widehat x_t=(\widehat d_t,\widehat\eta_t,\widehat z_t),
  \qquad
  \widehat\sigma_t=(\widehat e_t,\widehat\alpha_t),
  \qquad
  \widehat y_t=(\widehat o_t,\widehat a_t,\widehat b_t).
\]
A product encoding is valid exactly when each component encoding is valid.
Exact analysis requires the RTL frontend to supply
the encoded initial-state predicate \(\widehat I_R\) and
the one-slot relation
\[
  \widehat{\delta}_{\hat R}^{\mathrm{slot}}
  (\widehat\eta,\widehat u^{\mathrm{pin}},\widehat e,
   \widehat\eta',\widehat o,\mathbf h^{\mathrm{slot}}),
  \qquad
  \mathbf h^{\mathrm{slot}}
  =(h_\gamma^{\mathrm{slot}})_{\gamma\in\Gamma_{\hat R}^{\mathrm{slot}}}
  \in\mathsf{Bool}^{\Gamma_{\hat R}^{\mathrm{slot}}}.
\]
The hit bits must encode the catalogue hit rules from
Subsection~\ref{app:bs-goals}. For source goals, this includes preserving their
event semantics through compilation.
Here \(\widehat u^{\mathrm{pin}}\) encodes a value in \(U_R\). The adapter and
monitor functions in Definitions~\ref{def:interface} and~\ref{def:monitor}, and
\(K_N^{\mathrm{env}}\), have finite domains and therefore admit exact encodings.
Write these as \(\widehat\lambda_\Pi\), \(\widehat\tau_\Pi\),
\(\widehat\mu\) and \(\widehat K_N^{\mathrm{env}}\). On valid encodings, the
first three return the encoded results of \(\lambda_\Pi\), \(\tau_\Pi\) and
\(\mu\), while \(\widehat K_N^{\mathrm{env}}\) holds exactly for traces
satisfying \(K_N^{\mathrm{env}}\). For
\(\widehat x=(\widehat d,\widehat\eta,\widehat z)\), define the product
initial-state predicate by
\[
  \widehat I_{D_{\hat R}}(\widehat x)
  \quad\Longleftrightarrow\quad
  d\in I_\Pi\ \wedge\ \widehat I_R(\widehat\eta)\ \wedge\ z=z_0,
\]
where \(d\) and \(z\) are the values decoded from \(\widehat d\) and
\(\widehat z\). The monitor hit set is represented by
\(
  \mathbf h^{\mathrm{mon}}
  =(h_\gamma^{\mathrm{mon}})_{\gamma\in\Gamma_{\hat R}^{\mathrm{mon}}}
  \in\mathsf{Bool}^{\Gamma_{\hat R}^{\mathrm{mon}}}
\).

\begin{assumption}[RTL semantic fidelity]
\label{asm:lr}
Under the event-slot semantics of Definition~\ref{def:slot}, every valid
encoding \(\widehat\eta\) of \(\eta\) satisfies
\[
  \widehat I_R(\widehat\eta)
  \quad\Longleftrightarrow\quad \eta\in I_R.
\]
For every valid encoding of
\(\eta,u^{\mathrm{pin}},e,\eta',o\) and every
\(\mathbf h^{\mathrm{slot}}
\in\mathsf{Bool}^{\Gamma_{\hat R}^{\mathrm{slot}}}\),
\[
\begin{aligned}
  &\widehat{\delta}_{\hat R}^{\mathrm{slot}}
    (\widehat\eta,\widehat u^{\mathrm{pin}},\widehat e,
     \widehat\eta',\widehat o,\mathbf h^{\mathrm{slot}})\\
  &\quad\Longleftrightarrow\quad
    \delta_{\hat R}^{\mathrm{slot}}(\eta,u^{\mathrm{pin}},e)
    =\left(\eta',o,
      \{\gamma\in\Gamma_{\hat R}^{\mathrm{slot}}\mid
        h_\gamma^{\mathrm{slot}}=\mathsf{true}\}\right).
\end{aligned}
\]
\end{assumption}

\begin{theorem}[Exact RTL encoding]
\label{thm:rtl-exact}
Under Assumption~\ref{asm:lr} and the exact component encodings fixed above,
set
\[
  X_t^{D_{\hat R}}=X_{\hat R}^{\mathrm{cl}}\quad(t\le N),
  \qquad
  U_t^{D_{\hat R}}=\Sigma_\Pi,\quad
  Y_t^{D_{\hat R}}=Y_{\hat R}^{\mathrm{cl}}\quad(t<N),
  \qquad
  I_{D_{\hat R}}=I_{\hat R}^{\mathrm{cl}}.
\]
For \(t<N\), let
\(\mathbf h_t=(h_{t,\gamma})_{\gamma\in\Gamma_{\hat R}}\), set
\(\widehat u_t^{\mathrm{pin}}
=\widehat\lambda_\Pi(\widehat d_t,\widehat\sigma_t)\), and define
\[
\begin{aligned}
  &\widehat{\delta}_t^{D_{\hat R}}
    (\widehat x_t,\widehat\sigma_t,\widehat x_{t+1},
     \widehat y_t,\mathbf h_t)
  \\[-2pt]
  {}\Longleftrightarrow{}&
  \exists
  (\mathbf h_t^{\mathrm{slot}},\mathbf h_t^{\mathrm{mon}})
  \in
  \mathsf{Bool}^{\Gamma_{\hat R}^{\mathrm{slot}}}
  \times\mathsf{Bool}^{\Gamma_{\hat R}^{\mathrm{mon}}}:\\[-2pt]
  &\qquad
  \widehat{\delta}_{\hat R}^{\mathrm{slot}}
  (\widehat\eta_t,\widehat u_t^{\mathrm{pin}},\widehat e_t,
   \widehat\eta_{t+1},\widehat o_t,\mathbf h_t^{\mathrm{slot}})\\
  &\qquad\wedge\
      (\widehat d_{t+1},\widehat a_t,\widehat b_t)
      =\widehat\tau_\Pi(\widehat d_t,\widehat\sigma_t,
                         \widehat u_t^{\mathrm{pin}},\widehat o_t)\\
  &\qquad\wedge\ (\widehat z_{t+1},\mathbf h_t^{\mathrm{mon}})
      =\widehat\mu(\widehat z_t,\widehat\sigma_t,\widehat\eta_t,
                    \widehat u_t^{\mathrm{pin}},\widehat o_t,
                    \widehat\eta_{t+1},\widehat a_t,\widehat b_t)\\
  &\qquad\wedge\
      \bigwedge_{\gamma\in\Gamma_{\hat R}^{\mathrm{slot}}}
      \left(h_{t,\gamma}\leftrightarrow
            h_{t,\gamma}^{\mathrm{slot}}\right)\\
  &\qquad\wedge\
      \bigwedge_{\gamma\in\Gamma_{\hat R}^{\mathrm{mon}}}
      \left(h_{t,\gamma}\leftrightarrow
            h_{t,\gamma}^{\mathrm{mon}}\right).
\end{aligned}
\]
These carriers form a finite layered representation of \(D_{\hat R}\), and
the component encodings, validity predicates,
\(\widehat I_{D_{\hat R}}\), \(\widehat K_N^{\mathrm{env}}\) and
\((\widehat\delta_t^{D_{\hat R}})_{t<N}\) form an exact bounded encoding of
\(\mathcal A_{D_{\hat R}}\).
\end{theorem}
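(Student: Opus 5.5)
The proof has two parts. First I show that the stated carriers form a finite layered representation of \(D_{\hat R}\) in the sense of Definition~\ref{def:finite-representation}. Then I check, one conjunct at a time, that the listed predicates meet the equivalences in Definition~\ref{def:exact-enc}.

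\emph{Finite layered representation.} Finiteness comes from the admitted-RTL hypotheses. \(Q\), \(U_R\), \(Y_R\), \(\mathcal E_R\), \(Z\) and \(\mathcal A_\Pi\) are finite. \(X_\Pi\) is finite because the input queues range over the fixed bounds of \(\mathcal S_{\hat R}\) and the bookkeeping \(r\) is finite. The transaction types \(T_c\) are finite, so \(\mathcal B^{\mathrm{in}}\) and \(\mathcal B^{\mathrm{out}}\) are finite. Hence \(X_{\hat R}^{\mathrm{cl}}\), \(\Sigma_\Pi\) and \(Y_{\hat R}^{\mathrm{cl}}\) are finite.

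Because every layer is the whole state, input or output set, the remaining conditions are immediate:
\begin{itemize}
\item Trace support holds because any admitted trace lives in these sets.
\item Layer closure holds because \(\delta_{\hat R}^{\mathrm{cl}}\) maps into \(X_{\hat R}^{\mathrm{cl}}\times Y_{\hat R}^{\mathrm{cl}}\times 2^{\Gamma_{\hat R}}\). Its hit set \(H_t^{\mathrm{slot}}\cup H_t^{\mathrm{mon}}\) lies in \(\Gamma_{\hat R}\) by the producer partition.
\item For the initial carrier, \(I_{D_{\hat R}}=I_{\hat R}^{\mathrm{cl}}=I\). It equals \(I\cap X_0\), and it contains every initial state of an admitted trace.
\end{itemize}

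\emph{Exactness of \(\widehat I_{D_{\hat R}}\) and \(\widehat K_N^{\mathrm{env}}\).}
\begin{itemize}
\item Validity predicates: product validity is componentwise by construction, so they recognize exactly the encodings of each carrier.
\item \(\widehat I_{D_{\hat R}}\): Assumption~\ref{asm:lr} gives \(\widehat I_R(\widehat\eta)\iff\eta\in I_R\). The adapter and monitor parts check \(d\in I_\Pi\) and \(z=z_0\) directly on decoded values. Hence \(\widehat I_{D_{\hat R}}(\widehat x)\iff \mathrm{dec}(\widehat x)\in I_\Pi\times I_R\times\{z_0\}\).
\item \(\widehat K_N^{\mathrm{env}}\): it holds exactly on traces satisfying \(K_N^{\mathrm{env}}\) by the stated exact encoding of that finite-domain predicate.
\end{itemize}

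\emph{Exactness of the transition relation.} This is the main step. Fix valid encodings of \(x_t=(d_t,\eta_t,z_t)\), \(\sigma_t\), \(x_{t+1}\) and \(y_t=(o,a,b)\), and a hit vector \(\mathbf h_t\). Let \(H=\{\gamma\mid h_{t,\gamma}\}\). I must show that \(\widehat\delta_t^{D_{\hat R}}\) holds iff \(\delta_{\hat R}^{\mathrm{cl}}(x_t,\sigma_t)=(x_{t+1},y_t,H)\).

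(\(\Leftarrow\)) Unfold Definition~\ref{def:product}. Take \(\mathbf h^{\mathrm{slot}}\) to be the indicator of \(H_t^{\mathrm{slot}}\) and \(\mathbf h^{\mathrm{mon}}\) the indicator of \(H_t^{\mathrm{mon}}\). Then:
\begin{itemize}
\item \(\widehat\lambda_\Pi\) yields the encoding of \(u_t^{\mathrm{pin}}\).
\item The slot conjunct holds by Assumption~\ref{asm:lr}.
\item The \(\widehat\tau_\Pi\) and \(\widehat\mu\) equalities hold by exactness of those finite encodings.
\item The two biconditional blocks hold because \(H=H_t^{\mathrm{slot}}\cup H_t^{\mathrm{mon}}\) with the two catalogues disjoint.
\end{itemize}

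(\(\Rightarrow\)) Take the existential witnesses. Assumption~\ref{asm:lr} decodes the slot conjunct to \(\delta^{\mathrm{slot}}_{\hat R}(\eta_t,u^{\mathrm{pin}}_t,e_t)=(\eta_{t+1},o,H^{\mathrm{slot}})\). The adapter and monitor equalities decode to the \(\tau_\Pi\) and \(\mu\) equations. The biconditionals then force \(H=H^{\mathrm{slot}}\cup H^{\mathrm{mon}}\).

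Two subtleties need care here.
\begin{itemize}
\item The intermediate value \(\widehat u^{\mathrm{pin}}_t\) must be a valid encoding before Assumption~\ref{asm:lr} can be applied. This follows because \(\widehat\lambda_\Pi\) returns encoded results of the total function \(\lambda_\Pi\) on valid inputs.
\item Every \(\gamma\in\Gamma_{\hat R}\) must be constrained by exactly one biconditional block, so that no goal hit is added or dropped. This is precisely the producer partition \(\Gamma_{\hat R}=\Gamma^{\mathrm{slot}}_{\hat R}\cup\Gamma^{\mathrm{mon}}_{\hat R}\) with empty intersection.
\end{itemize}

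Since \(\delta_{\hat R}^{\mathrm{cl}}\) is deterministic, the decoded components coincide with its output. This gives the required equivalence and completes the exact bounded encoding of \(\mathcal A_{D_{\hat R}}\).
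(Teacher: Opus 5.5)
Your proposal is correct and follows the paper's proof essentially step for step. The full carriers give finiteness, trace support, the initial-carrier condition and layer closure (via totality of \(\delta_{\hat R}^{\mathrm{cl}}\)), and the transition equivalence comes from exactness of \(\widehat\lambda_\Pi\), \(\widehat\tau_\Pi\) and \(\widehat\mu\), Assumption~\ref{asm:lr}, and the producer partition; your explicit two-direction argument and your check that \(\widehat u_t^{\mathrm{pin}}\) is a valid encoding only spell out details the paper leaves implicit.
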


\begin{proof}
The displayed carriers are finite by
Definitions~\ref{def:txn-interface}, \ref{def:slot}, \ref{def:interface}
and~\ref{def:monitor}. Because they are the full carriers of
\(\mathcal M_{\hat R}\), every trace in \(\mathrm{Tr}(D_{\hat R})\) satisfies
trace support, and
\(I_{D_{\hat R}}=I_{\hat R}^{\mathrm{cl}}\) satisfies the initial-carrier
condition. Totality of \(\delta_{\hat R}^{\mathrm{cl}}\) gives layer closure.
Hence the displayed carriers form a finite layered representation. Fix
\(t<N\) and valid encodings of
\(x_t,\sigma_t,x_{t+1},y_t\). Exactness of
\(\widehat\lambda_\Pi\), Assumption~\ref{asm:lr}, and exactness of
\(\widehat\tau_\Pi\) and \(\widehat\mu\) give the three component updates in
Definition~\ref{def:product}. Assumption~\ref{asm:lr} identifies the slot hit
vector with \(H_t^{\mathrm{slot}}\), while exactness of \(\widehat\mu\)
identifies the monitor hit vector with \(H_t^{\mathrm{mon}}\). The producer
partition gives their union \(H_t\). Therefore
\[
\begin{aligned}
  &\widehat\delta_t^{D_{\hat R}}
    (\widehat x_t,\widehat\sigma_t,\widehat x_{t+1},
     \widehat y_t,\mathbf h_t)\\
  &\quad\Longleftrightarrow\quad
    \delta_{\hat R}^{\mathrm{cl}}(x_t,\sigma_t)
    =\left(x_{t+1},y_t,
      \{\gamma\in\Gamma_{\hat R}\mid
        h_{t,\gamma}=\mathsf{true}\}\right).
\end{aligned}
\]
This is the transition equivalence required by
Definition~\ref{def:exact-enc}. For valid encoded sequences, write
\(\widehat{\mathbf x},\widehat\sigma,\widehat{\mathbf y}\) for the encoded
values and \(\mathbf x,\sigma,\mathbf y\) for their decodings. The definition
of \(\widehat I_{D_{\hat R}}\) and Assumption~\ref{asm:lr} give the first
equivalence below. Exactness of \(\widehat K_N^{\mathrm{env}}\) gives the
second:
\[
  \widehat I_{D_{\hat R}}(\widehat x_0)
  \Longleftrightarrow x_0\in I_{D_{\hat R}},
  \qquad
  \widehat K_N^{\mathrm{env}}
    (\widehat{\mathbf x},\widehat\sigma,\widehat{\mathbf y})
  \Longleftrightarrow
  K_N^{\mathrm{env}}(\mathbf x,\sigma,\mathbf y).
\]
Together with the component validity predicates, these are all the conditions
of Definition~\ref{def:exact-enc}.
\end{proof}

\begin{assumption}[RTL catalogue fidelity]
\label{asm:gr}
The RTL instrumentation and monitors implement exactly the identifiers and hit
rules fixed by the goal policy. For
\(\gamma\in\Gamma_{\hat R}^{\mathrm{slot}}\),
\(h_\gamma^{\mathrm{slot}}\) is true exactly when an RTL occurrence assigned to
\(\gamma\) activates. For
\(\gamma\in\Gamma_{\hat R}^{\mathrm{mon}}\),
\(h_\gamma^{\mathrm{mon}}\) is true exactly when the corresponding monitor emits
\(\gamma\).
\end{assumption}

\begin{assumption}[RTL replay fidelity]
\label{asm:hr}
For every \(x_0\in I_{\hat R}^{\mathrm{cl}}\) and
\(\sigma\in\Sigma_\Pi^N\), if
\(\textsc{Run}_{D_{\hat R}}(x_0,\sigma)\) returns a raw record, that record is
a complete and faithful account of the actual \(N\)-slot closed-loop replay
from \(x_0\) under \(\sigma\).
\end{assumption}

\paragraph{Catalogue and replay audits.}
The RTL harness instantiates \(\textsc{Run}_{D_{\hat R}}\) by restoring the
selected initial product state and executing all \(N\) schedule entries. Its
record includes the adapter, RTL and monitor states, pin observations,
transaction events and both component hit sets, including reset and
no-transfer slots. Before bounded analysis, the catalogue audit checks that the
complete encoded goal identifiers and hit rules match \(\Gamma_{\hat R}\).
Each raw record is then passed to \(\textsc{Audit}_{D_{\hat R}}\) of
Definition~\ref{def:replay-audit} using
\(\widehat\delta_t^{D_{\hat R}}\). Only records returned as traces are retained.

\subsection{Behavior IR and Procedure Semantics}
\label{app:bs-ir}

\paragraph{Overview.}
RTL advances through event slots and uses a protocol adapter to recover logical
transactions from pins. A Behavior model already operates on transactions, so
one step is one complete model invocation. The RTL event-slot transition is a
semantic primitive, whereas the Behavior transition is derived from the small
IR below. This subsection defines the IR's state updates, outputs, goal hits and
declared failures.

\begin{definition}[Behavior IR]
\label{def:ir}
The semantic types are
\[
  \tau ::= \mathsf{bool}\mid\mathsf{int}\mid\mathsf{array}[n,\tau],
  \qquad n\in\mathbb N_{>0},
\]
with value sets
\[
  [\![\mathsf{bool}]\!]=\mathsf{Bool},\qquad
  [\![\mathsf{int}]\!]=\mathbb Z,\qquad
  [\![\mathsf{array}[n,\tau]]\!]
  = [\![\tau]\!]^n.
\]
Thus arrays have fixed length and value semantics. Let \(\mathcal T\) be the
set of semantic types and \(\mathsf{Var}\) the set of variable names. A typed
context is a finite partial map
\(\Theta:\mathsf{Var}\rightharpoonup\mathcal T\), and
\(\operatorname{dom}(\Theta)\) is the finite set of variables it declares. Its
total typed valuations form the set
\[
  \mathrm{Val}(\Theta)
  =\left\{\rho:
      \operatorname{dom}(\Theta)
      \to\bigcup_{\tau\in\mathcal T}[\![\tau]\!]
      \ \middle|\;
      \rho(z)\in[\![\Theta(z)]\!]
      \text{ for every }z\in\operatorname{dom}(\Theta)\right\}.
\]
Thus each \(\rho\in\mathrm{Val}(\Theta)\) maps every declared variable to a
value of its declared type.
A Behavior procedure \(M\) represents one complete model invocation and
consists of
\[
  M=(\Theta_S,\Theta_I,\Theta_L,\Theta_O,c,
    U_M,Y_M,\mathit{Err}_M,\Gamma_M^{\mathrm{cov}},
    \Gamma_M^{\mathrm{safe}}).
\]
The error names form a finite set \(\mathit{Err}_M\). The finite goal
catalogue is
\[
  \Gamma_M
  =\Gamma_M^{\mathrm{cov}}\cup\Gamma_M^{\mathrm{safe}},
  \qquad
  \Gamma_M^{\mathrm{cov}}\cap\Gamma_M^{\mathrm{safe}}=\emptyset.
\]
Its expressions and commands are
\[
\begin{array}{lcl}
  e &::=& z\mid\mathit{lit}\mid e_1[e_2]
          \mid p(e_1,\ldots,e_m),\\[2pt]
  c &::=& \mathsf{skip}\mid z:=e\mid z[e_1]:=e_2\mid c_1;c_2\\
    && \mid\ \mathsf{if}\ e\ \mathsf{then}\ c_T\ \mathsf{else}\ c_F
       \mid \mathsf{for}\ j<e\ \mathsf{do}\ c\\
    && \mid\ \mathsf{mark}(\gamma)
       \mid \mathsf{fail}(\varepsilon,\gamma_s).
\end{array}
\]
Here \(z\) ranges over variables, \(\mathit{lit}\) over typed literals,
\(p\) over registered primitives, and \(j\) over the read-only indices introduced
by \(\mathsf{for}\). Moreover,
\(\gamma\in\Gamma_M\), \(\gamma_s\in\Gamma_M^{\mathrm{safe}}\), and
\(\varepsilon\in\mathit{Err}_M\). The command \(\mathsf{mark}(\gamma)\)
records \(\gamma\) and continues, while
\(\mathsf{fail}(\varepsilon,\gamma_s)\) records \(\gamma_s\) and produces the
failure \(\mathsf{Err}(\varepsilon,\{\gamma_s\})\) of
Definition~\ref{def:absorb}. Each primitive has a fixed, typed, total
denotation.
Bit vectors, enumerations, and fixed- and floating-point values are represented
by integers and typed primitives. Records are flattened into typed variables,
and finite memories use arrays. Partial source operations, including array
reads and indexed updates, are guarded by explicit validity checks whose
invalid branches execute \(\mathsf{fail}\).
The pairwise-disjoint contexts \(\Theta_S,\Theta_I,\Theta_L\) and
\(\Theta_O\) declare its persistent state, input, temporary and output
variables, respectively. Write
\(\Theta_M=\Theta_S\cup\Theta_I\cup\Theta_L\cup\Theta_O\) for their union.
The command is \(c\), and the finite input and output domains are
\[
  U_M\subseteq\mathrm{Val}(\Theta_I),
  \qquad
  Y_M\subseteq\mathrm{Val}(\Theta_O)
\,.
\]
We regard each valuation of a context as one structured value. Thus \(U_M\)
is the finite set of admitted input records, and \(Y_M\) is the finite set of
normal outputs. Each \(y\in Y_M\) represents the complete ordered output
sequence of one invocation, including the empty and multi-transaction cases.
\end{definition}

\begin{definition}[Big-step command and procedure semantics]
\label{def:behavior-semantics}
A partial valuation \(\rho\) over \(\Theta_M\) satisfies
\(\operatorname{dom}(\rho)\subseteq\operatorname{dom}(\Theta_M)\) and
\(\rho(z)\in[\![\Theta_M(z)]\!]\) for every
\(z\in\operatorname{dom}(\rho)\). Write \(\rho|_{\Theta}\) for the restriction
of \(\rho\) to the variables in a context \(\Theta\). For an expression \(e\)
whose variables belong to \(\operatorname{dom}(\rho)\) and whose array accesses
are in bounds, let \([\![e]\!]_\rho\) denote its value in \(\rho\):
\[
\begin{aligned}
  [\![z]\!]_\rho&=\rho(z),
  & [\![\mathit{lit}]\!]_\rho&=\mathsf{val}(\mathit{lit}),\\
  [\![e_1[e_2]]\!]_\rho
    &=\mathsf{select}([\![e_1]\!]_\rho,[\![e_2]\!]_\rho),
  & [\![p(e_1,\ldots,e_m)]\!]_\rho
    &=[\![p]\!]([\![e_1]\!]_\rho,\ldots,[\![e_m]\!]_\rho).
\end{aligned}
\]
Here \(\mathsf{val}(\mathit{lit})\) denotes the value of a typed literal,
\(\mathsf{select}(a,i)\) returns the \(i\)-th element of \(a\), and
\([\![p]\!]\) is the fixed interpretation of the registered primitive \(p\).
The judgment
\[
  \langle c,\rho\rangle\Downarrow R
\]
relates an initial valuation to the result of executing the complete command
\(c\). A result is either a normal result \(\langle\rho',H\rangle\) or a
failure \(\mathsf{Err}(\varepsilon,H)\), where
\(\varepsilon\in\mathit{Err}_M\) and \(H\subseteq\Gamma_M\). The coverage and
safety hits are \(H\cap\Gamma_M^{\mathrm{cov}}\) and
\(H\cap\Gamma_M^{\mathrm{safe}}\), respectively. For an array \(a\),
a valid index \(k\), and a value \(v\), let
\(\mathsf{update}(a,k,v)\) be \(a\) with its \(k\)-th element replaced by
\(v\). The judgment is the least relation satisfying the following rules. The
primitive-command rules are
\[
  \frac{}
       {\langle\mathsf{skip},\rho\rangle
        \Downarrow\langle\rho,\emptyset\rangle}
  \qquad
  \frac{[\![e]\!]_\rho=v}
       {\langle z:=e,\rho\rangle
        \Downarrow\langle\rho[z\mapsto v],\emptyset\rangle},
\]
\[
  \frac{[\![e_1]\!]_\rho=k\qquad[\![e_2]\!]_\rho=v}
       {\langle z[e_1]:=e_2,\rho\rangle
        \Downarrow
        \langle\rho[z\mapsto\mathsf{update}(\rho(z),k,v)],
                \emptyset\rangle},
  \qquad
  \frac{}
       {\langle\mathsf{mark}(\gamma),\rho\rangle
        \Downarrow\langle\rho,\{\gamma\}\rangle},
\]
\[
  \frac{}
       {\langle\mathsf{fail}(\varepsilon,\gamma_s),\rho\rangle
        \Downarrow\mathsf{Err}(\varepsilon,\{\gamma_s\})}.
\]
Sequential composition has three rules:
\[
  \frac{\langle c_1,\rho\rangle\Downarrow\langle\rho_1,H_1\rangle
        \qquad
        \langle c_2,\rho_1\rangle\Downarrow\langle\rho_2,H_2\rangle}
       {\langle c_1;c_2,\rho\rangle
        \Downarrow\langle\rho_2,H_1\cup H_2\rangle},
\]
\[
  \frac{\langle c_1,\rho\rangle
        \Downarrow\mathsf{Err}(\varepsilon,H_1)}
       {\langle c_1;c_2,\rho\rangle
        \Downarrow\mathsf{Err}(\varepsilon,H_1)},
  \qquad
  \frac{\langle c_1,\rho\rangle\Downarrow\langle\rho_1,H_1\rangle
        \qquad
        \langle c_2,\rho_1\rangle
          \Downarrow\mathsf{Err}(\varepsilon,H_2)}
       {\langle c_1;c_2,\rho\rangle
        \Downarrow\mathsf{Err}(\varepsilon,H_1\cup H_2)}.
\]
The conditional rules are
\[
  \frac{[\![e]\!]_\rho=\mathsf{true}
        \qquad\langle c_T,\rho\rangle\Downarrow R}
       {\langle\mathsf{if}\ e\ \mathsf{then}\ c_T\
        \mathsf{else}\ c_F,\rho\rangle\Downarrow R},
  \qquad
  \frac{[\![e]\!]_\rho=\mathsf{false}
        \qquad\langle c_F,\rho\rangle\Downarrow R}
       {\langle\mathsf{if}\ e\ \mathsf{then}\ c_T\
        \mathsf{else}\ c_F,\rho\rangle\Downarrow R}.
\]
Thus the guard is evaluated once. For
\(m=\max(0,[\![e]\!]_\rho)\), let
\(c^{[m]}=c[j:=0];\cdots;c[j:=m-1]\), using capture-avoiding
substitution and taking \(c^{[0]}=\mathsf{skip}\). The loop rule is
\[
  \frac{m=\max(0,[\![e]\!]_\rho)
        \qquad\langle c^{[m]},\rho\rangle\Downarrow R}
       {\langle\mathsf{for}\ j<e\ \mathsf{do}\ c,\rho\rangle
        \Downarrow R}.
\]
Let \(X_M=\mathrm{Val}(\Theta_S)\). For \(s\in X_M\) and \(i\in U_M\), let
\(\rho_0=s\cup i\) be the partial valuation on
\(\Theta_S\cup\Theta_I\). Locals and outputs are initially undefined. With
this execution relation, call \(M\) \emph{well formed} when \(c\) is well typed
under \(\Theta_M\), guards are Boolean, loop bounds are integers, and only
state, local and output variables may be assigned. For every \(s\in X_M\) and
\(i\in U_M\), every variable read during execution from \(\rho_0\) must be
defined and every array access must be in bounds. Each normal result must also
satisfy
\[
  \langle c,\rho_0\rangle\Downarrow\langle\rho_f,H\rangle
  \quad\Longrightarrow\quad
  \rho_f|_{\Theta_O}\in Y_M.
\]
Define the fallible step induced by \(M\) as
\[
  \delta_M^0(s,i)=
  \begin{cases}
    \bigl(\rho_f|_{\Theta_S},\rho_f|_{\Theta_O},H\bigr)
      & \langle c,\rho_0\rangle
          \Downarrow\langle\rho_f,H\rangle,\\
    \mathsf{Err}(\varepsilon,H)
      & \langle c,\rho_0\rangle
          \Downarrow\mathsf{Err}(\varepsilon,H).
  \end{cases}
\]
\end{definition}

\begin{lemma}[Total instrumented Behavior semantics]
\label{lem:total}
For every well-formed \(M\),
\[
  \forall(s,i)\in X_M\times U_M,\qquad
  \exists!R:\ \langle c,s\cup i\rangle\Downarrow R.
\]
Hence \(\delta_M^0\) is total. Consequently, for every
\(\emptyset\neq I_M^0\subseteq X_M\),
\[
  \mathcal M_M(I_M^0)
  =\mathrm{Abs}_{I_M^0}(\delta_M^0)
\]
is an instrumented semantics in the sense of
Definition~\ref{def:system}. Write \(\bar\delta_M\) for its step function. Its
transitions from a normal state into an error state satisfy
\[
\begin{gathered}
  \forall s\in X_M,\ \forall i\in U_M,\
  \forall\varepsilon\in\mathit{Err}_M,\
  \forall H\subseteq\Gamma_M,\\
  \bar\delta_M(s,i)
  =(\varepsilon,\bot_Y,H)
  \quad\Longrightarrow\quad
  H\cap\Gamma_M^{\mathrm{safe}}\neq\emptyset.
\end{gathered}
\]
\end{lemma}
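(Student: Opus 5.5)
We prove the three claims of Lemma~\ref{lem:total} in order: existence and uniqueness of the big-step result, then the absorption properties, then the safety-hit property of error transitions. The main tool is structural induction on commands. We strengthen the statement so that it applies to arbitrary sub-commands reached during execution. Concretely, we prove by induction on the structure of a command \(c'\) occurring in \(c\) (with loop bodies instantiated by substitution) the following claim. For every partial valuation \(\rho\) that arises during execution from some \(\rho_0=s\cup i\), and that makes all reads of \(c'\) defined and all array accesses in bounds, there is exactly one \(R\) with \(\langle c',\rho\rangle\Downarrow R\). We also require that every normal result \(\langle\rho',H\rangle\) again satisfies the well-formedness side conditions for the commands that follow. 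The well-formedness hypothesis of Definition~\ref{def:behavior-semantics} is exactly what discharges the "reads defined, indices in bounds" premises at every reached point. Typing ensures guards evaluate to Booleans and loop bounds to integers.

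\textbf{Existence.} The cases \(\mathsf{skip}\), \(\mathsf{mark}\) and \(\mathsf{fail}\) are axioms. For assignment and indexed update, we use totality of primitive denotations and of \(\mathsf{select}/\mathsf{update}\) on in-bounds indices, which makes \([\![e]\!]_\rho\) defined. For sequencing, we apply the induction hypothesis to \(c_1\). If the result is an error, the first error rule applies. Otherwise we apply the hypothesis to \(c_2\) from \(\rho_1\), which is still a reached valuation, and use the second or third rule. For the conditional, the guard is a Boolean, so exactly one rule applies and we use the hypothesis on the chosen branch. For the loop, \(m=\max(0,[\![e]\!]_\rho)\) is a finite natural number. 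The unrolled \(c^{[m]}\) is a finite sequence of substituted copies of a structurally smaller body, so induction on the body plus an inner induction on \(m\) gives a result. We therefore order the induction lexicographically by command size, treating \(c^{[m]}\) via the body.

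\textbf{Uniqueness.} We proceed by the same induction. Every syntactic form has rules whose premises are mutually exclusive. The two conditional rules are distinguished by the single guard value. The three sequencing rules are distinguished by whether the unique result of \(c_1\), and then of \(c_2\), is normal or an error. Expression values are functions of \(\rho\), so the conclusion is determined.

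\textbf{Absorption and safety hits.} Totality of \(\delta_M^0\) follows immediately. Since \(U_M\), \(Y_M\) and \(\mathit{Err}_M\) are finite and well-formedness places normal outputs in \(Y_M\), Definition~\ref{def:absorb} makes \(\mathrm{Abs}_{I_M^0}(\delta_M^0)\) an instrumented semantics. For the final claim, we show by the same induction the invariant that any derivation \(\langle c',\rho\rangle\Downarrow\mathsf{Err}(\varepsilon,H)\) has \(\gamma_s\in H\) for the \(\mathsf{fail}(\varepsilon,\gamma_s)\) that raised it. The base case is the \(\mathsf{fail}\) axiom. Every error-propagating rule (the sequencing rules, the conditional, and the loop via unrolling) only enlarges \(H\) by union. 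Since \(\gamma_s\in\Gamma_M^{\mathrm{safe}}\), we get \(H\cap\Gamma_M^{\mathrm{safe}}\neq\emptyset\). An absorbed step \(\bar\delta_M(s,i)=(\varepsilon,\bot_Y,H)\) from a normal state arises only from such a failure, which gives the claim.

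The main obstacle is the loop case. Here the induction must handle the substitution \(c[j:=k]\), which is not a syntactic subterm. The plan is to index the induction by the size of command skeletons modulo substitution of index literals, which is preserved by substitution. The other point needing care is making the "reached valuation" strengthening precise, so that the well-formedness hypothesis, stated only for executions from \(\rho_0\), legitimately covers the intermediate valuations.
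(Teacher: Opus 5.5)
Your proposal is correct and follows the paper's route: structural induction over subcommands and loop-index-substituted copies at reached valuations, with an inner induction on the trip count \(m\) for loops, together with the invariant that every failure result carries the safety goal of its originating \(\mathsf{fail}\), followed by absorption. The one detail the paper states that you leave implicit is that typing places the final state projection in \(X_M\). Together with outputs landing in \(Y_M\), this puts \(\delta_M^0\) in the codomain required by Definition~\ref{def:absorb}; the finiteness of \(U_M\), \(Y_M\) and \(\mathit{Err}_M\) is not needed for that step.
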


\begin{proof}
Fix \((s,i)\in X_M\times U_M\). For every command \(d\) obtained from \(c\) by
taking subcommands and substituting loop indices, and every valuation \(\rho\)
reached at \(d\), prove simultaneously by structural induction that
\[
\begin{gathered}
  \exists!R:\ \langle d,\rho\rangle\Downarrow R,\\
  \langle d,\rho\rangle\Downarrow\mathsf{Err}(\varepsilon,H)
  \quad\Longrightarrow\quad
  H\cap\Gamma_M^{\mathrm{safe}}\neq\emptyset.
\end{gathered}
\]
Well-formedness makes every evaluated expression defined, and its value is
unique. The primitive cases are immediate, and \(\mathsf{fail}\) is the only
one producing a failure, with a hit in \(\Gamma_M^{\mathrm{safe}}\). For
\(d=d_1;d_2\), the unique result of \(d_1\) selects the failure rule or
uniquely determines the input to \(d_2\). Union with earlier hits preserves a
safety hit. A Boolean guard selects exactly one branch. For a loop, the bound
fixes a unique finite \(m\). An inner induction on \(m\), using the structural
induction hypothesis for each \(c[j:=k]\), proves the claim for \(c^{[m]}\).
Taking \(d=c\) and \(\rho=s\cup i\) gives a unique result. On normal completion,
typing places the state projection in \(X_M\), and
well-formedness places the output projection in \(Y_M\). Thus \(\delta_M^0\)
is total, and absorption gives the stated instrumented semantics.
Absorption preserves the hit set on the transition from \(X_M\) into the
corresponding error state, proving the final implication.
\end{proof}

\subsection{Behavior Finite Representation and Exact Encoding}
\label{app:bs-encoding}

\paragraph{Overview.}
Definition~\ref{def:exact-enc} states what an exact bounded encoding must
provide. For a well-formed procedure \(M\) and a nonempty initial set
\(I_M^0\subseteq X_M\), let
\[
  D=(\mathcal M_M(I_M^0),N,K_N).
\]
This subsection constructs an exact bounded encoding of \(D\) using finite sets
that contain every reachable state and every intermediate value.

\begin{definition}[Behavior representation plan]
\label{def:plan}
A \emph{Behavior representation plan} \(P\) for \((M,D)\) assigns finite state
layers, finite sets of internal values, one encoding for each semantic type,
and an exact encoded term for each primitive occurrence, subject to the
conditions below.

\paragraph{State layers.}
For each \(t\le N\), the plan contains a finite set \(S_t\subseteq X_M\). These
sets satisfy
\[
\begin{aligned}
  &\{x_0\mid
      (\mathbf x,\mathbf u,\mathbf y,\mathbf H)
      \in\mathrm{Tr}(D)\}
  \subseteq S_0,\\
  &\{s'\mid \exists s\in S_t,\ u\in U_M,\ y,\ H:
      \delta_M^0(s,u)=(s',y,H)\}
    \subseteq S_{t+1}
    \qquad(t<N).
\end{aligned}
\]
The first condition covers every admitted initial state. The second closes the
normal states under one step. The sets \(S_t\) contain only normal states. If
\(\delta_M^0(s,u)=\mathsf{Err}(\varepsilon,H)\), absorption enters the terminal
state \(\varepsilon\in\mathit{Err}_M\). Program execution then stops. The state
\(\varepsilon\) is repeated only to pad the trace to horizon \(N\). The full
state layers are therefore
\[
  X_t^D:=S_t\cup\mathit{Err}_M
  \qquad(t\le N).
\]

\paragraph{Finite internal value sets.}
For every \(t<N\), command or expression occurrence \(\ell\), and quantity
\(q\) evaluated at \(\ell\), let \(\tau_q\) be the type of \(q\). The plan
contains a finite set
\[
  \mathcal R_{t,\ell}(q)\subseteq[\![\tau_q]\!].
\]
It contains every value taken by \(q\) when a step starts in \(S_t\) with an
input in \(U_M\), including values computed before a failure. The quantities
comprise variables, expression results, array indices and elements, outputs
and loop trip counts. A set may be empty when its occurrence is unreachable.

\paragraph{Encodings.}
For each semantic type \(\tau\) used by \(M\), collect its finite internal value
sets, the component values occurring in \(S_t\), \(U_M\), and \(Y_M\), every
typed literal in \(c\), and the finitely many index constants introduced by the
planned loop expansions. Their union is finite. The plan gives it one injective
fixed-width bit-vector encoding \(\mathrm{enc}_\tau\), so a value has the same
encoding at every program point.

\paragraph{Primitive exactness.}
For an occurrence of the registered primitive \(p(q_1,\ldots,q_m)\) at
\(\ell\), where \(q_j\) is its \(j\)-th operand, write
\(R_j=\mathcal R_{t,\ell}(q_j)\) and
\(R_{\mathrm{out}}=\mathcal R_{t,\ell}(p(q_1,\ldots,q_m))\), and let
\(\tau_j\) and \(\tau_{\mathrm{out}}\) be their types. Its semantic result must
remain in the planned output set:
\[
  \{[\![p]\!](v_1,\ldots,v_m)
      \mid (v_1,\ldots,v_m)\in R_1\times\cdots\times R_m\}
  \subseteq R_{\mathrm{out}}.
\]
The plan also provides a bit-vector term \(\widehat p_{t,\ell}\) satisfying
\[
  \widehat p_{t,\ell}
    \bigl(\mathrm{enc}_{\tau_1}(v_1),\ldots,
          \mathrm{enc}_{\tau_m}(v_m)\bigr)
  =
  \mathrm{enc}_{\tau_{\mathrm{out}}}
    \bigl([\![p]\!](v_1,\ldots,v_m)\bigr)
\]
for every \((v_1,\ldots,v_m)\in R_1\times\cdots\times R_m\). Boolean
connectives, array packing and array update use their exact solver operations.
The solver term \(\operatorname{ite}(b,v_T,v_F)\) returns \(v_T\) when \(b\) is
true and \(v_F\) otherwise. A plan is \emph{valid} if all the above conditions
hold.
\end{definition}

\begin{lemma}[Plan-induced finite representation]
\label{lem:from-absorbed}
Given a valid plan \(P\), put
\[
  \begin{aligned}
  I_D
    &=I_M^0\cap S_0,\\
  U_t^D&=U_M,\qquad Y_t^D=Y_M\cup\{\bot_Y\} &&(t<N).
  \end{aligned}
\]
Together with the full state layers \((X_t^D)_{t\le N}\) defined above, these
sets and initial carrier form a finite layered representation of \(D\), with
layered system
\(\mathcal A_D\). Moreover,
\[
  \mathrm{Tr}(\mathcal A_D)=\mathrm{Tr}(D),
\]
so \(\mathcal A_D\) and \(D\) have the same reachable goals.
\end{lemma}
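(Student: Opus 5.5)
We must prove Lemma \ref{lem:from-absorbed}, the plan-induced finite representation. The approach is to check, one by one, the conditions of Definition~\ref{def:finite-representation} for the proposed carriers. Equality of the trace sets then follows directly from Lemma~\ref{lem:trace-preservation}.

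\textbf{Finiteness and initial carrier.} Each \(S_t\) is finite by the plan. \(\mathit{Err}_M\) is finite by Definition~\ref{def:ir}, so \(X_t^D\) is finite. \(U_M\) and \(Y_M\) are finite domains, so \(U_t^D\) and \(Y_t^D=Y_M\cup\{\bot_Y\}\) are finite.

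For the initial carrier, \(I_D=I_M^0\cap S_0\subseteq I\cap X_0^D\), because \(I=I_M^0\) for \(\mathcal M_M(I_M^0)\) and \(S_0\subseteq X_0^D\). Every admitted initial state lies in \(I_M^0\) by Definition~\ref{def:domain}, and it lies in \(S_0\) by the first state-layer condition. Hence every admitted initial state is in \(I_D\).

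\textbf{Layer closure.} Fix \(t<N\), \(x\in X_t^D\) and \(u\in U_M\), and split into three cases using Lemma~\ref{lem:total} and the absorption of Definition~\ref{def:absorb}.
\begin{itemize}
\item If \(x=\varepsilon\in\mathit{Err}_M\), then \(\bar\delta_M(x,u)=(\varepsilon,\bot_Y,\emptyset)\). This lies in \(X_{t+1}^D\times Y_t^D\times 2^\Gamma\).
\item If \(x\in S_t\) and \(\delta_M^0(x,u)=(s',y,H)\), then \(s'\in S_{t+1}\) by the second state-layer condition. Well-formedness gives \(y\in Y_M\).
\item If \(x\in S_t\) and \(\delta_M^0(x,u)=\mathsf{Err}(\varepsilon,H)\), the result is \((\varepsilon,\bot_Y,H)\). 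Here \(\varepsilon\in X_{t+1}^D\).
\end{itemize}

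\textbf{Trace support.} This is the main point to argue carefully. Inputs trivially lie in \(U_t^D=U_M\), which is the entire input set of \(\mathcal M_M\). For states and outputs, argue by induction on \(t\) along an arbitrary admitted trace. The base case is \(x_0\in I_D\subseteq X_0^D\). For the step, \(x_t\in X_t^D\) and \(u_t\in U_t^D\), so layer closure gives \(x_{t+1}\in X_{t+1}^D\) and \(y_t\in Y_t^D\).

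The obstacle is conceptual rather than computational. The plan's closure condition only covers normal states, so error states must be handled separately. This is why \(X_t^D\) adds \(\mathit{Err}_M\) at every layer: a state reached by failure at any earlier step stays in some \(X_t^D\) under the stuttering rule. The case split above handles exactly this.

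With all conditions verified, Lemma~\ref{lem:trace-preservation} gives \(\mathrm{Tr}(\mathcal A_D)=\mathrm{Tr}(D)\) and equality of the reachable goal sets. The internal value sets and primitive encodings of the plan are not needed here; they are used later for exactness of the encoding.
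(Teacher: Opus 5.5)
Your proposal is correct and follows the paper's proof essentially step for step. You check finiteness of the carriers, then the initial-carrier inclusion via \(I=I_M^0\) and the plan's first state-layer condition, then layer closure by the same three cases (normal step, failing step, already-absorbed error state), then trace support by induction from \(x_0\in I_D\), and finally conclude with Lemma~\ref{lem:trace-preservation}. You leave implicit only that \(I_D\) itself is finite, which is immediate from \(I_D\subseteq S_0\).
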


\begin{proof}
Each \(S_t\) is finite by validity of \(P\), and
\(\mathit{Err}_M\), \(U_M\), and \(Y_M\) are finite by
Definition~\ref{def:ir}. Hence every \(X_t^D\), \(U_t^D\), and \(Y_t^D\) is
finite, and \(I_D\) is finite because \(I_D\subseteq S_0\).
Every trace of \(D\) starts in \(I_M^0\), and the initial-layer condition of
\(P\) places its initial state in \(S_0\). Therefore
\[
  \{x_0\mid
      (\mathbf x,\mathbf u,\mathbf y,\mathbf H)\in\mathrm{Tr}(D)\}
  \subseteq I_D\subseteq I_M^0\cap X_0^D.
\]
Fix \(t<N\), \(x\in X_t^D\), and \(u\in U_t^D=U_M\). If \(x\in S_t\), a
normally completing step reaches \(S_{t+1}\) by the state-layer closure of
\(P\) and produces an output in \(Y_M\). A failing step reaches some
\(\varepsilon\in\mathit{Err}_M\) and produces \(\bot_Y\). If
\(x\in\mathit{Err}_M\), absorption keeps the same error state and again
produces \(\bot_Y\). In every case,
\[
  \bar\delta_M(x,u)
  \in X_{t+1}^D\times Y_t^D\times 2^{\Gamma_M}.
\]
This proves layer closure. Starting from the initial inclusion above, induction
on \(t\) gives the remaining trace-support conditions. The displayed carriers
therefore form a finite layered representation of \(D\). Finally,
Lemma~\ref{lem:trace-preservation} gives the trace equality and equality of the
reachable-goal sets.
\end{proof}

\paragraph{State, input and output encoding.}
A normal state, input or output is a typed valuation. Encode it by concatenating
the type encodings of its components in their fixed context order. Disjoint
tags distinguish normal states from terminal errors and normal outputs from
\(\bot_Y\). A terminal error also carries an injective encoding of its name.
Any otherwise empty boundary encoding receives one fixed padding bit. This
gives the injective maps required by
Definition~\ref{def:exact-enc}, with positive bit widths
\(d_t^X\), \(d_t^U\), and \(d_t^Y\):
\[
\begin{gathered}
  \bigl(\mathrm{enc}^X_t:X_t^D\to\mathbb B^{d_t^X}\bigr)_{t\le N},
  \qquad
  \bigl(\mathrm{enc}^U_t:U_M\to\mathbb B^{d_t^U}\bigr)_{t<N},\\
  \bigl(\mathrm{enc}^Y_t:(Y_M\cup\{\bot_Y\})
        \to\mathbb B^{d_t^Y}\bigr)_{t<N}.
\end{gathered}
\]
Let \(\operatorname{Pack}^X_r\) and \(\operatorname{Pack}^Y_r\) denote the
fixed wiring terms that concatenate the selected component encodings and add
the normal-state or normal-output tag. Their defining identities are
\[
\begin{aligned}
  \operatorname{Pack}^X_r
    \bigl((\mathrm{enc}_{\Theta_S(z)}(s(z)))_{z\in\operatorname{dom}(\Theta_S)}\bigr)
    &=\mathrm{enc}^X_r(s)
      &&(r\le N,\ s\in S_r),\\
  \operatorname{Pack}^Y_r
    \bigl((\mathrm{enc}_{\Theta_O(z)}(y(z)))_{z\in\operatorname{dom}(\Theta_O)}\bigr)
    &=\mathrm{enc}^Y_r(y)
      &&(r<N,\ y\in Y_M).
\end{aligned}
\]
Their inverses on their images and their validity predicates are defined as in
Definition~\ref{def:exact-enc}. The semantic initial-state condition is
\(x_0\in I_D\). Its encoded form is
\[
  \widehat I_D(\widehat x_0)
  \equiv
  \bigvee_{s\in I_D}
    \widehat x_0=\mathrm{enc}^X_0(s).
\]
Thus \(\widehat I_D(\widehat x_0)\) holds exactly when the solver variable
\(\widehat x_0\) equals the encoding of some initial state in \(I_D\).
Define \(\widehat K_N\) by enumerating exactly the valid encoded sequences
whose decoded sequences satisfy \(K_N\). This enumeration is finite because
all carriers are finite.

\begin{definition}[Plan-induced command and step translation]
\label{def:symstep}
Fix a valid plan \(P\) and \(t<N\). For each expression occurrence \(e\), let
\(\widehat e_t(\widehat\rho)\) be its bit-vector translation. A symbolic
valuation \(\widehat\rho\) assigns every variable in \(\Theta_M\) a bit vector
of the width fixed for its declared type, including variables not yet defined by
the concrete execution. Write \(\widehat{\mathsf{select}}\) and
\(\widehat{\mathsf{update}}\) for exact encoded array selection and update. The
translation is
\[
\begin{aligned}
  \widehat z_t(\widehat\rho)
    &=\widehat\rho(z),\\
  \widehat{\mathit{lit}}_t(\widehat\rho)
    &=\mathrm{enc}_{\tau_{\mathit{lit}}}
        (\mathsf{val}(\mathit{lit})),\\
  \widehat{e_1[e_2]}_t(\widehat\rho)
    &=\widehat{\mathsf{select}}
        \bigl(\widehat{e_1}_t(\widehat\rho),
              \widehat{e_2}_t(\widehat\rho)\bigr),\\
  \widehat{p(e_1,\ldots,e_m)}_t(\widehat\rho)
    &=\widehat p_{t,\ell}
        \bigl(\widehat{e_1}_t(\widehat\rho),\ldots,
              \widehat{e_m}_t(\widehat\rho)\bigr),
\end{aligned}
\]
where \(\ell\) is the primitive occurrence. If every value read by \(e\) is
defined in \(\rho\) and \(\widehat\rho\) encodes those defined components, then
\[
  \widehat e_t(\widehat\rho)
  =\mathrm{enc}_{\tau_e}([\![e]\!]_\rho).
\]
For each command occurrence \(c'\) in the complete body \(c\), define
\[
  \operatorname{SymC}_{t}
  (c';\widehat\rho,\pi)
  =(\widehat\kappa,\widehat\rho',
    \widehat\varepsilon,\mathbf h).
\]
Here \(\pi\) states that \(c'\) is reached, and \(\widehat\kappa\) states that it
is reached and completes normally. The failure component is used only when
\(\pi\wedge\neg\widehat\kappa\) holds. Let \(\star\) be an arbitrary fixed
vector of the required width, let \(\mathbf 0\) be the all-false hit vector,
and let \(\mathbf h_{\{\gamma\}}\) be the one-hot vector for \(\gamma\).
Boolean operations and \(\operatorname{ite}\) act componentwise on valuations
and hit vectors. The primitive commands translate as
\[
\begin{aligned}
  \operatorname{SymC}_t(\mathsf{skip};\widehat\rho,\pi)
    &=(\pi,\widehat\rho,\star,\mathbf 0),\\
  \operatorname{SymC}_t(z:=e;\widehat\rho,\pi)
    &=\Bigl(\pi,
      \widehat\rho\bigl[z\mapsto
        \operatorname{ite}(\pi,\widehat e_t(\widehat\rho),
                            \widehat\rho(z))\bigr],
      \star,\mathbf 0\Bigr),\\
  \operatorname{SymC}_t(z[e_1]:=e_2;\widehat\rho,\pi)
    &=\Bigl(\pi,
      \widehat\rho\bigl[z\mapsto
        \operatorname{ite}\bigl(\pi,
          \widehat{\mathsf{update}}(
            \widehat\rho(z),\widehat{e_1}_t(\widehat\rho),
            \widehat{e_2}_t(\widehat\rho)),\widehat\rho(z)\bigr)\bigr],
      \star,\mathbf 0\Bigr),\\
  \operatorname{SymC}_t(\mathsf{mark}(\gamma);\widehat\rho,\pi)
    &=(\pi,\widehat\rho,\star,
       \pi\wedge\mathbf h_{\{\gamma\}}),\\
  \operatorname{SymC}_t(
      \mathsf{fail}(\varepsilon,\gamma_s);\widehat\rho,\pi)
    &=(\mathsf{false},\widehat\rho,
       \mathrm{enc}_{t+1}^X(\varepsilon),
       \pi\wedge\mathbf h_{\{\gamma_s\}}).
\end{aligned}
\]
For sequence, let
\[
\begin{aligned}
  (\kappa_1,\widehat\rho_1,\widehat\varepsilon_1,\mathbf h_1)
    &=\operatorname{SymC}_t(c_1;\widehat\rho,\pi),\\
  (\kappa_2,\widehat\rho_2,\widehat\varepsilon_2,\mathbf h_2)
    &=\operatorname{SymC}_t(c_2;\widehat\rho_1,\kappa_1).
\end{aligned}
\]
Then
\[
  \operatorname{SymC}_t(c_1;c_2;\widehat\rho,\pi)
  =\bigl(\kappa_2,\widehat\rho_2,
      \operatorname{ite}(\kappa_1,\widehat\varepsilon_2,
                          \widehat\varepsilon_1),
      \mathbf h_1\vee\mathbf h_2\bigr).
\]
For a conditional, put \(g=\widehat e_t(\widehat\rho)\) and
\[
\begin{aligned}
  (\kappa_T,\widehat\rho_T,\widehat\varepsilon_T,\mathbf h_T)
    &=\operatorname{SymC}_t(c_T;\widehat\rho,\pi\wedge g),\\
  (\kappa_F,\widehat\rho_F,\widehat\varepsilon_F,\mathbf h_F)
    &=\operatorname{SymC}_t(c_F;\widehat\rho,\pi\wedge\neg g).
\end{aligned}
\]
The conditional translation is
\[
  \operatorname{SymC}_t(
    \mathsf{if}\ e\ \mathsf{then}\ c_T\ \mathsf{else}\ c_F;
    \widehat\rho,\pi)
  =\bigl(
    \operatorname{ite}(g,\kappa_T,\kappa_F),
    \operatorname{ite}(g,\widehat\rho_T,\widehat\rho_F),
    \operatorname{ite}(g,\widehat\varepsilon_T,\widehat\varepsilon_F),
    \mathbf h_T\vee\mathbf h_F\bigr).
\]
For a loop occurrence \(\ell\), let \(m\) be the nonnegative trip count,
\(\tau_m\) its type, and \(\widehat m\) its exact encoded value at loop entry.
Define the finite comparison predicate
\[
  \widehat{\mathsf{lt}}_{t,\ell}(k,\widehat m)
  :=\bigvee_{\substack{n\in\mathcal R_{t,\ell}(m)\\k<n}}
      \bigl(\widehat m=\mathrm{enc}_{\tau_m}(n)\bigr),
\]
where an empty disjunction is false, and set
\[
  L_{t,\ell}=\max\bigl(\{0\}\cup\mathcal R_{t,\ell}(m)\bigr).
\]
Initialize
\(a_0=\pi\), \(\widehat\rho_0=\widehat\rho\),
\(\widehat\varepsilon_0=\star\), and \(\mathbf h_0=\mathbf 0\). For
\(k<L_{t,\ell}\), define
\[
\begin{gathered}
  b_k=a_k\wedge\widehat{\mathsf{lt}}_{t,\ell}(k,\widehat m),\\
  (\kappa_k,\widehat\rho_{k+1},
   \widehat\varepsilon'_k,\mathbf h'_k)
  =\operatorname{SymC}_t(c[j:=k];\widehat\rho_k,b_k),\\
  a_{k+1}=a_k\wedge
    \bigl(\neg\widehat{\mathsf{lt}}_{t,\ell}(k,\widehat m)
          \vee\kappa_k\bigr),\\
  \widehat\varepsilon_{k+1}
  =\operatorname{ite}(b_k\wedge\neg\kappa_k,
                      \widehat\varepsilon'_k,
                      \widehat\varepsilon_k),
  \qquad
  \mathbf h_{k+1}=\mathbf h_k\vee\mathbf h'_k.
\end{gathered}
\]
The loop translation is
\[
  \operatorname{SymC}_t(
    \mathsf{for}\ j<e\ \mathsf{do}\ c;\widehat\rho,\pi)
  = (a_{L_{t,\ell}},\widehat\rho_{L_{t,\ell}},
     \widehat\varepsilon_{L_{t,\ell}},\mathbf h_{L_{t,\ell}}).
\]
It emits
no copies for an empty range, executes exactly the first \(m\) copies when the
loop is reached, and disables every later copy after a failure.
At the step boundary, let
\[
  \widehat x_t=\mathrm{enc}^X_t(s_t),\qquad
  \widehat u_t=\mathrm{enc}^U_t(u_t),\qquad
  \rho_{\mathrm{in},t}=s_t\cup u_t
  \quad(s_t\in S_t,\ u_t\in U_M).
\]
For each local or output variable \(z\), fix an arbitrary bit vector
\(\widehat\bot_z\) of that width. The total symbolic input valuation is
\[
  \widehat\rho_{\mathrm{in},t}(z)=
  \begin{cases}
    \mathrm{enc}_{\Theta_S(z)}(s_t(z))
      &z\in\operatorname{dom}(\Theta_S),\\
    \mathrm{enc}_{\Theta_I(z)}(u_t(z))
      &z\in\operatorname{dom}(\Theta_I),\\
    \widehat\bot_z
      &z\in\operatorname{dom}(\Theta_L\cup\Theta_O).
  \end{cases}
\]
Only active reads must encode defined concrete values. The fixed vectors make
inactive symbolic branches well defined and do not affect an active result.
For \(t<N-1\), normal execution determines the next input valuation by
\[
  \langle c,\rho_{\mathrm{in},t}\rangle
    \Downarrow\langle\rho_{f,t},H_t\rangle
  \quad\Longrightarrow\quad
  \rho_{\mathrm{in},t+1}
  =\bigl(\rho_{f,t}|_{\Theta_S}\bigr)\cup u_{t+1}.
\]
Translating the complete body gives
\[
  (\widehat\kappa_t,\widehat\rho_{f,t},
    \widehat\varepsilon_t,\mathbf h_t)
  =\operatorname{SymC}_t
    (c;\widehat\rho_{\mathrm{in},t},\mathsf{true}).
\]
The symbolic step projects the final valuation onto its state and output
components and repacks them:
\[
\begin{aligned}
  \operatorname{SymStep}_{M,t}(\widehat x_t,\widehat u_t)
  &=(\widehat x_{t+1},\widehat y_t,\mathbf h_t),\\
  (\widehat x_{t+1},\widehat y_t,\mathbf h_t)
  &=
  \Bigl(
    \operatorname{ite}\bigl(
      \widehat\kappa_t,
      \operatorname{Pack}^X_{t+1}(\widehat\rho_{f,t}|_{\Theta_S}),
      \widehat\varepsilon_t\bigr),
    \operatorname{ite}\bigl(
      \widehat\kappa_t,
      \operatorname{Pack}^Y_t(\widehat\rho_{f,t}|_{\Theta_O}),
      \mathrm{enc}^Y_t(\bot_Y)\bigr),
    \mathbf h_t
  \Bigr).
\end{aligned}
\]
Thus
\[
  \operatorname{SymStep}_{M,t}:
  \mathbb B^{d_t^X}\times\mathbb B^{d_t^U}
  \longrightarrow
  \mathbb B^{d_{t+1}^X}\times
  \mathbb B^{d_t^Y}\times\mathsf{Bool}^{\Gamma_M}.
\]
\end{definition}

\begin{lemma}[Symbolic-step correctness]
\label{lem:cmd}
For \(s\in S_t\) and \(u\in U_M\),
\(\operatorname{SymStep}_{M,t}\) exactly encodes \(\bar\delta_M\) at layer
\(t\). Namely, if \(\bar\delta_M(s,u)=(s',y,H)\), then
\[
  \operatorname{SymStep}_{M,t}
  \bigl(\mathrm{enc}^X_t(s),
        \mathrm{enc}^U_t(u)\bigr)
  =
  \bigl(\mathrm{enc}^X_{t+1}(s'),
        \mathrm{enc}^Y_t(y),\mathbf h_H\bigr),
\]
where \((\mathbf h_H)_\gamma=\mathsf{true}\) exactly when \(\gamma\in H\).
\end{lemma}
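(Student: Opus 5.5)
The proof of Lemma~\ref{lem:cmd} goes by structural induction on command occurrences, following the same skeleton as the proof of Lemma~\ref{lem:total}. The step is then assembled from the body translation.

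\textbf{Inductive invariant.} For a command occurrence \(d\) reached at layer \(t\) with concrete valuation \(\rho\), take a symbolic valuation \(\widehat\rho\) that encodes every component of \(\rho\) that \(d\) reads. For a path condition \(\pi\in\{\mathsf{true},\mathsf{false}\}\), the invariant for \(\operatorname{SymC}_t(d;\widehat\rho,\pi)=(\widehat\kappa,\widehat\rho',\widehat\varepsilon,\mathbf h)\) has two cases.
\begin{itemize}
\item If \(\pi=\mathsf{false}\), then \(\widehat\kappa=\mathsf{false}\), \(\widehat\rho'=\widehat\rho\) and \(\mathbf h=\mathbf 0\). An inactive occurrence writes nothing and hits nothing.
\item If \(\pi=\mathsf{true}\) and \(\langle d,\rho\rangle\Downarrow\langle\rho_1,H\rangle\), then \(\widehat\kappa=\mathsf{true}\), \(\widehat\rho'\) encodes every defined component of \(\rho_1\), and \(\mathbf h=\mathbf h_H\). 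If instead \(\langle d,\rho\rangle\Downarrow\mathsf{Err}(\varepsilon,H)\), then \(\widehat\kappa=\mathsf{false}\), \(\widehat\varepsilon=\mathrm{enc}^X_{t+1}(\varepsilon)\) and \(\mathbf h=\mathbf h_H\).
\end{itemize}
Lemma~\ref{lem:total} guarantees that exactly one such result exists.

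\textbf{Expressions.} The displayed expression identity \(\widehat e_t(\widehat\rho)=\mathrm{enc}_{\tau_e}([\![e]\!]_\rho)\) needs a nested induction on expressions. The primitive case uses primitive exactness from the plan: the operand values lie in the sets \(\mathcal R_{t,\ell}\), because these sets contain every value taken from \(S_t\times U_M\). Variables, literals and \(\mathsf{select}\) hold by the single injective encoding per type.

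\textbf{Command cases.} Assignments, \(\mathsf{skip}\), \(\mathsf{mark}\) and \(\mathsf{fail}\) follow by unfolding the translation. Sequence splits on whether \(\kappa_1\) holds. When it does not, \(c_2\) runs under \(\pi=\mathsf{false}\), so by the first clause its writes and hits vanish, and the \(\operatorname{ite}\) keeps \(\widehat\varepsilon_1\). For a conditional, the guard value is exact, so exactly one branch is active and the other contributes nothing. Taking \(\operatorname{ite}(g,\cdot,\cdot)\) together with \(\mathbf h_T\vee\mathbf h_F\) therefore reproduces the concrete rule.

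\textbf{Loops (main obstacle).} Let \(m\) be the concrete trip count; it lies in \(\mathcal R_{t,\ell}(m)\), so \(m\le L_{t,\ell}\). Because the encoding is injective, \(\widehat{\mathsf{lt}}_{t,\ell}(k,\widehat m)\) holds exactly when \(k<m\). An inner induction on \(k\le L_{t,\ell}\) then shows that \(a_k\) holds iff every copy before \(k\) completed normally. The \(b_k\) enable exactly copies \(0,\dots,m-1\) up to the first failure. Copies after a failure or beyond \(m\) run with \(\pi=\mathsf{false}\) and are inert by the invariant. The error slot is updated only at the unique failing copy. This matches the unfolded concrete command \(c^{[m]}\) together with the sequence rules.

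The delicate point is that undefined locals hold the arbitrary vectors \(\widehat\bot_z\). The argument must track that only defined components are ever read on active paths. This follows from the well-formedness condition that every variable read from \(\rho_0\) is defined, so the invariant must be stated over defined components only.

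\textbf{Assembling the step.} Apply the invariant to \(c\) with \(\widehat\rho_{\mathrm{in},t}\) and \(\pi=\mathsf{true}\).
\begin{itemize}
\item On normal completion, \(\bar\delta_M(s,u)=(\rho_f|_{\Theta_S},\rho_f|_{\Theta_O},H)\). The state projection lies in \(S_{t+1}\) by layer closure, and the output projection lies in \(Y_M\). The defining identities of \(\operatorname{Pack}^X_{t+1}\) and \(\operatorname{Pack}^Y_t\) then give the encoded state and output.
\item On failure, \(\widehat\kappa_t=\mathsf{false}\), so the \(\operatorname{ite}\) selects \(\mathrm{enc}^X_{t+1}(\varepsilon)\) and \(\mathrm{enc}^Y_t(\bot_Y)\). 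These match the absorbed step \((\varepsilon,\bot_Y,H)\).
\end{itemize}
In both cases the hit vector is \(\mathbf h_H\).
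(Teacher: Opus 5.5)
Your proposal is correct and follows the paper's proof essentially step for step. Both use the same three-clause invariant, proved by structural induction on command occurrences: inert under a false path condition, otherwise encoding either the normal result or \(\mathrm{enc}^X_{t+1}(\varepsilon)\). Both handle loops by an inner induction on \(k\) using \(m\le L_{t,\ell}\), and both assemble the step through the packing identities on normal completion and the \(\operatorname{ite}\) selecting \(\mathrm{enc}^Y_t(\bot_Y)\) on failure.

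One small tightening concerns the hypothesis of the invariant. State it as ``\(\widehat\rho\) encodes every defined component of \(\rho\)'', as the paper does, rather than only the components \(d\) reads. With the weaker hypothesis, the conclusion that \(\widehat\rho'\) encodes every defined component of \(\rho_1\) does not follow for variables that \(d\) neither reads nor writes.
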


\begin{proof}
Let \(\rho_{\mathrm{in}}=s\cup u\). We prove a stronger claim by structural
induction on each command occurrence \(c'\). Fix any path condition \(\pi\)
and any assignment under which \(\widehat\rho\) encodes the defined components
of \(\rho\). If
\[
  (\widehat\kappa,\widehat\rho',
   \widehat\varepsilon,\mathbf h)
  =\operatorname{SymC}_t(c';\widehat\rho,\pi),
\]
then
\[
\begin{array}{ll}
  \pi=\mathsf{false}
  &\Longrightarrow
    \widehat\kappa=\mathsf{false},\quad
    \widehat\rho'=\widehat\rho,\quad
    \mathbf h=\mathbf 0,\\[2pt]
  \pi=\mathsf{true},\quad
  \langle c',\rho\rangle\Downarrow\langle\rho'',H\rangle
  &\Longrightarrow
    \widehat\kappa=\mathsf{true},\quad
    \widehat\rho'=\widehat{\rho''},\quad
    \mathbf h=\mathbf h_H,\\[2pt]
  \pi=\mathsf{true},\quad
  \langle c',\rho\rangle\Downarrow\mathsf{Err}(\varepsilon,H)
  &\Longrightarrow
    \widehat\kappa=\mathsf{false},\quad
    \widehat\varepsilon=\mathrm{enc}^X_{t+1}(\varepsilon),\quad
    \mathbf h=\mathbf h_H.
\end{array}
\]
Only defined valuation components are compared. Internal-range containment
makes every reached value encodable, and primitive exactness makes every
translated expression equal to its concrete value. The displayed primitive
translations give the base cases. Sequence activates its second command
exactly after normal completion of the first and unions their hits. Exact guard
evaluation activates the same conditional arm as the concrete execution. For a
reached loop, range containment gives \(m\le L_{t,\ell}\). Induction on \(k\)
shows that \(a_k\) holds exactly while the first \(\min(k,m)\) copies have
completed normally, with \(\widehat\rho_k\) and \(\mathbf h_k\) encoding their
valuation and accumulated hits. A failing copy records its error and disables
every later copy. Hence the loop agrees with \(c^{[m]}\). This proves the
stronger claim. Apply it to the complete body with \(\pi=\mathsf{true}\) and
\(\rho=\rho_{\mathrm{in}}\). Normal execution gives
\[
  \bar\delta_M(s,u)
  =\bigl(\rho'|_{\Theta_S},\rho'|_{\Theta_O},H\bigr).
\]
Failure with \(\mathsf{Err}(\varepsilon,H)\) gives
\[
  \bar\delta_M(s,u)=(\varepsilon,\bot_Y,H).
\]
In the first case, the two packing identities turn the state and output
projections of \(\widehat{\rho'}\) into the encodings of \(s'\) and \(y\). In
the second, the failure component is already \(\mathrm{enc}^X_{t+1}(s')\), and
the output branch selects \(\mathrm{enc}^Y_t(\bot_Y)\). The two implications
therefore yield the claimed equality.
\end{proof}

\paragraph{Encoded transition.}
The symbolic step starts from a normal state and may return either a normal or
terminal successor. To also encode a step whose current state is already
terminal, define
\[
\begin{aligned}
  &\widehat{\delta}_t^D
  (\widehat x_t,\widehat u_t,\widehat x_{t+1},
   \widehat y_t,\mathbf h_t)\\
  &\quad\equiv
  \Big[
    \Big(\bigvee_{s\in S_t}
      \widehat x_t=\mathrm{enc}^X_t(s)\Big)
    \wedge
    (\widehat x_{t+1},\widehat y_t,\mathbf h_t)
      =\operatorname{SymStep}_{M,t}(\widehat x_t,\widehat u_t)
  \Big]\\
  &\qquad{}\vee{}
  \bigvee_{\varepsilon\in\mathit{Err}_M}
  \Big[
    \widehat x_t=\mathrm{enc}^X_t(\varepsilon)
    \wedge\widehat x_{t+1}=\mathrm{enc}^X_{t+1}(\varepsilon)\\
  &\qquad\qquad
    \wedge\widehat y_t=\mathrm{enc}^Y_t(\bot_Y)
    \wedge\!\bigwedge_{\gamma\in\Gamma_M}\neg h_{t,\gamma}
  \Big].
\end{aligned}
\]
The first disjunct handles normal execution. The second re-encodes the same
terminal error at layer \(t+1\), emits \(\bot_Y\), and records no new hits.

\begin{theorem}[Exact Behavior IR encoding]
\label{thm:exact-b}
Let \(P\) be a valid Behavior representation plan for \((M,D)\), and let
\(\mathcal A_D\) be the finite layered system induced by \(P\) in
Lemma~\ref{lem:from-absorbed}. The boundary encoding maps
\((\mathrm{enc}_t^X)_{t\le N}\),
\((\mathrm{enc}_t^U)_{t<N}\), and
\((\mathrm{enc}_t^Y)_{t<N}\), together with their inverse maps, validity
predicates, \(\widehat I_D\), \(\widehat K_N\), and
\((\widehat{\delta}_t^D)_{t<N}\), form an exact bounded encoding of
\(\mathcal A_D\) as defined in Definition~\ref{def:exact-enc}.
\end{theorem}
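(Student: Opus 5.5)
The plan is to verify, one by one, the conditions of Definition~\ref{def:exact-enc} for the layered system \(\mathcal A_D\) of Lemma~\ref{lem:from-absorbed}. These conditions are: injective boundary encodings with exact validity predicates, exactness of \(\widehat I_D\), exactness of \(\widehat K_N\), and the transition equivalence for each \(\widehat\delta_t^D\). The first three are essentially bookkeeping and should be handled quickly.

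For injectivity, the boundary encodings concatenate injective type encodings \(\mathrm{enc}_\tau\) in a fixed context order. Disjoint tags separate normal states from errors and normal outputs from \(\bot_Y\), and error names are encoded injectively. So \(\mathrm{enc}^X_t\), \(\mathrm{enc}^U_t\) and \(\mathrm{enc}^Y_t\) are injective with positive widths, and the validity predicates and inverses are well defined by construction.

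\(\widehat I_D\) is the disjunction over \(s\in I_D\) of \(\widehat x_0=\mathrm{enc}^X_0(s)\). On valid \(\widehat x_0\), injectivity then gives \(\widehat I_D(\widehat x_0)\iff\mathrm{dec}^X_0(\widehat x_0)\in I_D\). \(\widehat K_N\) is defined by finite enumeration of exactly the valid encoded sequences whose decodings satisfy \(K_N\), so its equivalence is immediate.

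The substantive step is the transition equivalence. Fix \(t<N\), valid \(\widehat x,\widehat u,\widehat x',\widehat y\) and a hit vector \(\mathbf h\). Let \(x=\mathrm{dec}^X_t(\widehat x)\), \(u=\mathrm{dec}^U_t(\widehat u)\), and so on. I would split on whether \(x\in S_t\) or \(x\in\mathit{Err}_M\); since the tags make the encoded sets disjoint, exactly one disjunct of \(\widehat\delta_t^D\) can have a true guard.

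In the normal case \(x=s\in S_t\), the first disjunct reduces to \((\widehat x',\widehat y,\mathbf h)=\operatorname{SymStep}_{M,t}(\mathrm{enc}^X_t(s),\mathrm{enc}^U_t(u))\). Lemma~\ref{lem:cmd} says this right-hand side equals \((\mathrm{enc}^X_{t+1}(s'),\mathrm{enc}^Y_t(y),\mathbf h_H)\), where \(\bar\delta_M(s,u)=(s',y,H)\). Equality of bit vectors plus injectivity of the encodings then gives \(\mathrm{dec}^X_{t+1}(\widehat x')=s'\), \(\mathrm{dec}^Y_t(\widehat y)=y\), and \(\{\gamma\mid h_\gamma\}=H\). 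This is exactly \(\bar\delta_M(x,u)=(\mathrm{dec}(\widehat x'),\mathrm{dec}(\widehat y),\{\gamma\mid h_\gamma\})\), and the converse follows the same way. Lemma~\ref{lem:from-absorbed} guarantees that \(s'\in X_{t+1}^D\) and \(y\in Y_t^D\), so these encodings exist.

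In the terminal case \(x=\varepsilon\), the second disjunct holds iff \(\widehat x'=\mathrm{enc}^X_{t+1}(\varepsilon)\), \(\widehat y=\mathrm{enc}^Y_t(\bot_Y)\) and \(\mathbf h=\mathbf 0\). By absorption (Definition~\ref{def:absorb}), this is precisely \(\bar\delta_M(\varepsilon,u)=(\varepsilon,\bot_Y,\emptyset)\).

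The main obstacle is making sure Lemma~\ref{lem:cmd} really applies at every layer, namely that its hypothesis \(s\in S_t\) matches the valid normal encodings at layer \(t\). A second point to check is that the layer-dependent encoding \(\mathrm{enc}^X_{t+1}(\varepsilon)\) used by the \(\mathsf{fail}\) translation agrees with the terminal encoding at layer \(t+1\). Both are ensured by construction: the \(\mathsf{fail}\) translation explicitly emits \(\mathrm{enc}^X_{t+1}(\varepsilon)\), and the plan fixes one type encoding throughout, so I expect only to cite these facts.
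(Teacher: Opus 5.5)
Your proposal is correct and follows essentially the same route as the paper's proof. It obtains injectivity from fixed-order concatenation and disjoint tags, and exactness of \(\widehat I_D\) and \(\widehat K_N\) from their finite enumerations. It then proves the transition equivalence by splitting the decoded state into \(S_t\) versus \(\mathit{Err}_M\), using Lemma~\ref{lem:cmd} with injectivity in the normal case and absorption (Definition~\ref{def:absorb}) in the terminal case.
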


\begin{proof}
The component encodings are injective by validity of \(P\). Fixed-order
concatenation and the disjoint tags therefore make every boundary map
injective. Each inverse is well defined on its image, which the corresponding
validity predicate recognizes exactly. The finite enumerations defining
\(\widehat I_D\) and \(\widehat K_N\) accept exactly the encodings of states
in \(I_D\) and sequences satisfying \(K_N\), respectively.
Fix \(t<N\) and a valid encoded tuple
\((\widehat x,\widehat u,\widehat x',\widehat y,\mathbf h)\). Decode its first
four components as \(s,u,s',y\), and let
\(H=\{\gamma\in\Gamma_M\mid h_\gamma=\mathsf{true}\}\). The transition of
\(\mathcal A_D\) at layer \(t\) is \(\bar\delta_M\) restricted to
\(X_t^D\times U_M\). It remains to show that
\(\widehat\delta_t^D\) accepts exactly its encoded transitions:
\[
  \widehat\delta_t^D
    (\widehat x,\widehat u,\widehat x',\widehat y,\mathbf h)
  \iff
  \bar\delta_M(s,u)=(s',y,H).
\]
Because \(X_t^D=S_t\cup\mathit{Err}_M\), the decoded state \(s\) lies in
exactly one of these two sets. If \(s\in S_t\), only the first disjunct can
hold. Lemma~\ref{lem:cmd} states that \(\operatorname{SymStep}_{M,t}\) returns
the encoding of \(\bar\delta_M(s,u)\). The state and output encodings are
injective, and \(\mathbf h_H\) identifies \(H\) componentwise. Hence this
encoded result equals \((\widehat x',\widehat y,\mathbf h)\) exactly when
\(\bar\delta_M(s,u)=(s',y,H)\). If
\(s=\varepsilon\in\mathit{Err}_M\), only the second disjunct can hold, and it
is equivalent to
\((s',y,H)=(\varepsilon,\bot_Y,\emptyset)\). This is exactly
\(\bar\delta_M(\varepsilon,u)\) by absorption. Hence the encoded transition
relation satisfies the remaining condition of Definition~\ref{def:exact-enc}.
\end{proof}

\subsection{Behavior Frontend Fidelity and Audits}
\label{app:bs-assumptions}

\paragraph{Overview.}
Subsections~\ref{app:bs-ir} and~\ref{app:bs-encoding} define the Behavior IR
and its exact bounded encoding. This subsection connects the source execution
of \(\hat B\) to that IR through assumptions on step semantics, trace
admissibility, the goal catalogue and replay.

\begin{definition}[Source-to-IR correspondence]
\label{def:source-ir}
Let \(\hat B\) be an admitted Behavior model. Its source execution has state,
input, output and error sets
\(X_{\hat B},U_{\hat B},Y_{\hat B}\) and \(\mathit{Err}_{\hat B}\), nonempty
initial set \(\emptyset\neq I_{\hat B}^0\subseteq X_{\hat B}\), and the goal
catalogue \(\Gamma_{\hat B}\) of Definition~\ref{def:goals}. Its fallible step is
\[
  \delta_{\hat B}^0:X_{\hat B}\times U_{\hat B}
  \longrightarrow
  \bigl(X_{\hat B}\times Y_{\hat B}
          \times2^{\Gamma_{\hat B}}\bigr)
  \cup
  \{\mathsf{Err}(\varepsilon,H)
      \mid \varepsilon\in\mathit{Err}_{\hat B},\
             H\subseteq\Gamma_{\hat B}\}.
\]
For a stimulus with \(|\sigma|=N\), the harness constructs
\(u_t^{\hat B}\in U_{\hat B}\) for \(t<N\) from
\((\sigma_t,a_t^{\hat B}(\xi))\). Thus the source input includes both the
accepted transaction batch and the schedule events visible to the Behavior
semantics. Let \(M\) be the Behavior IR procedure obtained by lowering
\(\hat B\), as in Definition~\ref{def:ir}. Admission requires \(M\) to be well
formed and lowering to supply the source-to-IR maps
\[
  \mathrm{low}_X:X_{\hat B}\to X_M,\qquad
  \mathrm{low}_U:U_{\hat B}\to U_M,\qquad
  \mathrm{low}_Y:Y_{\hat B}\to Y_M,
\]
and
\[
  \mathrm{low}_E:\mathit{Err}_{\hat B}\to\mathit{Err}_M,
  \qquad
  \mathrm{low}_\Gamma:\Gamma_{\hat B}\to\Gamma_M.
\]
The first four maps relate source values to IR values, while
\(\mathrm{low}_\Gamma\) maps source goals to their corresponding IR goals.
The \(\mathrm{enc}\) maps of Definition~\ref{def:exact-enc} then encode the IR
values as solver bit vectors. Extend \(\mathrm{low}_\Gamma\) pointwise to goal
sets. The component maps induce the following map \(\mathrm{low}\) on step
results:
\[
\begin{aligned}
  \mathrm{low}(x',y,H)
    &=(\mathrm{low}_X(x'),\mathrm{low}_Y(y),\mathrm{low}_\Gamma(H)),\\
  \mathrm{low}\bigl(\mathsf{Err}(\varepsilon,H)\bigr)
    &=\mathsf{Err}\bigl(\mathrm{low}_E(\varepsilon),
                         \mathrm{low}_\Gamma(H)\bigr).
\end{aligned}
\]
Let \(\bot_{\hat B}\) be the padding output of the source absorption. Extend
\(\mathrm{low}_X\) to terminal errors using \(\mathrm{low}_E\), set
\(\mathrm{low}_Y(\bot_{\hat B})=\bot_Y\), and apply \(\mathrm{low}\)
componentwise to absorbed traces. For a set \(W\) of source traces, write
\(\mathrm{low}(W)=\{\mathrm{low}(w)\mid w\in W\}\). For the IR domain \(D\)
fixed in Subsection~\ref{app:bs-encoding}, write
\[
  D_M:=D=\bigl(\mathcal M_M(I_M^0),N,K_N\bigr).
\]
Let \(K_N^{\hat B}\) be the source trace predicate and set
\[
  D_{\hat B}^{\mathrm{src}}
  =\bigl(\mathrm{Abs}_{I_{\hat B}^0}(\delta_{\hat B}^0),
          N,K_N^{\hat B}\bigr).
\]
\end{definition}

\begin{assumption}[Behavior semantic fidelity]
\label{asm:lb}
For every \(s\in X_{\hat B}\) and \(u\in U_{\hat B}\),
\[
  \delta_M^0\bigl(\mathrm{low}_X(s),\mathrm{low}_U(u)\bigr)
  =\mathrm{low}\bigl(\delta_{\hat B}^0(s,u)\bigr).
\]
Thus lowering preserves the normal successor and output, or the failure,
together with the emitted goal set. The absorbed extensions in
Definition~\ref{def:source-ir} preserve the same correspondence after a
failure.
\end{assumption}

\begin{assumption}[Behavior catalogue fidelity]
\label{asm:gb}
The goal policy and lowering induce a bijection
\[
  \mathrm{low}_\Gamma:\Gamma_{\hat B}\xrightarrow{\ \sim\ }\Gamma_M,
\]
so every source goal has exactly one corresponding IR goal and every IR goal
arises in this way. A source goal \(\gamma\) is hit on a source step exactly
when \(\mathrm{low}_\Gamma(\gamma)\) is hit on the corresponding IR step. The
bijection preserves the goal classes:
\[
  \mathrm{low}_\Gamma(\Gamma_{\hat B}^{\mathrm{cov}})
    =\Gamma_M^{\mathrm{cov}},
  \qquad
  \mathrm{low}_\Gamma(\Gamma_{\hat B}^{\mathrm{safe}})
    =\Gamma_M^{\mathrm{safe}}.
\]
\end{assumption}

\begin{assumption}[Behavior domain and replay fidelity]
\label{asm:rb}
The lowering maps preserve the initial states and inputs:
\[
  \mathrm{low}_X(I_{\hat B}^0)=I_M^0,
  \qquad
  \mathrm{low}_U(U_{\hat B})=U_M.
\]
Thus every IR initial state and input has at least one source representative.
For every source state, input and output sequence of lengths \(N+1\), \(N\)
and \(N\), respectively,
\[
  K_N^{\hat B}(\mathbf x,\mathbf u,\mathbf y)
  \iff
  K_N\bigl(\mathrm{low}_X(\mathbf x),
            \mathrm{low}_U(\mathbf u),
            \mathrm{low}_Y(\mathbf y)\bigr),
\]
where
\[
  \mathrm{low}_X(\mathbf x)
    =\bigl(\mathrm{low}_X(x_t)\bigr)_{t\le N},\qquad
  \mathrm{low}_U(\mathbf u)
    =\bigl(\mathrm{low}_U(u_t)\bigr)_{t<N},\qquad
  \mathrm{low}_Y(\mathbf y)
    =\bigl(\mathrm{low}_Y(y_t)\bigr)_{t<N}.
\]
Every raw record returned by
\(\textsc{Run}_{D_M}(x_0,\mathbf u)\) carries
\(s_0\in I_{\hat B}^0\) and
\(\mathbf v\in U_{\hat B}^N\) such that
\[
  \mathrm{low}_X(s_0)=x_0,
  \qquad
  \mathrm{low}_U(v_t)=u_t \quad(t<N).
\]
It records the complete source replay from \((s_0,\mathbf v)\), and the IR
trace in the record is obtained by applying \(\mathrm{low}\) to that replay.
\end{assumption}

\begin{corollary}[Behavior trace and goal correspondence]
\label{cor:source}
Under Assumptions~\ref{asm:lb}, \ref{asm:gb} and~\ref{asm:rb},
\[
  \mathrm{low}\bigl(\mathrm{Tr}(D_{\hat B}^{\mathrm{src}})\bigr)
  =\mathrm{Tr}(D_M),
  \qquad
  \mathrm{low}_\Gamma\bigl(
    \mathrm{Reach}_{D_{\hat B}^{\mathrm{src}}}\bigr)
  =\mathrm{Reach}_{D_M}.
\]
If \(P\) is a valid Behavior representation plan for \((M,D_M)\), the
hypotheses of Theorem~\ref{thm:closure-sound} hold for \(D_M\), and
Algorithm~\ref{alg:closure} returns
\((\mathit{Rch}_M,\mathit{Unr}_M,\mathit{Unk}_M,W)\) for
\(G_M\subseteq\Gamma_M\). Define
\[
\begin{aligned}
  G_{\hat B}
    &=\mathrm{low}_\Gamma^{-1}(G_M),\\
  (\mathit{Rch}_{\hat B},\mathit{Unr}_{\hat B},\mathit{Unk}_{\hat B})
    &=\bigl(
    \mathrm{low}_\Gamma^{-1}(\mathit{Rch}_M),
    \mathrm{low}_\Gamma^{-1}(\mathit{Unr}_M),
    \mathrm{low}_\Gamma^{-1}(\mathit{Unk}_M)
    \bigr).
\end{aligned}
\]
These three sets partition \(G_{\hat B}\), with
\[
  \mathit{Rch}_{\hat B}\subseteq
    \mathrm{Reach}_{D_{\hat B}^{\mathrm{src}}},
  \qquad
  \mathit{Unr}_{\hat B}\cap
    \mathrm{Reach}_{D_{\hat B}^{\mathrm{src}}}=\emptyset,
\]
and every goal in \(\mathit{Rch}_{\hat B}\) has a source replay witness.
\end{corollary}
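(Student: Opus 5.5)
The statement to prove is Corollary~\ref{cor:source}. I would split it into two parts: a trace-correspondence part, proved directly from the three fidelity assumptions, and a status-transfer part, obtained by pulling the output of Theorem~\ref{thm:closure-sound} back along the bijection $\mathrm{low}_\Gamma$.

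\textbf{Trace correspondence, inclusion $\mathrm{low}(\mathrm{Tr}(D_{\hat B}^{\mathrm{src}}))\subseteq\mathrm{Tr}(D_M)$.} Take a source trace $w=(\mathbf x,\mathbf u,\mathbf y,\mathbf H)$ and show that $\mathrm{low}(w)$ is admitted in $D_M$. The initial state satisfies $\mathrm{low}_X(x_0)\in I_M^0$ by Assumption~\ref{asm:rb}. Admissibility transfers through the $K_N^{\hat B}\iff K_N$ clause. For each step I would use Assumption~\ref{asm:lb}, splitting into three cases:
\begin{itemize}
\item a normal step, where the equation gives $\bar\delta_M(\mathrm{low}_X(x_t),\mathrm{low}_U(u_t))=\mathrm{low}(x_{t+1},y_t,H_t)$ directly;
\item a failing step, where the error is lowered by $\mathrm{low}_E$ and the padding output $\bot_{\hat B}$ maps to $\bot_Y$;
\item a stuttering step from a terminal error, where the absorbed extensions make both sides $(\mathrm{low}_E(\varepsilon),\bot_Y,\emptyset)$.
\end{itemize}

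\textbf{Trace correspondence, reverse inclusion.} This is the main obstacle, because $\mathrm{low}_X$ need not be injective or surjective on all states. Given an IR trace, I would first pick a source representative $s_0\in I_{\hat B}^0$ and inputs $v_t$ with $\mathrm{low}_U(v_t)=u_t$; both exist by the surjectivity clauses of Assumption~\ref{asm:rb}. I would then define the source trace by running $\bar\delta_{\hat B}$ forward from $s_0$ under $\mathbf v$. By induction on $t$, Assumption~\ref{asm:lb} shows that each lowered step equals the IR step. Because $\delta_M$ is deterministic (Lemma~\ref{lem:total}), the lowered trace coincides with the given IR trace. Admissibility of the source trace then follows from the $K_N$ equivalence. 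The goal equation follows at once, since lowering maps each $H_t$ to $\mathrm{low}_\Gamma(H_t)$; I would also invoke Assumption~\ref{asm:gb} to confirm the hit correspondence for each goal.

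\textbf{Status transfer.} A valid plan $P$ makes Lemma~\ref{lem:from-absorbed} and Theorem~\ref{thm:exact-b} applicable, which supplies the representation and exact-encoding hypotheses. Theorem~\ref{thm:closure-sound} then gives three facts: a partition of $G_M$, the inclusion $\mathit{Rch}_M\subseteq\mathrm{Reach}_{D_M}$, and the disjointness $\mathit{Unr}_M\cap\mathrm{Reach}_{D_M}=\emptyset$. I would transport each fact along the bijection.
\begin{itemize}
\item Preimages under a bijection preserve partitions, so the three pulled-back sets partition $G_{\hat B}$.
\item For $\gamma\in\mathit{Rch}_{\hat B}$, we have $\mathrm{low}_\Gamma(\gamma)\in\mathrm{Reach}_{D_M}=\mathrm{low}_\Gamma(\mathrm{Reach}_{D_{\hat B}^{\mathrm{src}}})$, and injectivity gives $\gamma\in\mathrm{Reach}_{D_{\hat B}^{\mathrm{src}}}$.
\item Disjointness for $\mathit{Unr}_{\hat B}$ follows by the same argument.
\end{itemize}

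\textbf{Source replay witnesses.} Each trace in $W$ was obtained from a $\textsc{Run}_{D_M}$ record (or from an evaluation record, which I would treat analogously). By Assumption~\ref{asm:rb}, every such record carries a source replay whose lowering is the audited trace. That source replay hits the preimage goal by Assumption~\ref{asm:gb}, so every goal in $\mathit{Rch}_{\hat B}$ has a source replay witness.
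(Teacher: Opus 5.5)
Your proposal is correct and takes the same approach as the paper's proof. Both inclusions use step induction under Assumptions~\ref{asm:lb} and~\ref{asm:rb}, with the reverse inclusion choosing source representatives from the surjectivity clauses; the conclusions of Theorem~\ref{thm:closure-sound} are then pulled back along the bijection $\mathrm{low}_\Gamma$, and the witnesses come from the source replays carried by the audited records.

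Your split into normal, failing and stuttering steps only spells out what the paper compresses into ``step induction.'' Your handling of evaluation records matches the paper's level of detail, since the Behavior replay audit runs source representatives for every raw record.
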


\begin{proof}
Step induction using Assumptions~\ref{asm:lb} and~\ref{asm:rb} gives
\[
\begin{aligned}
  w_{\hat B}\in\mathrm{Tr}(D_{\hat B}^{\mathrm{src}})
  &\Longrightarrow
  \mathrm{low}(w_{\hat B})\in\mathrm{Tr}(D_M),\\
  w_M\in\mathrm{Tr}(D_M)
  &\Longrightarrow
  \exists w_{\hat B}\in\mathrm{Tr}(D_{\hat B}^{\mathrm{src}}):
  \mathrm{low}(w_{\hat B})=w_M.
\end{aligned}
\]
The reverse implication chooses the source representatives guaranteed by
Assumption~\ref{asm:rb}. Hence the trace sets are equal after applying
\(\mathrm{low}\). Hit preservation and the catalogue bijection give, for every
\(\gamma\in\Gamma_{\hat B}\),
\[
  \gamma\in\mathrm{Reach}_{D_{\hat B}^{\mathrm{src}}}
  \iff
  \mathrm{low}_\Gamma(\gamma)\in\mathrm{Reach}_{D_M}.
\]
A valid \(P\) supplies the exact instance used by
Theorem~\ref{thm:closure-sound}. Applying
\(\mathrm{low}_\Gamma^{-1}\) to that theorem gives the stated classification.
The source replays carried by the audited records provide the witnesses.
\end{proof}

\paragraph{Catalogue and replay audits.}
Before any query, the catalogue audit checks the goal bijection, classes and
hit correspondence. For each raw record, the replay audit runs its source
representatives, checks that the resulting source and IR traces correspond and
satisfy their trace predicates, and applies Definition~\ref{def:replay-audit}
with \(\widehat{\delta}_t^{D_M}\). Only audited traces are retained.

\subsection{Artifact Instances and Reported Bounded Conclusions}
\label{app:bs-artifact-reporting}

\paragraph{Overview.}
The preceding subsections construct exact bounded instances for Behavior and
RTL. This subsection pairs each instance with the audited traces from concrete
evaluation and states when the common analysis may report a goal as reached,
bounded-unreachable or unresolved.

\paragraph{Analysis domains.}
Behavior is analyzed through its lowered IR domain \(D_M\), whereas RTL is
analyzed directly in \(D_{\hat R}\). Accordingly, for
\(A\in\{\hat B,\hat R\}\), write \(D_A=D_M\) when \(A=\hat B\), and
\(D_A=D_{\hat R}\) when \(A=\hat R\).
When the required exact instance can be constructed, a valid Behavior
representation plan and
Theorem~\ref{thm:exact-b} supply the instance for \(D_M\), while
Theorem~\ref{thm:rtl-exact} supplies the instance for \(D_{\hat R}\). The exact
Behavior instance classifies \(\Gamma_M\), and its statuses are mapped back to
\(\Gamma_{\hat B}\) through \(\mathrm{low}_\Gamma^{-1}\). The exact RTL instance
classifies \(\Gamma_{\hat R}\) directly. An RTL reset restoration claim is
reported only when its scenario goal lies in \(\mathit{Rch}\) and its violation
goal lies in \(\mathit{Unr}\).

\paragraph{Evaluated pool.}
Apply \(\textsc{Audit}_{D_A}\) to every record in \(\mathcal E_A\). A
\(\mathsf{Refused}\) or \(\mathsf{Inconsistent}\) result stops analysis, while
an \(\mathsf{Outside}\) record is discarded. Otherwise the retained records
form \(W_{\mathrm{eval}}^A\subseteq\mathrm{Tr}(D_A)\). The Behavior audit
reconstructs traces in \(D_M\), while the RTL audit reconstructs traces in
\(D_{\hat R}\). A nonempty \(W_{\mathrm{eval}}^A\) is the initial pool
\(W_0\). If it is empty, the initial-pool procedure of
Subsection~\ref{app:bs-closure} checks feasibility. Later queries may add
audited witnesses to \(W\), but they do not change \(\Xi_A\),
\(\mathcal E_A\), or the output score.

\begin{corollary}[Reported bounded conclusions]
\label{cor:reported-conclusions}
For either artifact, set
\[
  (\Gamma,\Gamma^{\mathrm{cov}},\Gamma^{\mathrm{safe}})
  =
  \begin{cases}
    (\Gamma_M,\Gamma_M^{\mathrm{cov}},\Gamma_M^{\mathrm{safe}})
      &\text{for Behavior},\\
    (\Gamma_{\hat R},\Gamma_{\hat R}^{\mathrm{cov}},
                     \Gamma_{\hat R}^{\mathrm{safe}})
      &\text{for RTL}.
  \end{cases}
\]
Under the hypotheses of
Theorem~\ref{thm:closure-sound} for \(D_A\), suppose
Algorithm~\ref{alg:closure} classifies \(\Gamma\) and returns
\[
  (\mathit{Rch},\mathit{Unr},\mathit{Unk},W).
\]
For \(C\in\{\Gamma^{\mathrm{cov}},\Gamma^{\mathrm{safe}}\}\),
\[
  \mathit{Unk}\cap C=\emptyset
  \quad\Longrightarrow\quad
  \begin{aligned}
    \mathit{Rch}\cap C&=\mathrm{Reach}_{D_A}\cap C,\\
    \mathit{Unr}\cap C&=C\setminus\mathrm{Reach}_{D_A}.
  \end{aligned}
\]
For \(C=\Gamma^{\mathrm{cov}}\), the premise
\(\mathit{Unk}\cap C=\emptyset\) establishes \emph{bounded coverage closure}.
For \(C=\Gamma^{\mathrm{safe}}\), if
\(\mathit{Unk}\cap C=\emptyset\) and \(\mathit{Rch}\cap C=\emptyset\), then
\(\mathrm{Reach}_{D_A}\cap C=\emptyset\), establishing \emph{bounded safety}.
\end{corollary}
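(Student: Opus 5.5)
The plan is to reduce the statement to Corollary~\ref{cor:closure} applied one goal class at a time. The hypotheses of Theorem~\ref{thm:closure-sound} already give us what we need for \(D_A\): the three status sets \(\mathit{Rch},\mathit{Unr},\mathit{Unk}\) partition \(\Gamma\), \(\mathit{Rch}\subseteq\mathrm{Reach}_{D_A}\), and \(\mathit{Unr}\cap\mathrm{Reach}_{D_A}=\emptyset\). Corollary~\ref{cor:closure} needs \(\mathit{Unk}=\emptyset\) globally, so I will not invoke it as stated. Instead I will redo its short argument intersected with a fixed class \(C\in\{\Gamma^{\mathrm{cov}},\Gamma^{\mathrm{safe}}\}\). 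This works because \(C\subseteq\Gamma\) and the partition of \(\Gamma\) restricts to a partition of \(C\).

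First I fix \(C\) and assume \(\mathit{Unk}\cap C=\emptyset\). Then \(C=(\mathit{Rch}\cap C)\sqcup(\mathit{Unr}\cap C)\).

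\emph{First equality, \(\subseteq\).} If \(\gamma\in\mathit{Rch}\cap C\), then soundness gives \(\gamma\in\mathrm{Reach}_{D_A}\).

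\emph{First equality, \(\supseteq\).} If \(\gamma\in\mathrm{Reach}_{D_A}\cap C\), the restricted partition puts \(\gamma\) in \(\mathit{Rch}\) or in \(\mathit{Unr}\). The latter is excluded by \(\mathit{Unr}\cap\mathrm{Reach}_{D_A}=\emptyset\), so \(\gamma\in\mathit{Rch}\). This yields \(\mathit{Rch}\cap C=\mathrm{Reach}_{D_A}\cap C\).

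\emph{Second equality.} Taking complements inside \(C\) gives \(\mathit{Unr}\cap C=C\setminus(\mathit{Rch}\cap C)=C\setminus\mathrm{Reach}_{D_A}\).

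The coverage case \(C=\Gamma^{\mathrm{cov}}\) is just this equality pair, which is what "bounded coverage closure" means. For the safety case, I add the hypothesis \(\mathit{Rch}\cap\Gamma^{\mathrm{safe}}=\emptyset\). The first equality then gives \(\mathrm{Reach}_{D_A}\cap\Gamma^{\mathrm{safe}}=\emptyset\), which is bounded safety.

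The main point needing care is that the classification returned by Algorithm~\ref{alg:closure} is over \(\Gamma_M\) for Behavior and \(\Gamma_{\hat R}\) for RTL. In each case I must check that the class split \(\Gamma=\Gamma^{\mathrm{cov}}\sqcup\Gamma^{\mathrm{safe}}\) used in the corollary is the one attached to that catalogue.

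\emph{RTL.} This is Definition~\ref{def:goals} directly. The producer partition into slot and monitor goals is orthogonal to it and plays no role.

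\emph{Behavior.} The IR catalogue split is fixed in Definition~\ref{def:ir}, so the corollary holds at the IR level with no extra assumptions. If one also wants the source-level reading, Assumption~\ref{asm:gb} says \(\mathrm{low}_\Gamma\) is a class-preserving bijection, and Corollary~\ref{cor:source} transports both the partition and \(\mathrm{Reach}\). Intersecting with the preimage classes therefore preserves the two equalities.

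I expect no genuine obstacle beyond this bookkeeping. The argument is entirely set-theoretic once Theorem~\ref{thm:closure-sound} is in hand.
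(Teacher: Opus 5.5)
Your proposal is correct and matches the paper's proof essentially step for step: the paper also restricts the partition from Theorem~\ref{thm:closure-sound} to \(C\) (rather than invoking Corollary~\ref{cor:closure}), proves \(\mathit{Rch}\cap C=\mathrm{Reach}_{D_A}\cap C\) by the same two inclusions, gets the \(\mathit{Unr}\) equality by complementing within \(C\), and derives bounded safety from the extra premise \(\mathit{Rch}\cap\Gamma^{\mathrm{safe}}=\emptyset\). Your Behavior-versus-RTL catalogue bookkeeping is not needed, since the statement is already posed over \(\Gamma_M\) for Behavior, but it does no harm.
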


\begin{proof}
Since the three status sets partition \(\Gamma\) and
\(\mathit{Unk}\cap C=\emptyset\),
\[
  C=(\mathit{Rch}\cap C)\cup(\mathit{Unr}\cap C).
\]
Theorem~\ref{thm:closure-sound} gives
\(\mathit{Rch}\subseteq\mathrm{Reach}_{D_A}\) and
\(\mathit{Unr}\cap\mathrm{Reach}_{D_A}=\emptyset\). The first inclusion gives
\[
  \mathit{Rch}\cap C\subseteq\mathrm{Reach}_{D_A}\cap C.
\]
Conversely, let \(\gamma\in\mathrm{Reach}_{D_A}\cap C\). The displayed
partition places \(\gamma\) in \(\mathit{Rch}\) or \(\mathit{Unr}\). The
second property excludes \(\mathit{Unr}\), so
\(\gamma\in\mathit{Rch}\cap C\). This proves the first equality. The
partition and that equality then give
\[
  \mathit{Unr}\cap C
  =C\setminus(\mathit{Rch}\cap C)
  =C\setminus\mathrm{Reach}_{D_A}.
\]
For \(C=\Gamma^{\mathrm{safe}}\), the additional premise
\(\mathit{Rch}\cap C=\emptyset\) and the first equality give
\(\mathrm{Reach}_{D_A}\cap C=\emptyset\).
\end{proof}

\subsection{Trusted Boundary, Claim Scope and Conclusion}
\label{app:bs-tcb}

\paragraph{Trusted boundary.}
The Behavior result depends on
Assumptions~\ref{asm:lb}-\ref{asm:rb} and a valid representation plan. The RTL
result depends on Assumptions~\ref{asm:lr}-\ref{asm:hr} and the exact
component encodings required by Theorem~\ref{thm:rtl-exact}. Both results rely
on correct implementations of the encoded queries, replay harness, catalogue
audit and replay audit. They also trust every unsatisfiable solver result used
to establish infeasibility or place a goal in \(\mathit{Unr}\).
Replay may expose fidelity violations, but successful checks
do not establish fidelity for every admitted trace.
The task-declared transaction interface, endpoint bindings and selected
protocol adapter are trusted to describe the intended hardware interface.

\paragraph{Claim scope.}
The output score concerns only \(\Xi_A\), while goal classification concerns
the fixed bounded domains. RTL is classified in \(D_{\hat R}\). Behavior is
analyzed in \(D_M\), with conclusions transferred to
\(D_{\hat B}^{\mathrm{src}}\) under the assumptions relating the source and IR
semantics. The results establish neither Behavior-RTL equivalence
nor properties outside the catalogues, and make no claim
beyond the fixed horizon and admissibility predicates.
Catalogue and predicate adequacy remain modeling choices.
The RTL claim covers discrete event slot executions over the admitted finite
logic domain under the fixed adapter and environment contract, not analog
timing or schedules outside that contract.

\paragraph{Conclusion.}
The construction provides a common audited bounded analysis
for admitted Behavior and RTL artifacts. Under the stated assumptions, every
reached goal has an audited witness. A goal classified as bounded-unreachable
cannot be hit within the fixed domain. If no coverage goal remains
unresolved, bounded coverage closure holds. If every safety goal is classified
as bounded-unreachable, bounded safety holds. The analysis leaves artifact
scores unchanged.


\section{Implementation, Experimental Details and Additional Results}
\label{app:exp-details}

Section~\ref{app:exp-environment} describes the agent environment and tools,
and Section~\ref{app:exp-setup} gives inference and evaluation settings.
Section~\ref{app:exp-search-coverage} examines the effect of
goal-guided search on RTL coverage.
We detail the Behavior IR-HLS comparison in Section~\ref{app:exp-hls},
pipeline comparison and PPA exploration in Section~\ref{app:exp-pipelines},
and training and self-improvement in Section~\ref{app:exp-training}.
Section~\ref{app:exp-resources} summarizes resource use and costs.
Sections~\ref{app:exp-additional} and~\ref{app:exp-self-improvement-case}
provide a modeling comparison and a self-improvement case study, respectively.

\subsection{Agent Environment and Tools}
\label{app:exp-environment}

\paragraph{Execution environment.}
The Linux namespace-based sandbox provides a persistent writable workspace, with the task
specification and public documentation available as read-only files.
Agents use \texttt{run\_command} for file operations and
shell-based development tests. Depending on the experimental setting,
additional tools support simulation comparison, PPA measurement,
and artifact submission or search completion. These interfaces are detailed
in the corresponding sections.

\paragraph{Development tools.}
Development uses Python for Behavior models and
Verilator~\citep{verilator} with cocotb~\citep{cocotb} for RTL simulation and
testing. ChipMATE and MAGE use Icarus Verilog~\citep{iverilog} during
development. The HLS experiments use AMD Vitis HLS 2025.2~\citep{amd2026vitis}
for C++-to-RTL synthesis. PPA exploration uses
Yosys/ABC~\citep{wolf2013yosys,brayton2010abc} for synthesis and technology
mapping, OpenROAD~\citep{ajayi2019openroad} for placement and netlist
optimization, and OpenSTA~\citep{opensta} for timing analysis with the
SkyWater 130\,nm high-density standard-cell
library~\citep{skywater2020pdk} and the Nangate 45\,nm standard-cell
library~\citep{nangate2008library}, which is based on the predictive FreePDK45
process design kit~\citep{stine2007freepdk}.\par

\subsection{Inference and Evaluation Settings}
\label{app:exp-setup}
\label{app:exp-models}
\label{app:exp-evaluation}
\label{app:exp-metrics}

\paragraph{Models and inference.}
Table~\ref{tab:exp-inference-config} lists model identifiers,
providers and reasoning settings.
We serve local models with SGLang~\citep{zheng2024sglang} on a single H200 node.
We retain default temperature and top-$p$ settings where configurable.
GPT-6 Astra is used for dataset construction.\par

\begin{table}[H]
\centering
\small
\caption{\textbf{Model and inference configurations.}}
\label{tab:exp-inference-config}
\setlength{\tabcolsep}{4pt}
\begin{tabular*}{\linewidth}{@{\extracolsep{\fill}}
>{\raggedright\arraybackslash}p{0.23\linewidth}
>{\raggedright\arraybackslash}p{0.39\linewidth}
>{\raggedright\arraybackslash}p{0.13\linewidth}
>{\raggedright\arraybackslash}p{0.19\linewidth}@{}}
\toprule
Model & Model ID & Access & Reasoning \\
\midrule
Qwen3.5-4B & \texttt{Qwen/Qwen3.5-4B} & SGLang & Thinking enabled \\
Qwen3.5-9B & \texttt{Qwen/Qwen3.5-9B} & SGLang & Thinking enabled \\
Qwen3.8-27B & \texttt{Qwen/Qwen3.8-27B} & SGLang & Extra-high \\
GPT-5.6 Luna & \texttt{gpt-5.6-luna} & OpenAI & Medium (default) \\
GPT-5.6 Terra & \texttt{gpt-5.6-terra} & OpenAI & Medium (default) \\
GPT-6 Astra & \texttt{gpt-6-astra} & OpenAI & Extra-high / High \\
Claude Sonnet 5 & \texttt{claude-sonnet-5} & Anthropic & Adaptive \\
Claude Fable 5.1 & \texttt{claude-fable-5-1} & Anthropic & Adaptive \\
Claude Haiku 4.5 & \texttt{claude-haiku-4-5-20251001} & Anthropic & Thinking enabled \\
DeepSeek-V4.1-Flash & \texttt{deepseek-v4p1-flash} & Fireworks & High (default) \\
\bottomrule
\end{tabular*}
\end{table}

\paragraph{Rollout settings.}
We run one rollout per task and model, using a 128K-token (131{,}072-token)
context window and at most 60 turns. When the window fills, we discard the oldest
interaction turns while retaining the task instructions and latest
tool feedback. Workspace files persist throughout the rollout.\par

\paragraph{Test configuration.}
We use 50 baseline sequences combining edge-case patterns and pseudorandom
inputs, each requesting 100 transactions per input endpoint. Tests run
independently from reset and as continuous workloads.
Benchmark scoring uses a 1200\,s total budget per artifact,
including up to 600\,s for goal-guided search.
A pass requires successful completion of all baseline tests
and no detected artifact failure. Goal-guided search may stop at its budget
without invalidating completed checks. If no artifact failure is found,
an incomplete baseline is unresolved.
Unresolved evaluations count as zero in reporting and RL rewards.
Stimulus-generation details are given in Appendix~\ref{app:bs-generation}.\par

\subsection{Effect of Goal-Guided Search on RTL Coverage}
\label{app:exp-search-coverage}

\paragraph{Search coverage.}
Table~\ref{tab:exp-search-coverage} compares average per-task RTL coverage-goal
hit rates on the same reference implementations.
Each search setting extends the same baseline with a 600\,s budget, shared
between directions for two-sided search.
Search improves coverage on all four datasets; two-sided search has the
highest overall average.

\begin{table}[H]
\centering
\small
\setlength{\tabcolsep}{4pt}
\caption{\textbf{Cumulative RTL coverage-goal hit rates (\%).}
Each search setting augments the same baseline. Avg. weights tasks equally;
bold marks column maxima.}
\label{tab:exp-search-coverage}
\begin{tabular*}{\linewidth}{@{\extracolsep{\fill}}lrrrrr@{}}
\toprule
Configuration & VerilogEval-v2 & RTLLM-v2 & CVDP-cid003 & \makecell{BEHAVE\\(Train + Eval)} & Avg. \\
\midrule
Baseline & 91.75 & 96.61 & 87.83 & 94.76 & 93.48 \\
\quad + Reference-directed & 92.23 & \textbf{97.61} & 88.88 & 95.24 & 94.06 \\
\quad + Artifact-directed & \textbf{92.24} & 97.10 & \textbf{89.10} & 95.39 & 94.16 \\
\quad + Two-sided & \textbf{92.24} & 97.43 & 88.39 & \textbf{95.59} & \textbf{94.21} \\
\bottomrule
\end{tabular*}
\end{table}

\paragraph{Example.}
RTL generated by GPT-5.6 Luna for the CVDP phase-rotation task passes the
full baseline (8,200 checked inputs). Artifact-directed search then finds
a sequence that exposes an error: the RTL reads the wait parameter
\texttt{i\_wait} after calculation instead of saving it when the operation
starts. Native replay confirms the failure, with \texttt{o\_subsampling}=1
instead of the expected 0. This example shows how search can expose bugs
in multi-stage control logic that baseline tests miss.\par

\subsection{Behavior IR and HLS Comparison Settings}
\label{app:exp-hls}

\nopagebreak[4]
\paragraph{Tasks and references.}
We select 20 tasks across seven domains whose source implementations can be
executed directly under a fixed input contract.
Both routes receive the same source code, invocation, numerical semantics
and input/output bit representations, with a target-specific interface
and a shared representation guide.
The golden Behavior models are validated against direct executions of the
source programs and remain hidden from the agent.
Figure~\ref{fig:hls-workflow} summarizes the comparison.

\begin{figure}[htbp]
\centering
\resizebox{\linewidth}{!}{
\begingroup
\providecommand{\hlsWorkflowTextColor}{black!82}
\definecolor{hlsEnv}{HTML}{5A7082}%
\definecolor{hlsRtl}{HTML}{38434C}%
\definecolor{hlsObs}{HTML}{7A858E}%
\begin{tikzpicture}[
  x=1cm, y=1cm, text=\hlsWorkflowTextColor,
  font=\normalfont\fontsize{9}{11}\selectfont,
  heading/.style={font=\normalfont\bfseries\fontsize{9}{11}\selectfont},
  small/.style={font=\normalfont\fontsize{8}{9.5}\selectfont},
  block/.style={draw=hlsEnv!85, fill=white, line width=0.6pt},
  group/.style={draw=hlsEnv!55, fill=hlsEnv!3, line width=0.6pt},
  flow/.style={-{Latex[length=1.7mm]}, draw=black!65, line width=0.8pt},
  feedback/.style={-{Latex[length=1.6mm]}, draw=hlsEnv, line width=0.7pt},
  optional/.style={feedback, densely dashed},
  wirelabel/.style={small, fill=white, inner sep=1.5pt}
]
\path[use as bounding box] (0,0) rectangle (16,4.7);
\path[draw=black!35, fill=white, line width=0.6pt]
  (0.02,0.02) rectangle (15.98,4.68);
\node[heading, anchor=west] at (0.35,4.43)
  {Behavior IR and HLS comparison};
\draw[draw=black!18, line width=0.45pt]
  (0.18,4.18) -- (15.82,4.18);

\path[block] (0.40,1.10) rectangle (2.50,2.60);
\node[heading] at (1.45,2.27) {Same source};
\node[small, align=center] at (1.45,1.71) {program +\\input contract};
\path[block, draw=hlsRtl, line width=0.85pt]
  (3.05,1.10) rectangle (4.40,2.60);
\node[heading] at (3.725,2.07) {Agent};
\node[small] at (3.725,1.65) {per route};

\path[block] (5.05,2.13) rectangle (9.10,2.93);
\node[heading] at (7.075,2.70) {Behavior \(B\)};
\node[small] at (7.075,2.34) {Frontend / interface checks};
\path[block] (5.05,0.88) rectangle (9.10,1.68);
\node[heading] at (7.075,1.45) {HLS C++ \(\to\) Vitis \(\to\) RTL};
\node[small] at (7.075,1.09) {RTL interface checks};
\path[group] (9.75,1.10) rectangle (12.95,2.60);
\node[heading] at (11.35,2.29) {BEHAVE-Sim};
\node[small] at (11.35,1.85) {Behavior / RTL};
\node[small] at (11.35,1.43) {vs.\ hidden \(B^\star\)};
\path[block, draw=hlsObs] (13.60,1.10) rectangle (15.60,2.60);
\node[heading] at (14.60,2.29) {Final score};
\node[small] at (14.60,1.85) {after submission};
\node[small] at (14.60,1.43) {(both settings)};

\draw[flow] (2.50,1.85) -- (3.05,1.85);
\draw[flow] (4.40,2.25) -- (4.72,2.25) -- (4.72,2.53) -- (5.05,2.53);
\draw[flow] (4.40,1.45) -- (4.72,1.45) -- (4.72,1.28) -- (5.05,1.28);
\draw[flow] (9.10,2.53) -- (9.43,2.53) -- (9.43,2.25) -- (9.75,2.25);
\draw[flow] (9.10,1.28) -- (9.43,1.28) -- (9.43,1.45) -- (9.75,1.45);
\draw[flow] (12.95,1.85) -- (13.60,1.85);

\draw[feedback] (8.40,2.93) -- (8.40,3.18) -- (4.05,3.18) -- (4.05,2.60);
\node[wirelabel] at (6.225,3.40) {Frontend feedback (both)};
\draw[feedback] (8.40,0.88) -- (8.40,0.40) -- (4.05,0.40) -- (4.05,1.10);
\node[wirelabel] at (6.225,0.62) {Synthesis feedback (both)};
\draw[optional] (11.35,2.60) -- (11.35,3.74) -- (3.38,3.74) -- (3.38,2.60);
\node[wirelabel] at (7.365,3.96) {Correctness feedback (Check only)};
\end{tikzpicture}
\endgroup}
\caption{\textbf{Behavior IR and HLS comparison workflow.}
Both settings provide frontend or synthesis feedback. Only Check returns
development-time correctness feedback (dashed arrow). Both settings evaluate
the submitted artifact against the same hidden reference.}
\label{fig:hls-workflow}
\end{figure}
\par

\paragraph{Comparison settings.}
We use GPT-5.6 Luna with the rollout and baseline settings in
Appendix~\ref{app:exp-setup}.
Vitis HLS 2025.2 targets \texttt{xc7z020clg400-1} at a 25\,ns clock period,
with a 1200\,s synthesis budget.
Tool time sums development-check wall times, excluding model generation,
queueing and final evaluation. We report the median and interquartile range
across tasks.
\par

\subsection{Pipeline Comparison and PPA Exploration Settings}
\label{app:exp-pipelines}
\label{app:exp-ppa}

\paragraph{Comparison settings.}
RTL-only, ChipMATE and MAGE receive specifications without the Behavior
interface. ChipMATE~\citep{yu2026chipmate} uses three RTL/Python pairs per
round, up to three rounds and 1{,}000 random inputs per comparison.
MAGE~\citep{zhao2025mage} allows four simulation-review iterations in its
no-golden workflow and samples 20 RTL candidates, selecting two for up to
15 editor steps each.
Mean API cost includes all candidate-generation and repair calls, averaged
over all tasks.

\paragraph{Baseline adaptations.}
We adapt released implementations to our model and sandbox interfaces,
retaining their roles and development tests. For ChipMATE, we remove
repair-context clipping and correct output matching and zero-score candidate
selection. For MAGE, we fix the no-golden prompt, editor dispatch and
compilation/simulation failure handling. Our code includes all patches.

\paragraph{PPA exploration.}
For MXFP4, both settings jointly develop RTL and Behavior models with area and
timing feedback. The Sim-enabled setting additionally compares the agent's RTL and
Behavior during development. For each model and setting, we run three searches
of at most 60 turns, with initial hints of 1, 4 or 32 parallel processing lanes.
These hints guide exploration without constraining the final architecture.
Each search explores area-time trade-offs and retains intermediate candidates.
For NVFP4, each model runs five searches without development-time
\behavesim feedback, with initial hints of 1, 2, 4, 8 or 16 lanes.

\paragraph{Measurement and reporting.}
MXFP4 candidates use a fixed clock \(f_0=80\,\mathrm{MHz}\) and the same
64-block workload, with 32 bfloat16 (BF16) values per block. Inputs are offered
continuously under ready/valid handshaking. Execution time is
\(\tau_W(R)=N_W(R)/f_0\), where \(N_W(R)\) counts cycles from the first input
offered to the final output accepted. MXFP4 measurements use the
SkyWater 130\,nm high-density (HD) standard-cell library~\citep{skywater2020pdk}
at the typical-typical process corner
(25\,\textdegree C, 1.80\,V). Cell area and setup/hold timing are measured on
the same placed and repaired netlist, with interconnect resistance and
capacitance estimated from placement.
NVFP4 uses the Nangate 45\,nm standard-cell
library~\citep{nangate2008library}, based on FreePDK45~\citep{stine2007freepdk},
at 400\,MHz with a 64-block workload of 16 BF16 values per block.
We report nondominated area-time points among candidates that pass independent
golden-Behavior and timing checks and complete the full workload.

\subsection{Training and Self-Improvement Settings}
\label{app:exp-training}
\label{app:exp-self-improvement}

\paragraph{Task-based RL.}
We use Slime~\citep{zhu2025slime} with Megatron~\citep{narayanan2021efficient}
for synchronous GRPO~\citep{shao2024deepseekmath}.
Hyperparameters are listed in Table~\ref{tab:exp-training-config}.

\begin{table}[htbp]
\centering
\small
\caption{Training hyperparameters shared by fixed-dataset RL
and self-improvement.}
\label{tab:exp-training-config}
\begin{tabular}{@{}p{0.46\linewidth}p{0.48\linewidth}@{}}
\toprule
Setting & Value \\
\midrule
Model & Qwen3.8-27B \\
Parameter updates and precision & Full-parameter, BF16 \\
GPUs & 8 NVIDIA H200 \\
Training parallelism (tensor/context/pipeline) & 4 / 2 / 1 \\
Rollout serving & 2 SGLang engines, tensor parallelism 4 \\
\midrule
Maximum concurrent conversations & 32 \\
Update batch & 8 tasks $\times$ 8 rollouts \\
Microbatch size & 1 \\
Sampling & Temperature 1, top-$p$ 1, top-$k$ disabled \\
Repetition/presence/frequency penalties & None \\
Reasoning & Thinking enabled, \texttt{xhigh} effort \\
Context window & 131{,}072 tokens \\
Maximum model turns & 60 \\
\midrule
Optimizer & Adam, $(\beta_1,\beta_2)=(0.9,0.98)$ \\
Learning rate & $10^{-6}$ (constant) \\
Weight decay & 0.1 \\
Gradient-norm clip & 1.0 \\
Policy-ratio clip & $[0.8,1.2]$ \\
KL and entropy loss coefficients & 0 \\
Attention and hidden dropout & 0 \\
CPU optimizer offload & Enabled \\
Activation recomputation & Full \\
\bottomrule
\end{tabular}
\end{table}

\paragraph{Context management.}
When the training context window fills, we clear interaction history
but retain task instructions and workspace files. Evicting individual turns
produces overlapping long sequences with high training cost.
Within each window, we merge turns with matching token prefixes
into one causal sequence to avoid recomputing shared context. Evaluation
follows the settings in Section~\ref{app:exp-setup}.

\paragraph{Loss normalization.}
A rollout's segments share one advantage.
We normalize each rollout's loss by its loss-bearing token count
across segments, then average across rollouts, so segmentation
does not increase its weight. Only model-generated tokens contribute, including
reasoning and tool calls.

\paragraph{Task pool.}
We oversample up to 12 concurrent task groups, collecting
the first eight complete groups with nonconstant rewards per update.
Uniform-reward groups are replaced with new ones.
Tasks with all-1 or all-0 rewards enter cooldown for ten or two updates,
respectively, then rejoin the pool. Sampling prioritizes tasks without
a completed group, then those least recently observed.

\paragraph{Training data and budgets.}
Fixed-dataset RL samples from the 540-task BEHAVE-Train pool for 40 updates,
with evaluation every five updates. Self-improvement starts
from a 60-task subset of BEHAVE-Train, selected to match domain proportions
while favoring diverse, higher-complexity tasks. After three
initial updates, we run \selfimprovementrounds task-acquisition rounds with three
updates each, for 18 updates in total. Evaluation occurs every three updates.
Self-improvement adds 100 tasks to reach a 160-task pool.
The seed-only baseline resumes from the same three-update checkpoint
and trains only on the original 60 tasks for 15 further updates, without task acquisition.

\paragraph{Self-improvement step 1: Analysis and acquisition.}
Every three updates, five parallel Analyst instances run a proposal batch
using the current training policy. They receive disjoint sets of
complete task groups from the latest rollout, keeping each task's attempts
together. Each Analyst targets six proposals within 100 turns, searching
a fixed source-only library of 374 files or code excerpts from 67 repositories.
Source files used by evaluation tasks and exact content duplicates are excluded.
No preconstructed specification-behavior pairs are provided.
Experience is consolidated between batches,
retained across rounds and visible only to Analysts.

\paragraph{Self-improvement step 2: Construction and admission.}
Construction proceeds in parallel, with up to 100 turns per
role attempt. Each task allows up to three construction-and-review rounds,
including the initial draft. Revisions update existing drafts based on
feedback. Training resumes at 20 admissions. If the first batch
falls short, we allow another batch, then proceed with
admitted tasks.

\paragraph{Self-improvement step 3: Model updates.}
Admitted tasks join the cumulative training pool under the same
sampling and cooldown rules. We continue training for three GRPO updates
and reuse the latest training rollouts for the next acquisition round.

\subsection{Computational Resources and Cost}
\label{app:exp-resources}

\paragraph{Resource accounting.}
Table~\ref{tab:exp-resource-cost} summarizes GPU-hours and recorded API costs
for selected experiments.

\begin{table}[htbp]
\centering
\small
\caption{\textbf{Resources and recorded costs of selected experiments.}}
\label{tab:exp-resource-cost}
\setlength{\tabcolsep}{4pt}
\begin{tabular*}{\linewidth}{@{\extracolsep{\fill}}llr@{}}
\toprule
Experiment & Scope & Resource use / API cost \\
\midrule
Fixed-dataset RL & 40 updates & 883.5 H200 GPU-hours \\
Self-improvement & 18 updates with task acquisition & 537.7 H200 GPU-hours \\
Seed-only & 15 updates after shared update 3 & 340.7 H200 GPU-hours \\
Evaluation & 5 API models $\times$ 381 tasks & \$959.12 \\
Pipeline comparison & 5 workflows $\times$ 60 tasks & \$10.54 \\
PPA exploration & 12 MXFP4 + 10 NVFP4 searches & \$236.92 \\
\bottomrule
\end{tabular*}
\end{table}

\subsection{Pipeline Comparison Example}
\label{app:exp-additional}

\paragraph{Comparison example.}
Figure~\ref{fig:functional-modeling} shows 32-bit unsigned division.
ChipMATE preserves a counter and partial results across clock-level
\texttt{eval()} calls, whereas the Behavior reference completes the
calculation within one \texttt{process()} call.\par

\begin{figure}[!t]
\centering
\input{figures/functional_modeling/figure}
\caption{\textbf{Comparison example.}
(a) ChipMATE Python excerpt from a GPT-5.6 Luna rollout.
(b) The task's Behavior reference.}
\label{fig:functional-modeling}
\end{figure}

\subsection{Additional Results for RL Training and Self-Improvement}
\label{app:exp-self-improvement-case}

\paragraph{Learning dynamics.}
Figure~\ref{fig:checkpoint-dynamics} reports performance across all four benchmarks.
Figure~\ref{fig:optimizer-dynamics} shows per-update optimization measurements
for fixed-dataset RL and self-improvement.

\begin{figure}[!tp]
\centering
\includegraphics[width=0.91\linewidth]{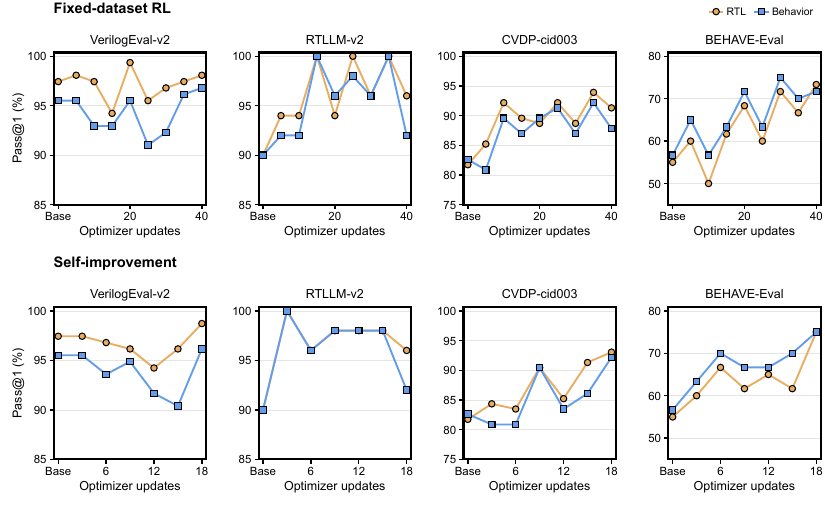}
\caption{\textbf{Checkpoint performance across benchmarks.}
Top: fixed-dataset RL; bottom: self-improvement. RTL and Behavior pass@1
are evaluated on the same benchmark tasks at each checkpoint.}
\label{fig:checkpoint-dynamics}
\vspace{\floatsep}
\includegraphics[width=0.91\linewidth]{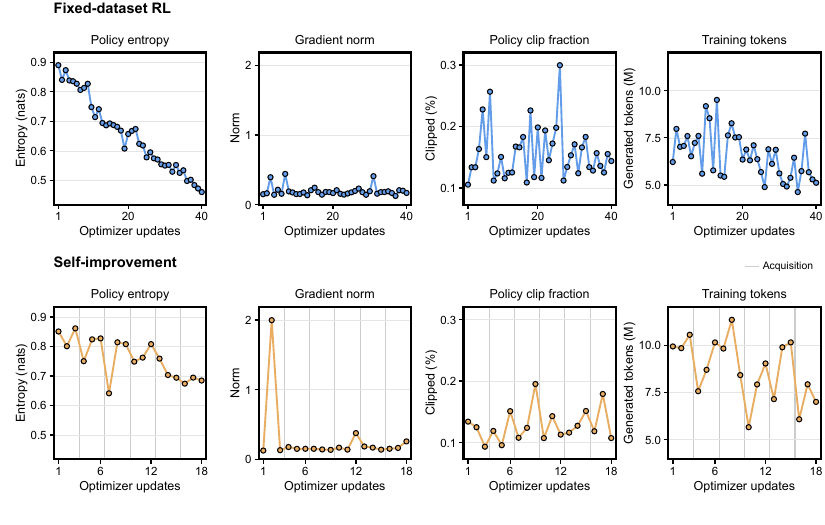}
\caption{\textbf{Training dynamics.}
Top: fixed-dataset RL; bottom: self-improvement.
Token counts include only generated tokens used for the update.
Gray vertical lines mark task acquisitions.}
\label{fig:optimizer-dynamics}
\end{figure}

\FloatBarrier
\paragraph{Case study.} Figure~\ref{fig:self-improvement-case} follows
task acquisition and rollouts on the seed task.
The agent turns rollout difficulties into focused practice through task
construction. Later progress on the original, more complex task shows that
improvement during continued training extends beyond the acquired task.

\begin{figure}[!ht]
\centering
\begingroup
\setlength{\fboxsep}{8pt}
\setlength{\fboxrule}{0.4pt}
\fbox{\begin{minipage}{\dimexpr\linewidth-2\fboxsep-2\fboxrule\relax}
\small
{\centering
\normalsize\bfseries
Self-improvement case study: masked top-$k$ attention\par
}
\smallskip
\noindent Following the workflow in Section~\ref{sec:method-self-improvement},
the agent acquires a related scoring-and-selection task, continues training,
and revisits the original attention task.
\par\medskip
\hrule height 0.4pt
\medskip
\noindent\textbf{1. Observed difficulty: masked top-$k$ softmax attention.}
The seed task combines dot-product scores, key selection, fixed-point
exponential tables and normalization. Before update 3, all eight rollouts
exhausted the 60-turn budget without submitting. A revisit before update 9
still yielded only 0/8 RTL and 1/8 Behavior passes. Citing these traces, the
Analyst proposed isolating integer scoring and masked selection from the
full attention computation.

\medskip
\noindent\textbf{2. Acquisition: integer scoring and masked top-$k$ selection.}
In the third acquisition round, the Analyst drew on ranked-selection code
from entmax. The Builder constructed a task that computes exact integer
dot-product scores and returns the selected mask and maximum selected score.
The specification fixes lower-index tie breaking, empty masks and keep
counts exceeding the eligible set. It retains ranked selection rather than
entmax normalization, omitting exponential tables, division and
weighted-value accumulation.

\medskip
\noindent\textbf{3. Verification and admission.}
The Builder supplied the specification and Behavior reference, and the
independent Implementer reconstructed a second Behavior model. Their
\behavesim{} comparison passed. After reviewing the source and selection
rules, the Reviewer requested one textual revision to remove Behavior-only
reset instructions from the hardware specification. The revised
task was admitted without a reference RTL implementation.

\medskip
\noindent\textbf{4. Training and revisiting the seed.}
The acquired task first entered training at update 10: six rollouts passed
both artifacts and two exhausted the turn budget, yielding six rewards of
1 and two of 0. Before update 17, the original attention task reached
2/8 RTL and 3/8 Behavior passes, including two attempts with both artifacts
correct. The acquired selection task subsequently reached 8/8 joint passes
before update 18; this constant-reward group was excluded from training.
\end{minipage}}
\caption{\textbf{A recorded self-improvement case.}
After acquiring and training on a related task, the agent produces correct
RTL and Behavior solutions on the original seed task.}
\label{fig:self-improvement-case}
\endgroup
\end{figure}


\section{Dataset Construction and Validation}
\label{app:benchmark-review}

This appendix documents the benchmarks used in our experiments. We
describe their composition and sources (\ref{app:benchmark-sources}), BEHAVE
construction (\ref{app:benchmark-authored}), external benchmark adaptation
(\ref{app:benchmark-construction}), human review (\ref{app:benchmark-checks}),
and data splits and release (\ref{app:benchmark-splits}).

\subsection{Dataset Overview and Sources}
\label{app:benchmark-sources}

\paragraph{Dataset composition.}
BEHAVE covers real-world algorithms and system components
across the domains in Table~\ref{tab:dataset-sources}.
Appendix~\ref{app:benchmark-splits} describes its training and evaluation splits.
Table~\ref{tab:dataset-features} summarizes behavioral and
interface features across the two BEHAVE splits.
For evaluation only, we adapt all tasks from
VerilogEval-v2~\citep{pinckney2025verilogeval},
RTLLM-v2~\citep{liu2024openllmrtl}, and the cid003 subset of
CVDP~\citep{pinckney2025cvdp}.
Appendix~\ref{app:benchmark-construction} details these adaptations.

\begin{table}[!htb]
\centering
\small
\setlength{\tabcolsep}{4pt}
\renewcommand{\arraystretch}{1.15}
\caption{\textbf{Overview of the 600 source-derived tasks.}
Representative task families and sources are shown for each domain.
The complete task source list is provided in our code repository.}
\label{tab:dataset-sources}
\begin{tabular*}{\linewidth}{@{\extracolsep{\fill}}l r
  >{\raggedright\arraybackslash}p{0.32\linewidth}
  >{\raggedright\arraybackslash}p{0.34\linewidth}@{}}
\toprule
Domain & Tasks & Representative families & Representative sources \\
\midrule
AI/ML & 190 & Attention, normalization, quantization
  & Transformers, TorchAO \\
Numerics & 160 & Linear algebra, reductions, interpolation
  & CMSIS-DSP, LAPACK \\
Signal processing & 80 & Filtering, transforms, resampling, modulation
  & liquid-dsp, librosa, Opus \\
Cryptography & 80 & Symmetric primitives, field arithmetic, post-quantum kernels
  & PyCryptodome, liboqs, HEXL \\
Control & 30 & Coordinate transforms, controllers, estimation
  & SimpleFOC, FilterPy \\
Compression & 30 & Entropy coding, stream packing, match operations
  & FLAC, Brotli, Zstd \\
Data systems & 30 & Vector search, hashing, bitmap operations
  & Faiss, hnswlib, CRoaring \\
\midrule
\textbf{Total} & \textbf{600} & & \\
\bottomrule
\end{tabular*}
\end{table}

\begin{table}[!htb]
\centering
\small
\setlength{\tabcolsep}{4pt}
\renewcommand{\arraystretch}{1.15}
\caption{\textbf{Behavioral and interface features of BEHAVE.}
Features overlap.
A 1:1 mapping denotes one output transaction per input
transaction and does not imply low implementation complexity.}
\label{tab:dataset-features}
\arrayrulecolor{black}
\begin{tabular*}{\linewidth}{@{\extracolsep{\fill}}
  >{\raggedright\arraybackslash}p{0.34\linewidth}
  >{\raggedright\arraybackslash}p{0.38\linewidth}r r r@{}}
\toprule
Feature & Representative behavior & Train & Eval & Total \\
\midrule
Single-stream 1:1 input/output & Block quantization & 492 & 56 & 548 \\
Single-stream non-1:1 input/output & Filtering, accumulation, interpolation & 47 & 4 & 51 \\
Multiple independent streams & Three-input, two-output crossbar & 1 & 0 & 1 \\
External memory access & Memory-backed vectors, lookup tables & 26 & 2 & 28 \\
External memory writes & In-place updates, writeback & 8 & 1 & 9 \\
\bottomrule
\end{tabular*}
\arrayrulecolor{black}
\end{table}

\paragraph{Task sources.}
We select algorithms and system components from high-level implementations,
supplemented by algorithm descriptions where needed. The task source list in
our code repository identifies the files or documents used for each task, along
with available revisions and license notices.
Appendix~\ref{app:benchmark-authored} describes their adaptation into
hardware-oriented tasks.

\paragraph{Task size.}
Table~\ref{tab:dataset-code-size} summarizes the sizes of the golden Behavior
models and auxiliary RTL. We count nonblank, non-comment lines in task-local
source files, excluding shared libraries.

\begingroup
\newcommand{\behaveTaskSizeRows}{%
VerilogEval-v2 & 156 & 270.5 [211.5, 327.5] & 9 [7, 13] & 11.5 [7, 21] \\
RTLLM-v2 & 50 & 312 [231.75, 427] & 14 [9, 24] & 20.5 [15, 33.75] \\
CVDP-cid003 & 115 & 473 [358.5, 615.5] & 28 [16.5, 47] & 34 [21, 53] \\
\midrule
\textbf{BEHAVE (ours)} & \textbf{600} & \textbf{633 [496.75, 861.5]} & \textbf{40 [24, 69]} & \textbf{97 [56, 183]} \\
\quad Train & 540 & 623.5 [498.75, 869.75] & 39 [24, 67.25] & 97 [56, 177.25] \\
\quad Eval & 60 & 645.5 [481.25, 808] & 43.5 [22, 70.25] & 102.5 [60.75, 209.5] \\
}

\begin{table}[!htbp]
\centering
\small
\setlength{\tabcolsep}{3pt}
\renewcommand{\arraystretch}{1.15}
\caption{\textbf{Specification and code size by task collection.}
Sizes are medians [interquartile ranges].
Spec words count whitespace-separated units, including interface declarations.
LOC: lines of code.}
\label{tab:dataset-code-size}
\arrayrulecolor{black}
\begin{tabular*}{\linewidth}{@{\extracolsep{\fill}}l r r r r@{}}
\toprule
Collection & Tasks & Spec words & Behavior LOC & RTL LOC \\
\midrule
\behaveTaskSizeRows
\bottomrule
\end{tabular*}
\arrayrulecolor{black}
\end{table}
\endgroup

\subsection{Source-Based Task Construction}
\label{app:benchmark-authored}

\paragraph{Human-agent collaboration.}
Researchers defined application domains, guided source selection and
task scope, reviewed and revised tasks. The agent
developed specifications and Behavior models, revising them using
execution results and human feedback.

\paragraph{Construction workflow.}
Builder jointly constructed \(S\) and \(B_1\) from the sources, specifying numerical
and interface rules. In a separate context, Implementer produced
\(B_2\) using only \(S\) and public interface documentation. Reviewer checked
\(S\) against the sources and both models against \(S\), requesting revisions
as needed. Material changes to \(S\) triggered a fresh reconstruction of \(B_2\).
This follows the workflow in Section~\ref{sec:method-self-improvement}.
All tasks then underwent human review before
acceptance. Researchers made corrections directly or requested revisions from
the agent, then approved each task's final version. See
Appendix~\ref{app:benchmark-checks} for review details.

\paragraph{Auxiliary RTL and tools.}
We also asked the agent to generate reference RTL for each task to support
benchmark reuse. This auxiliary artifact is not required for evaluation,
which uses the Behavior reference. Both the agent and researchers could use
\behavesim\ throughout construction to test and compare implementations.
Executable agreement is insufficient because implementations may share a
misunderstanding. Human review therefore checked that the
NL specification clearly stated the intended behavior and that
the Behavior model faithfully implemented it.

\subsection{External Benchmark Adaptation}
\label{app:benchmark-construction}

\paragraph{Framework integration.}
Our goal was to evaluate candidate implementations across all datasets using
the same \behavesim\ procedure. We therefore integrated the external benchmarks
into our framework rather than using their original testbenches as separate
evaluators. For each task, we paired an NL specification with
a Behavior reference and provided auxiliary reference RTL.

\paragraph{Source issues.}
We also identified inconsistencies and omissions in the original materials used for
adaptation. For example, in the RTLLM-v2 \texttt{edge\_detect} task, the description first
requires outputs to return to zero after an edge, then states that they remain
high until another edge. The reference RTL implements single-cycle pulses.
In CVDP task \nolinkurl{cvdp_copilot_sorter_0001}, the specification requires
ascending order but does not define the element order on the packed output
bus. The testbench assumes that the smallest element occupies the
least-significant bits.

\paragraph{Adaptation and validation.}
We resolved ambiguities and inconsistencies with explicit rules,
then aligned the specification, Behavior reference and auxiliary RTL.
We retained native interfaces and specified timing wherever
possible, documenting substantive changes and their rationale. Adapted tasks
underwent construction checks and human review in
Appendices~\ref{app:benchmark-authored} and~\ref{app:benchmark-checks}.

\subsection{Human Review}
\label{app:benchmark-checks}

\paragraph{Review scope.}
Researchers reviewed all 921 tasks across BEHAVE, VerilogEval-v2, RTLLM-v2
and CVDP with agent assistance, using the five criteria in
Table~\ref{tab:dataset-human-review}. They checked task specifications and
interfaces, inspecting source material, implementations and execution records
where needed.

\paragraph{Revision and approval.}
Researchers made corrections directly or provided feedback to the agent for
revision. Revised tasks repeated the applicable reconstruction and executable
checks in Appendix~\ref{app:benchmark-authored}. A researcher approved the final
version before \(B_1\) was frozen as \(B^\star\). Review records link each finding
to its correction, checked version and final disposition.

\paragraph{Review outcomes.}
Table~\ref{tab:dataset-human-review} classifies researcher-requested revisions
by their primary reason, counting each revised task once.
Table~\ref{tab:dataset-review-stages} summarizes recorded revision rounds
across construction stages. Most Behavior comparisons required no revision.
\begingroup
\newcommand{\behaveReviewSpecification}{6}
\newcommand{\behaveReviewNumerical}{2}
\newcommand{\behaveReviewInterface}{14}
\newcommand{\behaveReviewImplementation}{7}
\newcommand{\behaveReviewEvidence}{1}
\newcommand{\behaveReviewTotal}{30}
\newcommand{\behaveReviewStageRows}{%
\(B_2\) comparison & 6 (0.65\%) & 915 & 6 & 0 \\
Agent review & 14 (1.52\%) & 907 & 14 & 0 \\
Human review & 30 (3.26\%) & 891 & 29 & 1 \\
}

\begin{table}[H]
\centering
\small
\setlength{\tabcolsep}{4pt}
\renewcommand{\arraystretch}{1.15}
\caption{\textbf{Human review and revisions.}
Each revised task is counted once by primary category.}
\label{tab:dataset-human-review}
\arrayrulecolor{black}
\begin{tabular*}{\linewidth}{@{\extracolsep{\fill}}
  >{\raggedright\arraybackslash}p{0.25\linewidth}
  >{\raggedright\arraybackslash}p{0.53\linewidth}
  >{\raggedleft\arraybackslash}p{0.15\linewidth}@{}}
\toprule
Review item & Researcher checks & Tasks revised, \(n\) \\
\midrule
Specification consistency & Requirements and examples agree, with valid inputs,
  parameter constraints and boundary behavior defined. & \behaveReviewSpecification \\
Numerical semantics & Widths, signedness, overflow, rounding, special values and
  operation order are explicit where relevant. & \behaveReviewNumerical \\
Interface and state & Packing, transaction counts and ordering, reset,
  backpressure and timing are defined where applicable. & \behaveReviewInterface \\
Implementation fidelity & Behavior models and auxiliary RTL match the specified
  function, state transitions and memory effects. & \behaveReviewImplementation \\
Evidence integrity & Source and execution records match reviewed versions
  and runtime settings, including retests. & \behaveReviewEvidence \\
\midrule
\textbf{Total} & & \textbf{\behaveReviewTotal} \\
\bottomrule
\end{tabular*}
\arrayrulecolor{black}
\end{table}

\begin{table}[H]
\centering
\small
\setlength{\tabcolsep}{4pt}
\renewcommand{\arraystretch}{1.15}
\caption{\textbf{Task revisions by construction stage.}
Counts cover all 921 tasks: 600 BEHAVE tasks and 321 external-benchmark
adaptations. Revised gives the number and
percentage of revised tasks. The remaining columns group tasks by
recorded feedback-driven revision rounds.}
\label{tab:dataset-review-stages}
\arrayrulecolor{black}
\begin{tabular*}{\linewidth}{@{\extracolsep{\fill}}l r r r r@{}}
\toprule
Stage & Revised, \(n\) (\%) & 0 rounds & 1 round & \(\geq 2\) rounds \\
\midrule
\behaveReviewStageRows
\bottomrule
\end{tabular*}
\arrayrulecolor{black}
\end{table}
\endgroup

\subsection{Data Splits and Release}
\label{app:benchmark-splits}

\paragraph{Data splits.}
We reserve 10\% of the tasks in each domain for evaluation, yielding
\trainingdatasetsize training tasks and \evaluationdatasetsize evaluation tasks
with the same domain proportions. Both sets cover a range of difficulties.
Tasks sharing a source implementation or differing only in
parameters remain in the same split, together with their source materials.
External benchmark adaptations are evaluation-only. Evaluation tasks and
references are frozen before training and excluded from task acquisition.

\paragraph{Release.}
The release includes task specifications, golden Behavior models, auxiliary
RTL and the interface information needed to run them. Source lists and review
records document where each task came from, what was changed and which checks
were run.

\paragraph{Licensing.}
Our original contributions will be released under Apache-2.0. Third-party
materials retain their applicable licenses and required notices, documented
in the task source list.

\end{document}